\definecolor{light-gray}{gray}{0.85}
\newcommand{\defeq}{\mathrel{\mathop:}=}
\newcommand{\vect}[1]{\ensuremath{\mathbf{#1}}}
\newcommand{\mat}[1]{\ensuremath{\mathbf{#1}}}
\newcommand{\argmax}{\mathop{\rm argmax}}
\newcommand{\trans}{^{\top}}
\newcommand{\diag}{\mathrm{diag}}
\newcommand{\poly}{\mathrm{poly}}
\newcommand{\norm}[1]{\|{#1} \|}
\newcommand{\E}{\mathbb{E}}
\renewcommand{\P}{\mathbb{P}}
\newcommand{\cO}{\mathcal{O}}
\newcommand{\tlO}{\mathcal{\tilde{O}}}
\newcommand{\N}{\mathbb{N}}
\newcommand{\R}{\mathbb{R}}
\newcommand{\A}{\mat{A}}
\newcommand{\B}{\mat{B}}
\newcommand{\U}{\mat{U}}
\newcommand{\M}{\mat{M}}
\newcommand{\X}{\mat{X}}
\newcommand{\e}{\vect{e}}
\renewcommand{\b}{\vect{b}}
\renewcommand{\u}{\vect{u}}
\renewcommand{\a}{\vect{a}}
\renewcommand{\o}{\vect{o}}
\newcommand{\fS}{\mathfrak{S}}
\newcommand{\cF}{\mathcal{F}}
\newcommand{\cD}{\mathcal{D}}
\newcommand{\cB}{\mathcal{B}}
\newcommand{\cX}{\mathcal{X}}
\newenvironment{proof-sketch}{\noindent{\bf Proof Sketch}
  \hspace*{1em}}{\qed\bigskip\\}
\newenvironment{proof-idea}{\noindent{\bf Proof Idea}
  \hspace*{1em}}{\qed\bigskip\\}
\newenvironment{proof-of-lemma}[1][{}]{\noindent{\bf Proof of Lemma {#1}}
  \hspace*{1em}}{\qed\bigskip\\}
\newenvironment{proof-of-proposition}[1][{}]{\noindent{\bf
    Proof of Proposition {#1}}
  \hspace*{1em}}{\qed\bigskip\\}
\newenvironment{proof-of-theorem}[1][{}]{\noindent{\bf Proof of Theorem {#1}}
  \hspace*{1em}}{\qed\bigskip\\}
\newenvironment{inner-proof}{\noindent{\bf Proof}\hspace{1em}}{
  $\bigtriangledown$\medskip\\}
\newenvironment{proof-attempt}{\noindent{\bf Proof Attempt}
  \hspace*{1em}}{\qed\bigskip\\}
\newcommand{\Fcal}{\mathcal{F}}
\newcommand{\Ocal}{\mathcal{O}}
\newcommand{\omle}{\textsf{OMLE}\xspace}
 \newtheorem{theorem}{Theorem}
 \newtheorem{lemma}[theorem]{Lemma}
 \newtheorem{corollary}[theorem]{Corollary}
 \newtheorem{remark}[theorem]{Remark}
\newtheorem{claim}{Claim}
 \newtheorem{proposition}[theorem]{Proposition}
 \theoremstyle{definition}
 \newtheorem{definition}[theorem]{Definition}
\newtheorem{assumption}{Assumption}
\renewcommand{\epsilon}{\varepsilon} 
\renewcommand{\O}{\mathbb{O}}
\renewcommand{\M}{\mathbb{M}}
\newcommand{\T}{\mathbb{T}}
\newcommand\numberthis{\addtocounter{equation}{1}\tag{\theequation}}
 \colorlet{linkequation}{blue}
\begin{document}

 \title{\fontsize{15pt}{15pt}\textbf{When Is Partially Observable Reinforcement Learning Not Scary?}}

 \author{
 Qinghua Liu\footnotemark[1] 
 \and
 Alan Chung\footnotemark[1] 
 \and
 Csaba Szepesv{\'a}ri\footnotemark[2] 
 \and 
 Chi Jin\footnote{The author emails are \texttt{\{qinghual, alan.chung, chij\}@princeton.edu} and \texttt{szepesva@ualberta.ca}
 }
}

\date{\footnotemark[1] Princeton University, ~ \footnotemark[2] DeepMind and University of Alberta.}

 \maketitle
 
 \begin{abstract}%
Applications of  Reinforcement Learning (RL), in which agents learn to make a sequence of decisions despite lacking complete information about the latent states of the controlled system, that is, they act under partial observability of the states, are ubiquitous.
Partially observable RL can be notoriously difficult---well-known information-theoretic 
results show that learning partially observable Markov decision processes (POMDPs) requires an exponential number of samples in the worst case. 
Yet, this does not rule out the existence of large subclasses of POMDPs over which learning is tractable.

In this paper we identify such a subclass, which we call \emph{weakly revealing} POMDPs. This family rules out the pathological instances of POMDPs where observations are uninformative to a degree that makes learning hard.
We prove that for weakly revealing POMDPs,
a simple algorithm combining \emph{optimism} and \emph{Maximum Likelihood Estimation} (MLE)
is sufficient to guarantee polynomial sample complexity.
To the best of our knowledge, this is the first provably sample-efficient result for learning from interactions
in \emph{overcomplete POMDPs}, where the number of latent states can be larger than the number of observations.
 \end{abstract}


\section{Introduction}

A wide range of modern artificial intelligence challenges can be cast as Reinforcement Learning (RL) problems under \emph{partial observability}, in which agents learn to make a sequence of decisions despite lacking  complete information about the underlying state of system. For example, in robotics the agent has to cope with noisy sensors, occlusions, and unknown dynamics \citep{akkaya2019solving}, while in imperfect information games the player makes only local observations \citep{vinyals2019grandmaster,brown2019superhuman}. Further applications of partially observable RL include autonomous driving \citep{levinson2011towards}, resource allocation \citep{bower2005resource}, medical diagnostic systems \citep{hauskrecht2000planning}, recommendation \citep{li2010contextual}, business management \citep{de2009inventory}, etc. As such, learning and acting under partial observability has been an important topic in operation research, control, and machine learning.

Because of the non-Markovian nature of the observations,
learning and planning in partially observable environments requires an agent to maintain \emph{memory} and possibly reason about \emph{beliefs} over the states,
all while exploring to collect information about the environment. 
As such, partial observability can significantly complicate learning and planning under uncertainty.
While practical RL systems have succeeded in a set of partially observable problems including Poker \citep{brown2019superhuman}, Starcraft \citep{vinyals2019grandmaster} and certain robotic tasks \citep{cassandra1996acting}, the theoretical understanding of learning to act in partially observable systems remains limited.  Most existing results in RL theory focus on  fully observable systems or, more generally, learning when the features of states are accessible and can faithfully represent value functions.
As such, algorithms developed for this case need not to reason about what the latent state may be and in particular do not need to resort to using the observation histories.
Thus, the resulting algorithms can be fundamentally limited and may not work beyond the narrow settings that they are designed for.
Owning to the ubiquity of partially observable problems, addressing the theoretical challenges of partial observability is vital to closing the gap between the typical applications and the scope of available theoretical works.

This paper considers Partially Observable Markov Decision Process (POMDPs)---the standard model in reinforcement learning that captures the partial-information structure. Despite the existence of many efficient algorithms for learning MDPs in the fully observable settings, learning POMDPs is notoriously difficult in theory---well-known complexity-theoretic results show that learning and planning in partially observable environments is indeed statistically 
and computationally \emph{intractable} in general~\citep{papadimitriou1987complexity,mundhenk2000complexity,vlassis2012computational,mossel2005learning}, even if in the favorable setting with a small number of states, actions, and observations. However, these complexity barriers are of a worst case nature, and they do not preclude efficient algorithms for learning rich sub-classes of POMDPs which could potentially cover interesting practical applications. This leaves an important question: 
\begin{center}
\textbf{Can we identify a rich sub-class of POMDPs that empowers sample-efficient RL?}
\end{center}

Prior efforts on sample-efficient learning of POMDPs focus either on special cases of POMDPs such as latent MDPs \citep{kwon2021rl,kwon2021reinforcement}, or on general POMDPs but with restrictive assumptions. In particular, \citet{azizzadenesheli2016reinforcement,guo2016pac} do not address strategic exploration---a core challenge in RL; \citet{jin2020sample} considers the exploration setting but only addresses undercomplete POMDPs, where the number of states must be no larger than the number of observations. 

This paper answers the highlighted question above affirmatively. We identify a rich family of tractable POMDPs---\emph{weakly revealing} POMDPs (see Section \ref{sec:weakly}), which rule out the pathological instances whose observations contain no information to distinguish latent states. \emph{Weakly revealing} POMDPs are very rich---it contains a majority of existing POMDPs classes which are known to be tractable \citep{azizzadenesheli2016reinforcement,guo2016pac,jin2020sample}; it also handles overcomplete POMDPs where the number of latent states can be larger than the number of observations. 

We further propose a new simple algorithm for learning POMDPs---\emph{Optimistic Maximum Likelihood Estimation} (\omle). As its name suggests, \omle (read, Oh-Em-El-Eeh) 
combines optimism with classical maximum likelihood estimation \citep{wilks1938large}. In contrast to the algorithm of \citet{jin2020sample} which heavily exploit the undercomplete structure, our algorithm is generic, does not explicitly rely on any special structure, and can be used for any POMDPs. We prove that \omle learns a near-optimal policy for any weakly revealing POMDP within a polynomial number of samples (Theorem \ref{thm:under} and \ref{thm:over}). To the best of our knowledge, this is the first provably sample-efficient result for learning overcomplete POMDPs in settings where exploration is necessary. Our result also reasserts that optimism is a powerful tool to address exploration needs, regardless of whether states are observable.
We complement our positive results with lower bounds showing that certain polynomial dependency on the problem parameters in our sample complexity is necessary.

Finally, we remark that 
our algorithm, as well as all existing algorithms for learning large classes of POMDPs, remains \emph{computationally} inefficient. This is due to the inherent computational hardness of learning POMDPs: planning (i.e., computing the optimal policy \emph{given} model parameters) alone is already PSPACE-complete \citep{papadimitriou1987complexity}, not mentioning the additional computation required for model estimation and exploration. We leave the challenge of computationally efficient learning for future work.

\subsection{Overview of techniques}
The major technical challenge of this paper is to establish the sample efficiency guarantee of \omle for learning any weakly revealing POMDPs despite the simplicity of the algorithm. Our results rely on the following three key ideas. To the best of our knowledge, the second and the third ideas are novel in the context of learning POMDPs, while the first technique was used by \citet{jin2020sample}.

\begin{itemize}
\item \textbf{Observable Operator Model (OOM) \citep{jaeger2000observable}}: 
OOM provides an alternative parameterization of the POMDP model, by representing the probability of a trajectory over observations and actions as the product of a series of linear operators, which is known as \emph{observable operators}. For more details, see Section \ref{subsec:oom}. Such linear structure facilitates us to use existing tools from matrix analysis to analyze POMDPs.
Although \omle algorithm does not explicitly utilize the OOM representation, the observable operators serve as important intermediate quantities in our analysis. They help us to bound the suboptimality of the learned policy as a function of the size of our confidence set.

\item \textbf{MLE-based Confidence Set:} In contrast to the current mainstream approaches of learning POMDPs which use \emph{spectral methods} to directly estimate either the model parameters or the observable operators \citep[see, e.g.,][]{ azizzadenesheli2016reinforcement,guo2016pac,jin2020sample}, we use the \emph{maximum likelihood estimation} (MLE) approach, which provides implicit guarantees on learning observable operators. We achieve this by adapting the classic techniques for analyzing MLE \citep[e.g.,][]{geer2000empirical}. An appealing feature of the MLE approach is its generality and that the confidence set construction does not need to rely on the specific structure of the problem. 
The strength of this unified approach is that it allows \omle to be used with almost no changes in both undercomplete and overcomplete POMDPs. In constrast, spectral-based algorithms require more careful designs that are adjusted to specific problems (such as undercomplete vs. overcomplete settings). These adjustments, if not done optimally, easily lead to requirement of unnecessary, artificial assumptions. 

\item \textbf{$\ell_1$-norm eluder Dimension:} To prove the sample efficiency of optimistic algorithms, one needs to argue that, after a sufficient number of iterations, the size of the maintained confidence set is small enough to guarantee near-optimality of the learned policy. 
In the tabular setting,
this is typically achieved by resorting to the pigeon-hole principle \citep[e.g.,][]{azar2017minimax,jin2018q}, 
while in the linear setting, one typically uses the so-called elliptical potential lemma \citep[e.g.,][]{lattimore2020bandit}. To generalize these argument, 
\citet{russo2013eluder} introduced the notion of eluder dimension for sets of real-valued functions with a common domain.
The use of MLE-based confidence set requires us to develop a new result which is stronger than the standard elliptical potential arguments: While we have linear structures, the $\ell_2$-norms typically used are not suitable for our purposes. As such, the standard eluder dimension (which is tied to the $\ell_2$-norm) is also unsuitable. To address these challenges, we introduce a variation of the eluder dimension, which is called $\ell_1$-norm eluder dimension, and which might be of independent interest. 
\end{itemize}


\subsection{Related works} \label{sec:related_work}

Reinforcement learning has been extensively studied in the fully observable setting \citep[see, e.g.,][and the references therein]{azar2017minimax,dann2017unifying,jin2018q,jin2020provably, zanette2020learning,jiang2017contextual}. For the purpose of this paper, we focus our attention on reviewing the theoretical results for partially observable reinforcement learning.

\paragraph{Hardness of learning POMDPs.} There is a line of well-known  computational  hardness results for planning and learning in POMDPs. 
Firstly, even when the parameters of a POMDP are known, computing the optimal policy (i.e., planning) is  PSPACE-complete \citep{papadimitriou1987complexity}. Moreover, even if one only wants to find the optimal memoryless policy, the problem is still NP-hard \citep{vlassis2012computational}. In addition, 
the model estimation of POMDPs is also computationally hard---\citet{mossel2005learning} proved an average-case computational result showing that estimating the model parameters for a subclass of Hidden Markov Models (HMMs) is at least as hard as learning parity with noise\footnote{Learning parity with noise is conjectured to be NP-hard in the theory of computational complexity.}.
Since HMMs can be viewed as special cases of POMDPs without action control, their result directly implies estimating the model parameters of POMDPs is hard.  

Learning POMDPs is also known to be statistically hard: \citet{krishnamurthy2016pac} proved that finding a near-optimal policy of a POMDP in the worst case requires a number of samples that is exponential in the episode length. The hard instances are those pathological POMDPs where the observations contain no useful information for identifying the system dynamics.

\paragraph{Positive results for learning POMDPs.} 
Despite the worst-case hardness results, there is a long history of  learning sub-classes of POMDPs. 
\citet{even2005reinforcement} studied POMDPs without resets, where the proposed algorithm has sample complexity scaling exponentially with a certain horizon time. \citet{poupart2008model,ross2007bayes} developed  Bayesian methods to learn POMDPs, while \citet{azizzadenesheli2018policy} considered learning the optimal memoryless policies  with policy gradient methods. PAC or regret bounds are not known for these approaches. 

In the category of polynomial sample results, 
a sequence of recent works \citep[e.g.,][]{guo2016pac,azizzadenesheli2016reinforcement,jin2020sample,xiong2021sublinear} applied spectral methods \citep{hsu2012spectral,anandkumar2014tensor} to learning POMDPs and obtained polynomial sample complexity results. Among them, \citet{guo2016pac,azizzadenesheli2016reinforcement,xiong2021sublinear} 
made strong reachability assumptions and did not address the exploration problem.
Furthermore, these results assume that both the transition and emission matrices are full rank, which are stronger than the weakly revealing conditions considered in this paper. 
\citet{jafarnia2021online} proposed a posterior sampling-based algorithm, and provided sample-efficient guarantees \emph{assuming} either sufficient separability between different models, or the success of belief state and transition kernel estimation. These assumptions significantly reduce the difficulty of estimating model dynamics---a core challenge in learning POMDPs, and thus reduce the generality of the results.

The most related work to us is \citet{jin2020sample}, which addressed the exploration problem in learning undercomplete POMDPs, where the number of latent states must be no greater than the number of observations. Their algorithm is specially designed to exploit the undercomplete structure of POMDPs. It remains unclear if their techniques can be extended to the overcomplete setting. In contrast, this paper presents a new generic algorithm based on MLE, which enjoys provable sample-efficiency in the exploration settings of both undercomplete and overcomplete POMDPs.

Very recently, \citet{golowich2022planning} developed the first quasi-polynomial time planning algorithm for a subclass of POMDPs. Their result holds under the $\gamma$-observability condition, which is very similar to the weakly-revealing condition presented in this paper.\footnote{$\gamma$-observability condition requires that $\min_{h}\|\O_h(b-b')\|_1\ge \gamma \|b-b'\|_1$ for any $b,b\in\Delta_{S}$, while $\alpha$-weakly-revealing condition (Assumption \ref{asp:under}) assumes $ \min_h\|\O_h x\|_2\ge \alpha \|x\|_2 $ for any $x \in \R^O$. 
In Lemma \ref{lem:equiv}, we prove that  $\frac{\alpha}{\sqrt{S}} \le \gamma \le 4\sqrt{O}\alpha$. As a result, these two conditions are ``equivalent'' up to  a factor of at most  $\cO(\sqrt{O})$.} Compared to this paper, the result in \citet{golowich2022planning}  purely focuses on the computational efficiency. It is restricted to the undercomplete setting, and addresses only planning but not estimation or exploration, all of which are important components for learning POMDPs.

\paragraph{Latent MDPs.}
Latent MDPs \citep{kwon2021rl}---where an MDP is randomly drawn from a set of $M$ possible MDPs at the beginning of the interaction---can be considered as a special class of overcomplete POMDPs. \citet{kwon2021rl} proved that learning latent MDPs remains statistically hard in the worst case. They also provided  several positive results for learning latent MDPs with additional assumptions, such as revealing the latent contexts at the end of each episode.
\citet{kwon2021reinforcement} provided positive results for latent MDPs without these additional assumptions, but the results only apply to the setting of $M=2$ with a shared transition. Latent MDPs and weakly revealing POMDPs do not contain each other.

\paragraph{Decodable POMDPs.} 
Block MDPs \citep{krishnamurthy2016pac} are POMDPs whose current latent state can be uniquely determined by the current observation. 
By simple algebra, one can verify that block MPDs are special cases of single-step weakly revealing POMDPs that satisfy Assumption \ref{asp:under} with $\alpha\ge 1/\sqrt{O}$.
The recently proposed $m$-step decodable POMDPs \citep{efroni2022provable} are  generalizations of block MDPs, in which the latent state can be uniquely decoded from the most recent history (of observations and actions) of a short length $m$. This multistep decodability assumption can also be viewed as a special case of general ``weakly revealing''-type of conditions. However, \citet{efroni2022provable} assume that $m$-step history decodes (weakly reveals) the current state, while this paper assumes that $m$-step future weakly reveals the current state.
Finally, we remark that most existing results for block MDPs or $m$-step decodable POMDPs \citep[see, e.g.,][]{krishnamurthy2016pac,jiang2017contextual,du2019provably, misra2020kinematic,efroni2022provable} further involve decoder class or value function approximation, which is beyond the scope of this paper.


\paragraph{RL with function approximation.} There is a recent line of research \citep[e.g.,][]{jiang2017contextual,ayoub2020model,du2021bilinear,jin2021bellman,foster2021statistical} on reinforcement learning with general function approximation.
This line of results proposed  certain complexity measure for sequential decision making problems, and developed generic algorithms which have sample-efficient guarantees as long as the complexity measure of RL problems is small. These frameworks are known to cover a special subclass of POMDPs---reactive POMDPs \citep{jiang2017contextual}, where the optimal value only depends on the current-step observation-action pair. 
It remains highly unclear whether weakly revealing POMDPs identified in this paper can be covered by those general frameworks. We remark that investigating this problem requires us to compute those complexity measures for POMDPs, which is highly non-trivial and may require techniques developed in this paper.

\paragraph{MLE approaches in bandit and RL.}
The idea of using the MLE principle in the confidence set construction can be traced back to \citet{lai1987adaptive}, which considers   the problem of Bernoulli bandits. MLE-based approaches are also used in the framework of reward-biased MLE \citep{kumar1982new,mete2021reward}, which balances the reward with the likelihood value, for learning tabular MDPs. Recently, the MLE-based approaches are also used in the setting of representation learning in RL \citep{agarwal2020flambe,uehara2021representation}.


\newcommand{\bigO}{\mathcal{O}}
\newcommand{\bigOmega}{\mathcal{O}}
\newcommand{\cT}{\mathcal{T}}
\newcommand{\pomdp}{\text{POMDP}}
\renewcommand{\fS}{\mathscr{S}}
\newcommand{\fA}{\mathscr{A}}
\newcommand{\fO}{\mathscr{O}}

\section{Preliminaries}
For a positive integer $n$, we let $[n] = \{1,\dots,n\}$.
We consider episodic, tabular, partially observable Markov decision processes (POMDP).
These processes generalize the standard Markov decision processes 
by making agents observe a ``noisy function'' of the state of a controlled Markov process. 
We consider time inhomogeneous, fixed horizon version of POMDPs. 
Formally, such a POMDP is specified by a tuple
$(\fS,\fA,\fO;H,\mu_1,\T,\O;r)$. Here $\fS,\fA$ and $\fO$ denote the space of state, action and  observation respectively, with respective cardinalities $|\fS|=S$, $|\fA|=A$ and $|\fO|=O$; $H$ denotes the length of each episode; $\mu_1\in\Delta_S$ denotes the  distribution of the initial state where $\Delta_S$ is the $(S-1)$-dimensional 
probability simplex which we identify with the set of distributions over the states $\fS$; 
$\T=\{\T_{h,a}\}_{(h,a)\in[H-1]\times\fA}$ denotes the collection of transition matrices where $\T_{h,a}$ is the $S\times S$ \emph{transition matrix} of action $a$ at step $h$ such that $\T_{h,a}(\cdot\mid s)$ gives the distribution of the next state if the agent takes action $a$ at state $s$ and step $h$; 
$\O=\{\O_h\}_{h\in[H]}$ denotes the collection of  \emph{emission matrices} of size $O\times S$ 
so that $\O_h(\cdot\mid s)$ gives the distribution over observations at step $h$ conditioned on the current hidden state being $s$; and and $r=\{r_h\}_{h\in[H]}$ are the known reward functions from $\fO$ to $[0,1]$  such that the agent will receive reward $r_h(o)$ when she observes $o\in\fO$ at step $h$. \footnote{This is equivalent to assuming that reward information is contained in the observation. We consider this setup to avoid the leakage of information about the latent states through rewards beyond observations. We remark that all  results in this paper immediately extend to the more general setting where reward $r(\tau_H)$ can be a function of the entire observation-action trajectory $\tau_H$, and is only received at the end of each episode.}

In a POMDP, the states are generally hidden from the agent: in every step a controlling agent can only see the observations and her  own actions. At the beginning of each episode, the environment samples an initial state $s_1$ from $\mu_1$. 
At each step $h\in[H]$, the agent first observes $o_h$ that is sampled from $\O_h(\cdot\mid s_h)$, the observation distribution of hidden state $s_h$ at step $h$. Then the agent receives reward $r_h(o_h)$ that is computed from $o_h$, and takes action $a_h$.
After this, the environment transitions to $s_{h+1}$, whose distribution follows $\T_{h,a_h}(\cdot\mid s_h)$. The current episode terminates immediately after $a_{H}$ is taken. We use $\tau_h = (o_1,a_1,\ldots,o_h,a_h)$ to denote a trajectory from step $1$ to step $h$.

A policy $\pi=\{\pi_h:~\cT_h\rightarrow \Delta_A\}_{h=1}^H$ is a collection of $H$ functions where $\cT_h=(\fO\times\fA)^{h-1}\times\fO$ denotes the set of all length-$h$ histories.  
Given a policy $\pi$, we use $V^\pi$ to denote its value, which is defined as the expected total reward received under policy $\pi$:  
\begin{equation}
    V^\pi := \E_\pi \left[ \sum_{h=1}^H r_h(o_h)\right],
\end{equation}
where the expectation is with respect to the randomness of the transitions, observations and the policy. 
Since the state, action, observation spaces and the horizon are all finite,
there always exists an optimal policy $\pi^\star$ that achieves the optimal value $V^\star:=\sup_\pi V^\pi$. Different from MDPs, the optimal policies in POMDPs are in general history-dependent instead of only depending on the current observation,
which makes not only learning, but already computing a near-optimal policy in known POMDPs more challenging than doing the same in MDPs.

\paragraph{Learning objective.} Our goal is to learn an $\epsilon$-optimal policy $\pi$ in the sense that $V^\pi \ge V^\star-\epsilon$, using a number of samples polynomial in all relevant parameters. We also consider the problem of learning with low regret. Suppose the agent interacts with POMDPs for $K$ episodes, and plays a policy $\pi_k$ in the $k^\text{th}$ iteration for any $k \in [K]$. The total (expected) regret is then defined as:
\begin{equation*}
 \text{Regret}(K) = \sum_{k=1}^K [V^\star - V^{\pi_k}] .
\end{equation*}
The question then is whether a learner can keep the regret small.

\paragraph{Notation.} 
We use bold upper-case letters  $\B$ to denote matrices and bold lower-case letters $\b$ to denote vectors.  
Given a matrix $\B\in\R^{m\times n}$, we use $\B_{ij}$ to denote its $(i,j)^{\rm th}$ entry, $\sigma_{k}(\B)$ to denote its $k^\text{th}$ largest singular value, and $\B^\dagger$ to denote its Moore-Penrose inverse.
For a vector $\b\in\R^m$, we use $\diag(\b)$ to denote a diagonal  matrix with $[\diag(\b)]_{ii}=\b_i$.


\section{Weakly Revealing POMDPs}
\label{sec:weakly}

The purpose of this section is to 
define the class of weakly revealing POMDPs. We first motivate our definition by revisiting the pathological instances which prevent sample-efficient learning of POMDPs  in general. We then introduce the formal definition of weakly revealing POMDPs in the \emph{undercomplete}
 setting when
$S \le O$ and finally extend it to the \emph{overcomplete} setting when $S > O$.
All the proofs for this section are deferred to  Appendix \ref{app:weakly}.

\subsection{Hard instances of POMDPs}
Here we revisit the hardness results and the pathological instances constructed by \citet{krishnamurthy2016pac} and \citet{jin2020sample}.
As it turns out, learning POMDPs is statistically hard in the worst-case due to the existence of POMDPs with uninformative observations. 

\begin{proposition}[\citet{krishnamurthy2016pac,jin2020sample}]
There exists a class of $2$-states  $H$-horizon POMDPs whose observations reveal no information about the underlying states up to the end, such that any algorithm requires at least $A^{\Omega(H)}$ samples to learn an $\cO(1)$-optimal policy with a probability of $1/2$ or higher.
\end{proposition}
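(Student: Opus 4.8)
The plan is to exhibit an explicit family of $2$-state POMDPs in which the observations are statistically independent of the latent state, so that the learning problem degenerates into searching for a hidden ``combination'' among $A^{\Omega(H)}$ equally likely candidates, and then to invoke a standard needle-in-a-haystack counting argument.

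First I would make the emission structure completely uninformative. Take $\fS=\{g,d\}$, a ``good'' and a ``dead'' state, start deterministically in $g$ (so $\mu_1=\delta_g$), and set $\O_h(\cdot\mid g)=\O_h(\cdot\mid d)$ for every $h\le H-1$, both equal to a fixed distribution over $\fO$. Index the family by a hidden password $\a^\star=(a_1^\star,\dots,a_{H-1}^\star)\in\fA^{H-1}$, and encode it in the transitions: $\T_{h,a^\star_h}(g\mid g)=1$, while $\T_{h,a}(d\mid g)=1$ for every $a\ne a^\star_h$, with $d$ absorbing. Let the reward be zero except at the last step, where the single informative emission $\O_H$ reveals the state and yields reward $1$ in $g$ and $0$ in $d$; thus $V^\star=1$, attained only by playing the unique sequence $\a^\star$ (any deviation sends the agent into the absorbing $d$). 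This realizes the phrase ``reveal no information up to the end'': the whole prefix $\tau_{H-1}$ carries no information about the latent state, and only the terminal reward is informative. Note the agent knows the common reward and emission structure but not the transitions, which is exactly where $\a^\star$ is hidden.

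The key reduction is to argue that the observations are useless, so the agent is effectively restricted to open-loop search. Since $\O_h(\cdot\mid g)=\O_h(\cdot\mid d)$ for $h\le H-1$, the law of the prefix $(o_1,\dots,o_{H-1})$ given the played actions is identical across all members of the family and independent of $\a^\star$; formally I would treat the observation noise as an exogenous random seed and show, via the data-processing inequality, that conditioning a history-dependent policy on it cannot increase the probability of playing any particular action sequence. Hence, for the lower bound, one may assume the algorithm interacts with a bandit whose $A^{H-1}$ arms are the deterministic action sequences, each returning reward $1$ iff it equals $\a^\star$ and $0$ otherwise.

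Finally I would run the counting argument under a uniform prior on $\a^\star\in\fA^{H-1}$. An algorithm that has played $k$ episodes has tested at most $k$ distinct sequences, and before it first observes a reward of $1$ its entire transcript is independent of which untested sequence is the true $\a^\star$, so the posterior is uniform over the remaining $A^{H-1}-k$ candidates. The probability of ever receiving reward $1$ within $k$ episodes is therefore at most $k/A^{H-1}$, and conditioned on never doing so any output policy plays $\a^\star$ with probability at most $1/(A^{H-1}-k)$, so its value is bounded away from $V^\star=1$ by a constant. Taking $k\le\tfrac12 A^{H-1}=A^{\Omega(H)}$ and averaging over the prior yields a failure probability of at least $1/2$ for every randomized algorithm, which by Yao's principle transfers to a fixed worst-case instance. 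The main obstacle is the reduction step: making rigorous that a history-dependent, randomized policy gains nothing by branching on the noisy observations, which is precisely where the independence of the emissions from the latent state must be exploited carefully rather than assumed.
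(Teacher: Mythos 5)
Your construction is the same combinatorial lock the paper uses (identical emissions in the first $H-1$ steps, a hidden password encoded in the transitions, reward only upon reaching the good terminal state), and your counting argument is the standard one the paper gestures at when it says the agent ``has no option but to try out all possible action sequences.'' The proposal is correct and in fact fills in the reduction-to-open-loop-search and uniform-prior averaging steps that the paper leaves informal, so no substantive difference in approach.
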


The hard instance is a combinatorial lock with unobserved states.
Consider POMDPs with states $s_\text{h,good}$ and $s_\text{h,bad}$, $h=1,\dots,H$. 
The emission probability $\O_h(\cdot|s_\text{h,good})$ is precisely the same as $\O_h(\cdot|s_\text{h,bad})$ for the all steps except the last one, so that the agent has absolutely no information about the latent state during the first $H-1$ steps. 
Let the initial state be $s_{\text{1,good}}$.
Consider a special action sequence $\{a^\star_h\}_{h=1}^{H-1}$, and construct the transition dynamics such that at each step $1\le h\le H-1$, the next state is $s_\text{h+1,good}$ 
only if the previous state is  $s_\text{h,good}$ and the action taken is $a^\star_h$. In all other cases, the environment  transitions to $s_\text{h+1,bad}$. Finally, the agent will receive a reward of one only if she is in $s_\text{H,good}$ at step $H$; the agent receives zero reward otherwise.

It is not hard to see the optimal policy will  take action $a^\star_h$ at step $h$, which will give a total reward of $1$. However, since the agent effectively has no observation in the first $H-1$ steps, she has no option but to try out all possible action sequences, which requires $A^{\Omega(H)}$ episodes to find the correct action sequence with constant probability.

\subsection{Weakly revealing condition in the undercomplete setting} 

Based on the hard instances constructed above, we conclude that if the observations do not contain information to distinguish  two different latent states, then learning these POMDPs is statistically hard. For POMDPs with more than two states, the hardness result above can be easily extended to the case where there exist two mixtures of latent states with disjoint support such that the observations do not contain any information to distinguish these two mixtures. Concretely, by a mild abuse of language, a mixture of states is identified by a probability vector $\nu \in \Delta_S$; $\nu_1$ and $\nu_2$ are said to have disjoint support if $\text{supp}(\nu_1) \cap \text{supp}(\nu_2)=\emptyset$.

A direct approach to rule out the above-described pathological instances is 
to just assume that any two
latent state
mixtures  $\nu_1, \nu_2$ that have disjoint support induce distinct distributions over observations, that is, $\O_h\nu_1\neq\O_h\nu_2$ for all $h\in[H]$ 
where $\O_h$ is the $O\times S$ emission matrix at step $h$.
A linear algebraic argument then shows 
that this condition is equivalent to that the rank of the emission matrix $\O_h$ is $S$. 

\begin{proposition}\label{prop:property_singlestep_weak_revealing}
The emission matrix $\O_h$ is rank $S$ if and only if the induced distributions over observations are distinct for any two mixtures of latent states with disjoint support.
\end{proposition}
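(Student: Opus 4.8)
The plan is to prove the two directions of the equivalence separately, reading the statement as: the $O\times S$ emission matrix $\O_h$ has rank $S$ (i.e.\ full column rank, which is the relevant notion since we are in the undercomplete regime $S\le O$) if and only if $\O_h\nu_1\neq\O_h\nu_2$ for every pair $\nu_1,\nu_2\in\Delta_S$ with disjoint support. Note that the observation distribution induced by a mixture $\nu$ is exactly $\O_h\nu$, so ``distinct induced distributions'' means precisely $\O_h\nu_1\neq\O_h\nu_2$.

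First I would dispatch the easy ($\Rightarrow$) direction. If $\rank(\O_h)=S$, its columns are linearly independent, so $x\mapsto\O_h x$ is injective on $\R^S$ and its kernel is $\{0\}$. For disjoint-support mixtures $\nu_1,\nu_2\in\Delta_S$, disjointness of nonempty supports forces $\nu_1\neq\nu_2$ (both are probability vectors, hence nonzero), and injectivity then gives $\O_h\nu_1\neq\O_h\nu_2$. This is immediate.

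The substance lies in the ($\Leftarrow$) direction, which I would prove by contraposition: assuming $\rank(\O_h)<S$, I construct an explicit colliding pair. Since the kernel is now nontrivial, pick a nonzero $x\in\R^S$ with $\O_h x=0$ and split it into its entrywise positive and negative parts $x=x^+-x^-$, where $x^+,x^-\ge 0$ have disjoint supports by construction and $\O_h x^+=\O_h x^-$. The remaining task is to renormalize $x^+$ and $x^-$ into probability vectors without breaking this equality, which requires $\norm{x^+}_1=\norm{x^-}_1$, equivalently $\sum_s x_s=0$.

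The key step---and the only nontrivial point---is to notice that this zero-sum property is automatic from the column-stochasticity of $\O_h$. Each column $\O_h(\cdot\mid s)$ is a distribution over observations, so $\mathbf{1}\trans\O_h=\mathbf{1}\trans$ (all-ones row vectors of the appropriate dimensions); left-multiplying $\O_h x=0$ by $\mathbf{1}\trans$ yields $\sum_s x_s=\mathbf{1}\trans x=0$. Hence $\norm{x^+}_1=\norm{x^-}_1=:c$, and since $x\neq 0$ with $\sum_s x_s=0$ must have both a positive and a negative entry, we get $c>0$. Setting $\nu_1=x^+/c$ and $\nu_2=x^-/c$ produces two members of $\Delta_S$ with disjoint support satisfying $\O_h(\nu_1-\nu_2)=\O_h x/c=0$, i.e.\ $\O_h\nu_1=\O_h\nu_2$, exactly the collision we wanted. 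I expect recognizing that column-stochasticity confines the kernel of $\O_h$ to the zero-sum hyperplane to be the one insight needed; everything else is routine linear algebra.
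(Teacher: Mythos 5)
Your proof is correct and follows essentially the same route as the paper: the forward direction via injectivity, and the reverse direction by taking a nonzero kernel vector, splitting it into positive and negative parts, and using column-stochasticity of $\O_h$ to equate their $\ell_1$-norms before normalizing (the paper phrases this as $\|z^+\|_1=\|\O_h z^+\|_1=\|\O_h z^-\|_1=\|z^-\|_1$ rather than via $\mathbf{1}\trans\O_h=\mathbf{1}\trans$, but it is the same observation). Your explicit check that the common norm is strictly positive is a small point the paper leaves implicit.
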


The weakly revealing condition is simply a robust version of the condition that the rank of the emission matrices is $S$---it assumes the $S^\text{th}$ singular value of emission matrix $\O_h$ is lower bounded. This condition in the undercomplete setting was first identified by \cite{jin2020sample} as a technical condition to ensure the sample efficiency of their algorithms.
\begin{assumption}[$\alpha$-weakly revealing condition]\label{asp:under}
There exists $\alpha >0$, such that $\min_h \sigma_{S}(\O_h)\ge\alpha$.
\end{assumption}
This  condition ensures that the observations contain enough information to distinguish any two mixtures of states given a sufficiently large number of samples.

We call Assumption \ref{asp:under} the ``\emph{weakly}'' revealing condition to distinguish it from the setup known as rich observation or block MDP in the literature \citep{jiang2017contextual,du2019provably,misra2020kinematic}. The latter setup considers the problem where the latent state can be directly recovered from any single observation and the stage $h$ in the episode. That is, the latent state is completely revealed by the observation. Therefore, technically speaking, block MDPs are fully observable, which is in a way  ``diagonally opposite'' to the setting we consider.

Finally, we note that since $\O_h$ is a matrix of size $O \times S$, Assumption \ref{asp:under} implicitly requires $S \le O$. That is, it only holds in the undercomplete setting.

\subsection{Weakly revealing condition in the overcomplete setting}
In the overcomplete setting, we have $S > O$. It is information-theoretically impossible to distinguish any two mixtures of latent states by inspecting observations only in a single step. 
The key observation
here is that we should instead inspect the distribution of observations for \emph{$m$ consecutive steps}. We note that the number of all possible observable sequence $(o_1, a_1,\ldots, a_{m-1}, o_m)$ of length $m$ is $O^{m}A^{m-1}$, which is larger than $S$ when $m \ge \Omega(\log S)$.

To state our assumption, we 
define the $m$-step emission-action matrices 
\[
\{\M_h\in\R^{(A^{m-1}O^m)\times S}\}_{h\in[H-m+1]}
\]
as follows: 
For an observation sequence $\o$ of length $m$, initial state $s$ and action sequence $\a$ of length $m-1$,
we let $[\M_h]_{(\a,\o),s}$ be the probability of receiving  $\o$ provided that the action sequence $\a$ is used from state $s$ and step $h$: 
 \begin{equation}\label{defn:M}
     [\M_h]_{(\a,\o),s}= \P(o_{h:h+m-1}=\o \mid 
     s_h=s,a_{h:h+m-2} =\a )\quad \text{for all } (\a,\o)\in \fA^{m-1}\times\fO^m  \text{ and } s\in\fS.
 \end{equation}
Similar to the undercomplete case, the weakly revealing condition in the overcomplete setting assumes 
that the $S^\text{th}$ singular value of the $m$-step emission matrix $\M_h$ is lower bounded.
\begin{assumption}[$m$-step $\alpha$-weakly revealing condition]\label{asp:over}
There exists $m \in \N$, $\alpha > 0$ such that $\min_{h\in[H-m+1]} \sigma_{S}(\M_h)\ge\alpha$ where $\M_h$ is the $m$-step emission matrix defined in \eqref{defn:M}.
\end{assumption}
Assumption \ref{asp:over} ensures that the observable sequence in the next $m$ consecutive steps contain enough information to distinguish any two mixtures of states given a sufficiently large number of observations. 
Assumption \ref{asp:under} is a special case of Assumption \ref{asp:over} with $m=1$.

Finally, we remark that in case that $O^m \ge S$, a sufficient condition to make Assumption \ref{asp:over} hold is that: for any stage $h$, there exists a $(m-1)$-step action sequence such that the $m$-step observation sequences under this action sequence is $\alpha$-weakly revealing the hidden state. Formally, for any $h\in [H]$ and $\a\in \fA^{m-1}$ 
 let $\M_{h,\a}$ stands for the $O^m\times S$ matrix obtained from $\M_h$ by selecting the rows of $\M_h$ where the row-index corresponds to $\a$. That is, $(\M_{h,\a})_{\o,s} = [\M_h]_{(\a,\o),s}$.
\begin{proposition} \label{prop:property_multistep_weak_revealing}
Assume that $O^m\ge S$, then Assumption \ref{asp:over} holds if $\max_{a\in \fA^{m-1}} \sigma_S(\M_{h,\a}) \ge \alpha$ for all $h \in [H-m+1]$.
\end{proposition}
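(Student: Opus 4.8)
The plan is to reduce the claim to a single monotonicity fact: the smallest (here, $S$-th) singular value of a matrix can only \emph{grow} when we append additional rows. Indeed, by its very definition $\M_{h,\a}$ is precisely the row-submatrix of $\M_h$ obtained by keeping only those rows whose index $(\a',\o)$ has $\a'=\a$. Since the row index set of $\M_h$ is partitioned over $\a\in\fA^{m-1}$, it therefore suffices to show the elementary bound
\begin{equation}
\sigma_S(\M_{h,\a})\le \sigma_S(\M_h)\qquad\text{for every } \a\in\fA^{m-1},\ h\in[H-m+1].
\end{equation}
Given this, taking the maximum over $\a$ and invoking the hypothesis $\max_{\a}\sigma_S(\M_{h,\a})\ge\alpha$ immediately yields $\sigma_S(\M_h)\ge\alpha$ for all $h$, which is exactly Assumption \ref{asp:over}.

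To prove the displayed inequality I would pass to the Gram matrices on the common $S$-dimensional column space. Because the rows of $\M_h$ are grouped by the action sequence $\a$, a direct entrywise computation gives the identity $\M_h\trans\M_h=\sum_{\a\in\fA^{m-1}}\M_{h,\a}\trans\M_{h,\a}$. Each summand is positive semidefinite, so for any fixed $\a_0$ we may write $\M_h\trans\M_h=\M_{h,\a_0}\trans\M_{h,\a_0}+\sum_{\a\neq\a_0}\M_{h,\a}\trans\M_{h,\a}\succeq \M_{h,\a_0}\trans\M_{h,\a_0}$ in the Loewner order. Monotonicity of eigenvalues under the PSD order (a consequence of the Courant--Fischer min-max characterization) then gives $\lambda_S(\M_h\trans\M_h)\ge\lambda_S(\M_{h,\a_0}\trans\M_{h,\a_0})$, where $\lambda_S$ denotes the $S$-th largest eigenvalue. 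Since both matrices are $S\times S$, these are the \emph{smallest} eigenvalues, and taking square roots yields $\sigma_S(\M_h)\ge\sigma_S(\M_{h,\a_0})$, as needed.

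There is essentially no serious obstacle here; the only point requiring care is bookkeeping with the singular-value indexing. One must note that both $\M_h$ and $\M_{h,\a}$ have exactly $S$ columns, so that $\sigma_S$ is in each case the smallest singular value and corresponds to $\lambda_S$ of the respective $S\times S$ Gram matrix; this is what makes the eigenvalue comparison clean. The role of the standing assumption $O^m\ge S$ is precisely to guarantee that $\M_{h,\a}\in\R^{O^m\times S}$ has at least as many rows as columns, so that $\sigma_S(\M_{h,\a})$ can be nonzero and the hypothesis $\sigma_S(\M_{h,\a})\ge\alpha>0$ is not vacuous (for $O^m<S$ every $\M_{h,\a}$ has rank strictly less than $S$, forcing $\sigma_S(\M_{h,\a})=0$). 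I would therefore state this observation explicitly before carrying out the two-line Gram-matrix argument above.
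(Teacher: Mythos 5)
Your proof is correct and follows essentially the same route as the paper: both reduce the claim to showing $\sigma_S(\M_h)\ge\max_{\a}\sigma_S(\M_{h,\a})$ by partitioning the rows of $\M_h$ according to the action sequence and invoking the Courant--Fischer min-max principle. The paper carries this out by directly interchanging the minimum with the sum over $\a$ in the variational characterization of $\sigma_S^2$, whereas you phrase the identical fact as $\M_h\trans\M_h=\sum_{\a}\M_{h,\a}\trans\M_{h,\a}\succeq\M_{h,\a_0}\trans\M_{h,\a_0}$ plus eigenvalue monotonicity in the Loewner order; the two are interchangeable, and your explicit remark on the role of $O^m\ge S$ is a nice touch.
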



\section{Main Results}
\label{sec:main}
In this section, we present our algorithm---\emph{Optimistic Maximum Likelihood Estimation} (\omle) and its theoretical guarantees for learning weakly revealing POMDPs in both the undercomplete and the overcomplete settings.

\subsection{Undercomplete setting}
For clarity, we first present the algorithm and results for learning undercomplete POMDPs under Assumption~\ref{asp:under}. As we will see in the later section, 
with a minor modification this algorithm also
generalizes to learning overcomplete POMDPs under Assumption~\ref{asp:over}.

\paragraph{Algorithm description}

To condense notations, we use $\theta=(\T,\O,\mu_1)$ to denote the model parameters of a POMDP and use $\Theta$ to denote the collections of all possible model parameters $\theta$ that correspond to POMDPs with $S$ states, $A$ actions, and $O$ observations. 
 To make the dependence on $\theta$ explicit, we will use 
 $V^\pi(\theta)$ to denote the value of a policy $\pi$,
 while we use $\P^\pi_\theta(\tau)$ to denote the probability of observing a trajectory $\tau$ under policy $\pi$, when the underlying POMDP is given by $\theta$. We also use $\O_h(\theta)$ ($\M_h(\theta)$) to denote the emission matrix of  $\theta$ (respectively, the multistep emission matrix of  $\theta$).

Algorithm \ref{alg:under} gives the pseudocode of
\omle. 
As can be seen from this pseudocode,
in each episode $k$  there are two main steps: 
\begin{itemize}
  \item Optimistic planning (Lines \ref{line:under-1}-\ref{line:under-2}): 
  find the POMDP model $\theta^k$ with the highest optimal value in the confidence set $\cB^k$ and follow the associated optimal policy $\pi^k$ in the episode to collect a trajectory $\tau^k$. 
  \footnote{Our algorithm, as well as all existing algorithms for learning large classes of POMDPs, is computationally inefficient. In particular, a naive implementation of optimistic planning (Line \ref{line:under-1}) is to enumerate all POMDP models in an $\epsilon$-cover of the confidence set and compute their optimal policies, which requires $e^{\Omega(HS^2A+HSO)}$
  time in the worst case.}
  \item Confidence set update (Line \ref{line:under-3}): add the newly collected policy-trajectory pair into the dataset,  and then update the confidence set to include those models that assign a total log-likelihood 
  to the data
  that is ``close'' to the maximum possible such total log-likelihood.
  In particular, the form of the confidence set is
\begin{equation*}
 \bigg\{\hat\theta \in \Theta: \sum_{(\pi,\tau)\in\cD} \log \P_{{\hat\theta}}^{\pi} (\tau)
   \ge \max_{ \theta' \in\Theta} \sum_{(\pi,\tau)\in\cD} \log \P^{\pi}_{{\theta'}}(\tau) -\beta  \bigg\} \bigcap \cB^1,
\end{equation*}
	where $\cB^1$ is the initial confidence set that contains all $\alpha$-weakly revealing models of a given size. 
\end{itemize}
Compared to the standard maximum likelihood estimation (MLE) approach,
all $\alpha$-weakly revealing models with a sufficiently high likelihood are allowed and the size of this set is controlled by $\beta\ge 0$.
In particular, if $\beta=0$, the confidence set collapses to the solutions of MLE.

In our algorithm, the choice of $\beta$ is governed by the magnitude of the ``statistical noise'' introduced by various random events. By analyzing this noise, one can choose the value of $\beta$ to guarantee that the true POMDP model is always contained in the resulting confidence set with a prescribed probability (see Proposition \ref{prop:mle-optimism} for a rigorous statement). 

We emphasize that the algorithm design of MLE is considerably simpler than that of prior provably sample-efficient algorithms for learning POMDPs \citep[see, e.g.,][]{azizzadenesheli2016reinforcement,guo2016pac,jin2020sample}, which 
rely on spectral methods.

\begin{algorithm}[t]
   \caption{\textsc{Optimistic Maximum Likelihood Estimation (OMLE)}}
\begin{algorithmic}[1]\label{alg:under}
\STATE \textbf{Initialize:} $\cB^1 = \{ \hat \theta\in\Theta:~\min_h\sigma_{S}(\O_h(\hat\theta))\ge \alpha \}$, $\cD=\{\}$ 
   \FOR{$k=1,\ldots,K$}
   \STATE compute $(\theta^k,\pi^k) = \argmax_{\hat\theta\in\cB^k, \pi} V^\pi(\hat\theta)$ \label{line:under-1}
   \STATE execute policy $\pi^k$ to collect a trajectory $\tau^k :=(o^k_1,a^k_1,\ldots,o^k_h,a^k_h)$\label{line:under-2}
   \STATE add $(\pi^k,\tau^k)$ into $\cD$ and update
   \vspace{-3mm}
   \begin{equation}
   \cB^{k+1} = \bigg\{\hat\theta \in \Theta: \sum_{(\pi,\tau)\in\cD} \log \P_{{\hat\theta}}^{\pi} (\tau)
   \ge \max_{ \theta' \in\Theta} \sum_{(\pi,\tau)\in\cD} \log \P^{\pi}_{{\theta'}}(\tau) -\beta  \bigg\}\bigcap \cB^1
   \vspace{-3mm}
   \end{equation}\label{line:under-3}
   \ENDFOR
\end{algorithmic}
\end{algorithm}

\paragraph{Theoretical guarantees}\label{subsec:under-thm}
Our main result, which shows that \omle will achieve small regret in any \emph{weakly revealing POMDPs} (Assumption \ref{asp:under}), is as follows:

\begin{theorem}[Regret of \omle] \label{thm:under}
There exists an absolute constant $c>0$ such that for any $\delta\in(0,1]$ and $S,A,O,H,K\in\N$,  if we choose $\beta = c\left(H(S^2A+SO)\log(SAOHK)+\log(K/\delta)\right)$ in Algorithm \ref{alg:under}, then,
for any POMDP with $S$ states, $A$ actions, $O$ observations and horizon $H$ and
satisfying Assumption \ref{asp:under},
  with probability at least $1-\delta$,%
  $$   {\rm Regret}(k)\le \poly(S,A,O,H,\alpha^{-1},\log(\delta^{-1}K))\cdot\sqrt{k}  \qquad \text{ for all }k\in[K].$$
\end{theorem}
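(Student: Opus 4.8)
The plan is to follow the standard optimism-based regret decomposition, but with two non-standard ingredients: an MLE-based confidence-set analysis in place of the usual spectral estimates, and an $\ell_1$-eluder argument built on the observable-operator representation in place of the elliptical potential lemma.

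First, I would establish \emph{optimism}. Invoking the validity of the confidence sets (Proposition \ref{prop:mle-optimism}), with probability at least $1-\delta$ the true parameter $\theta^\star$ lies in $\cB^k$ for every $k\in[K]$. Since $(\theta^k,\pi^k)$ maximizes $V^\pi(\hat\theta)$ over $\hat\theta\in\cB^k$, we get $V^{\pi^k}(\theta^k)=V^\star(\theta^k)\ge V^\star(\theta^\star)=V^\star$, so the per-episode regret is controlled by the value of the \emph{same} policy $\pi^k$ evaluated under two different models, $V^\star-V^{\pi^k}\le V^{\pi^k}(\theta^k)-V^{\pi^k}(\theta^\star)$. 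Because the total reward lies in $[0,H]$ and the value is linear in the trajectory distribution, this is at most $H\cdot\norm{\P^{\pi^k}_{\theta^k}-\P^{\pi^k}_{\theta^\star}}_1$. Hence it suffices to bound $\sum_{k=1}^{K}\norm{\P^{\pi^k}_{\theta^k}-\P^{\pi^k}_{\theta^\star}}_1$.

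Second, I would extract the \emph{in-sample} guarantee from MLE. Adapting the classical analysis of maximum likelihood estimators, the choice $\beta\asymp H(S^2A+SO)\log(SAOHK)+\log(K/\delta)$ — essentially the log-covering number of $\Theta$ — ensures not only $\theta^\star\in\cB^k$ but also that every $\hat\theta\in\cB^k$ fits the collected data well in squared Hellinger distance, $\sum_{i<k}\mathrm{D}^2_{\mathrm H}(\P^{\pi^i}_{\hat\theta},\P^{\pi^i}_{\theta^\star})\lesssim\beta$. Via $\norm{\cdot}_1^2\lesssim \mathrm{D}^2_{\mathrm H}$ this yields $\sum_{i<k}\norm{\P^{\pi^i}_{\theta^k}-\P^{\pi^i}_{\theta^\star}}_1^2\lesssim\beta$; that is, the selected model agrees with the truth on \emph{past} policies, while the regret demands control on the \emph{current} policy $\pi^k$. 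The third step must close exactly this in-sample versus out-of-sample gap, using the observable-operator model (OOM) and the weakly revealing condition, followed by an $\ell_1$-eluder argument. Writing $\P^\pi_\theta(\tau)$ as a product of observable operators, a telescoping (hybrid) decomposition expresses $\norm{\P^{\pi}_{\theta}-\P^\pi_{\theta^\star}}_1$ as a sum over steps $h$ of the discrepancy between the step-$h$ operators of $\theta$ and $\theta^\star$, averaged against the belief distribution that $\theta^\star$ and $\pi$ induce on the prefix. The assumption $\sigma_S(\O_h)\ge\alpha$ is precisely what lets me pass from these unobservable operator differences to observable next-observation distributions by applying $\O_h^\dagger$, with error amplified by at most $\alpha^{-1}$, exhibiting the relevant discrepancies as low-dimensional bilinear forms of intrinsic dimension $\poly(S,A,O,H)$. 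The $\ell_1$-eluder dimension then converts the in-sample $\ell_2^2$ control into the cumulative bound
\[
\sum_{k=1}^{K}\norm{\P^{\pi^k}_{\theta^k}-\P^{\pi^k}_{\theta^\star}}_1 \lesssim \sqrt{d_{\ell_1}\,\beta\,K},
\]
where $d_{\ell_1}=\poly(S,A,O,H,\alpha^{-1})$ is the $\ell_1$-eluder dimension of the induced class; combined with the $H$ factor from the reduction and the stated $\beta$, this gives the claimed $\poly(\cdot)\cdot\sqrt{k}$ regret, uniformly in $k$.

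I expect the main obstacle to be this third step. Standard $\ell_2$-based elliptical-potential machinery does not apply, because the MLE confidence set naturally controls $\ell_1$/Hellinger discrepancies rather than $\ell_2$ quadratic forms, so one must develop the $\ell_1$-eluder dimension from scratch and verify it is polynomially bounded for the operator-difference class. The delicate parts are setting up the OOM telescoping so that each per-step term is genuinely a bounded-dimensional \emph{observable} quantity, and carefully tracking the $\alpha^{-1}$ blow-up introduced by inverting the emission matrices through $\O_h^\dagger$ — this is exactly where the weakly revealing condition is indispensable and where the polynomial dependence on $\alpha^{-1}$ enters.
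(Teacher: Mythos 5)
Your proposal is correct and follows essentially the same three-step route as the paper: optimism plus Proposition \ref{prop:mle-optimism} to reduce regret to the total variation error $H\sum_k\|\P^{\pi^k}_{\theta^k}-\P^{\pi^k}_{\theta^\star}\|_1$, the MLE in-sample guarantee of Proposition \ref{prop:mle-valid}, and then the OOM telescoping (Lemma \ref{lem:prod-triangle}) combined with the $\ell_1$-norm eluder argument (Proposition \ref{prop:linear-regret}) to close the in-sample versus out-of-sample gap, with the weakly revealing condition entering exactly where you say it does, through $\|\O_h^\dagger\|$. You have also correctly identified the key technical obstacle — that the MLE guarantee only yields an $\ell_1$-type precondition, which forces the new $\ell_1$-eluder machinery in place of the standard elliptical potential lemma.
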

The proof, as well as the specific polynomial dependency, is presented in Appendix~\ref{app:proof-under}.
Note that the growth rate of regret as a function $k$ is optimal \citep{auer1995gambling}.

Moreover, by the standard online-to-batch conversion \citep{cesa2004generalization}, 
the regret bound immediately implies the following sample complexity result:
\begin{corollary}[Sample Complexity of \omle]\label{cor:sample-under}
Under the same setting as Theorem \ref{thm:under}, when
 $K \ge \poly(S,A,O,H,\alpha^{-1},\log(\epsilon^{-1}\delta^{-1}))\cdot\epsilon^{-2}$, with probability at least $1-\delta$, the uniform mixture of the policies produced by \omle is $\epsilon$-optimal. I.e., $(1/K)\cdot\sum_{k=1}^K V^{\pi^k} \ge V^\star - \epsilon$.
\end{corollary}
Here, the $\tilde{O}(\epsilon^{-2})$ dependence is also optimal up to log factors. 
Previous work by \citet{jin2020sample} also provides polynomial sample-complexity guarantee for learning \emph{$\alpha$-weakly revealing} POMDPs under Assumption \ref{asp:under}. The present result improves over the results of \citet{jin2020sample} in the following aspects: 
\begin{itemize}
  \item 
While the \textsf{OOM-UCB} algorithm of \citet{jin2020sample} heavily exploited the special structure of undercomplete POMDPs,
   \omle~appears in a much simpler form  and the algorithm design arguably does not use this special structure.
As a result, \omle~can be easily extended to learning \emph{multi-step weakly revealing} POMDPs, while to the best of our knowledge \textsf{OOM-UCB} cannot. 
  \item In terms of theoretical guarantees, \omle~enjoys a near-optimal $\sqrt{k}$-regret while \textsf{OOM-UCB} was only shown to achieve a regret of size $O(k^{2/3})$. The higher regret of \textsf{OOM-UCB} is  due to the limitation of its exploration mechanism.\footnote{\textsf{OOM-UCB} itself is not a no-regret algorithm. However, combining its $\tilde\cO(\poly(\cdot)/\epsilon^2)$ sample complexity guarantee with the explore-then-commit strategy implies a $\tilde\cO(\poly(\cdot)\times k^{2/3})$-regret.
  }
\end{itemize}

Finally, observe that the upper bound in Theorem \ref{thm:under} depends polynomially on the inverse of $\alpha$---an upper bound on the $\ell_1$-norm of the pseudoinverse of the  emission matrices in Assumption \ref{asp:under}. This polynomial dependence turns out to be unavoidable as is shown by the following lower bound.
\begin{theorem}[Necessity of $\poly(\alpha^{-1})$ dependency] \label{thm:lowerbound1}
  For any $\alpha \in \left( 0, 1/2 \right)$ and $H,A\in\N^+$, there exists an undercomplete $\alpha$-weakly revealing POMDP  with $S,O=\cO(1)$ so that any algorithm requires at least $\Omega(\min\{\frac{1}{\alpha H},A^{H-1}\})$ samples to learn a $(1/2)$-optimal policy with probability  $1/6$ or higher.
  \end{theorem}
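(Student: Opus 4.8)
The plan is to exhibit a family of hard POMDPs on which learning a $1/2$-optimal policy is statistically equivalent to recovering a planted secret, and then to lower bound the number of episodes needed to recover that secret by an information-theoretic argument. Concretely, I would start from the combination-lock instance of Section~\ref{sec:weakly} (the construction underlying the $A^{\Omega(H)}$ hardness) and make its emissions \emph{weakly} rather than \emph{completely} uninformative, with the degree of informativeness tuned exactly to $\alpha$. The secret is the correct action sequence $\theta = (a^\star_1,\dots,a^\star_{H-1}) \in \fA^{H-1}$, drawn uniformly at random, which routes the agent along a ``good'' chain $g_1 \to \cdots \to g_H$ ($g_{h+1}$ reached from $g_h$ only by playing $a^\star_h$, every other action leading to an absorbing ``bad'' chain); a reward of order $1$ is collected only at the end of the good chain. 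This keeps $S,O = \cO(H)$ and forces any $1/2$-optimal policy to play $\theta$ with constant probability, so learning reduces to identifying $\theta$.

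For the emissions I would use a \emph{rare-reveal} channel: at each step a state emits a state-identifying observation with probability exactly $\alpha$ and an uninformative observation (shared by the good and bad states of that level) with probability $1-\alpha$. A direct computation of $\O_h^\top\O_h$ for this three-observation, two-state per-level emission structure gives $\sigma_{S}(\O_h) = \alpha$, so the instance satisfies Assumption~\ref{asp:under} with parameter exactly $\alpha$; I would also deliver the reward-bearing observation at step $H$ through the same weak channel, so that even the reward leaks $\theta$ only at rate $\alpha$. The key structural consequence is that, in any single episode, the transcript can carry information about $\theta$ through only two mechanisms: the rare revealing observations (each occurring with probability at most $\alpha$, hence at most $\alpha H$ informative observations per episode in expectation), and the event of actually reaching $g_H$ (which, for a policy that has not yet identified $\theta$, happens with probability at most $A^{-(H-1)}$ per episode).

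The heart of the proof is then a two-world comparison. I would compare the interaction under a random $M_\theta$ against a reference model $M_0$ in which the good chain is absent (or $\theta$-independent), and bound the total variation between the laws of the length-$n$ transcripts. By the chain rule for $\mathrm{TV}$/$\mathrm{KL}$ along steps and episodes, together with the two mechanisms above, the total distinguishing probability over $n$ episodes is at most $O(n\alpha H + n A^{-(H-1)})$. Choosing $n \le c\,\min\{1/(\alpha H),\, A^{H-1}\}$ makes this bound smaller than $1/6$, so no algorithm can behave differently under $M_0$ and under a uniformly random $M_\theta$; since a $1/2$-optimal policy must collect the planted reward (impossible under $M_0$), the algorithm must fail with probability at least $1/6$ on the average instance, hence on some instance. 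Averaging over the uniform prior on $\theta$ and invoking a Le Cam-type comparison in the form of this transcript-TV bound yields the claimed $\Omega(\min\{1/(\alpha H),\, A^{H-1}\})$.

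The main obstacle is the \emph{adaptive} accumulation of information: because the agent chooses each episode's policy based on all past observations, and because the amount of signal it can harvest in an episode depends on how deep along the good chain it manages to travel (which in turn depends on what it has already learned), the per-episode information gain cannot be controlled by a naive union bound. I would handle this by a careful martingale/chain-rule decomposition of $\mathrm{KL}$ (or of the transcript $\mathrm{TV}$) that conditions on the realized history and charges each episode a conditional cost of at most $O(\alpha H + A^{-(H-1)})$ regardless of the adaptive choices, so that the bound tensorizes cleanly across the $n$ episodes. A secondary technical point, requiring care but not deep ideas, is verifying that the rare-reveal emission matrices meet Assumption~\ref{asp:under} \emph{exactly} at $\alpha$ while simultaneously keeping the reward channel weak enough that it does not create an $\alpha$-independent shortcut to identifying $\theta$.
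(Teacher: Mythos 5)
Your hard instance is essentially the paper's: a combinatorial lock whose per-level emissions reveal the state with probability $\alpha$ and emit a shared dummy observation otherwise, so that $\sigma_S(\O_h)=\alpha$ and the secret is the good action sequence in $\fA^{H-1}$. Your final counting argument differs in style from the paper's---you run a Le Cam-type comparison bounding the transcript total variation between the planted model and a $\theta$-independent reference by $O(n\alpha H + nA^{-(H-1)})$, whereas the paper argues more elementarily by conditioning on the constant-probability event $(1-\alpha)^{1/\alpha}\ge 1/5$ that \emph{no} revealing observation occurs in any of the $\lfloor 1/(\alpha H)\rfloor$ episodes, under which the problem collapses to blind guessing among $A^{H-1}$ sequences. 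Both routes work, and your chain-rule handling of adaptivity is the standard way to make the TV bound tensorize.

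There is, however, one genuine flaw in your construction: you propose to ``deliver the reward-bearing observation at step $H$ through the same weak channel, so that even the reward leaks $\theta$ only at rate $\alpha$.'' Since the reward is a function of the observation and rewards lie in $[0,1]$, this makes the optimal value $V^\star=\alpha<1/2$, so the all-zero policy is already $(1/2)$-optimal and the lower bound becomes vacuous---your earlier claim that ``any $1/2$-optimal policy must play $\theta$ with constant probability'' requires $V^\star$ to be bounded away from $1/2$ from above. The paper resolves exactly this tension the other way: the terminal good state $s_{H0}$ emits its identifying (reward-$1$) observation with probability \emph{one}, which keeps $V^\star=1$; the resulting fully-revealing reward channel is precisely why the bound is a $\min$ with $A^{H-1}$, since after $\Omega(A^{H-1})$ episodes the secret can be recovered through the reward alone. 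Your own TV bound already charges the $nA^{-(H-1)}$ term for the event of reaching the end of the good chain, so the fix is simply to drop the weakening of the reward observation and let that term absorb its leakage; with that change (and the routine last step converting ``transcript TV small'' into ``failure probability at least $1/6$'' via the fact that a single output policy can place probability $\ge 1/2$ on at most two action sequences), your argument goes through.
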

  Theorem \ref{thm:lowerbound1} implies that a polynomial dependence on $1/\alpha$ is in general unavoidable
  in the sense that any algorithm either needs to suffer a regret exponential in the horizon $H$, or its regret needs to be polynomially dependent on $1/\alpha$.
The proof of Theorem \ref{thm:lowerbound1} is provided in Appendix~\ref{app:lower}.

\subsection{Overcomplete setting}
\label{sec:over}

We now turn to the more challenging setting of learning in overcomplete POMDPs, where the number of hidden states can be larger than the number of observations. We show that a simple variant of \omle is able to learn weakly-revealing overcomplete POMDPs in a polynomial number of samples. As we shall see, we pay a nontrivial price for the increased generality: while we can still achieve rate-optimal PAC-results, we compromise on the regret of the algorithm.

\paragraph{Algorithm description}
Algorithm \ref{alg:over} shows the pseudo-code of \omle suitable for $m$-step $\alpha$-weakly revealing overcomplete POMDPs. While the basic structure of the method is the same as before, the general \omle, which we call  \emph{multi-step \omle}, differs from the basic version in two important aspects:
\begin{itemize}
  \item Instead of merely following the optimistic policy, Algorithm \ref{alg:over} adopts a more active strategy for exploration. Specifically, for each optimistic policy $\pi^k$, the learner will one by one experiments with $(H-m+1)$ policies that are obtained by picking  a within-episode time index $h\in \{0,\dots,H-m\}$ and then following policy $\pi^k$ for the first $h$ steps,  and then picking actions   uniformly at random in the remaining steps of the episode. We denote the resulting policy by  
$\pi^k_{1:h}\circ\text{uniform}(\fA)$, which abuses notation, but should improve readability.
\item When constructing the confidence set, Algorithm \ref{alg:over} requires the minimum singular value of the  $m$-step emission-action matrix (defined in equation \eqref{defn:M}) to be lower bounded by $\alpha$, which enforces the multi-step \emph{$\alpha$-weakly revealing} condition in Assumption \ref{asp:over}.
\end{itemize}
By trying random action sequences after executing $\pi^k$ for the initial $h$ steps, the learner can gather more information about the hidden states reachable by $\pi^k$ at step $h$ and therefore can better learn the system dynamics under $\pi^k$. The price of trying random actions is that the algorithm as described here will in general have linear regret. Nevertheless, with an online-to-batch conversion, Algorithm \ref{alg:over} serves as a suitable approach to learning a good policy with low sample complexity.

\begin{algorithm}[t]
  \caption{\textsc{Multi-step Optimistic Maximum Likelihood Estimation}} \begin{algorithmic}[1]\label{alg:over}
 \STATE \textbf{Initialize:} $\cB^1 = \{ \hat \theta\in\Theta:~\min_h\sigma_{S}(\M_h(\hat\theta))\ge \alpha\}$, $\cD=\{\}$
    \FOR{$k=1,\ldots,K$}
    \STATE $(\theta^k,\pi^k) = \argmax_{\hat\theta\in\cB^k,\pi} V^\pi(\hat\theta)$
    \label{line:over-1}
    \FOR{$h=0,\ldots,H-m$} \label{line:over-2}
  \STATE execute policy $\pi^k_{1:h}\circ\text{uniform}(\fA)$  to collect a trajectory $\tau^{k,h}$ \\
  then add  $(\pi^k_{1:h}\circ\text{uniform}(\fA),\tau^{k,h})$ into $\cD$  \label{line:over-3}
   \ENDFOR
    \STATE update
    \vspace{-3mm}
    \begin{equation}
    \cB^{k+1} = \bigg\{\hat\theta \in \Theta: \sum_{(\pi,\tau)\in\cD} \log \P_{{\hat\theta}}^{\pi} (\tau)
    \ge \max_{ \theta' \in\Theta} \sum_{(\pi,\tau)\in\cD} \log \P^{\pi}_{{\theta'}}(\tau) -\beta \bigg\}\bigcap \cB^1
    \vspace{-3mm}
    \end{equation}  \label{line:over-4}
    \ENDFOR
 \end{algorithmic}
 \end{algorithm}

\paragraph{Theoretical guarantees}
Our main result in this section bounds the total suboptimality of the policies $\pi^1,\dots,\pi^k$ chosen by \omle.
Note that since \omle is not following these policies, the regret of \omle is different (in general, higher) than the total suboptimality.

\begin{theorem}[Total suboptimality of multi-step \omle]\label{thm:over}
There exists an absolute constant $c>0$ such that for any $\delta\in(0,1]$ and $S,A,O,K,H\in\N$, if we choose parameter $\beta$ in Algorithm \ref{alg:over} as $\beta = c\left(H(S^2A+SO)\log(SAOH)+\log(KH/\delta)\right)$, then, for any POMDP 
with $S$ states, $A$ actions, $O$ observations and horizon $H$ and
satisfying Assumption \ref{asp:over},
	with probability at least $1-\delta$, 
  $$  \sum_{t=1}^k \left( V^{\star}-V^{\pi^t} \right)\le \poly(S,A^m,O,H,\alpha^{-1},\log(\delta^{-1}K))\cdot\sqrt{k}  \qquad \text{ for all }k\in[K].$$
\end{theorem}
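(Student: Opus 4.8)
The plan is to run the standard optimism-based analysis, but with an MLE confidence set and an $\ell_1$-eluder argument tailored to the $m$-step emission structure. First I would establish the optimism guarantee: with the stated choice of $\beta$, the true model $\theta^\star$ lies in every confidence set $\cB^k$ with probability at least $1-\delta$. This is exactly Proposition \ref{prop:mle-optimism}; its proof is a martingale concentration for the log-likelihood ratio $\sum_{(\pi,\tau)\in\cD}\log(\P^\pi_{\theta^\star}(\tau)/\P^\pi_{\hat\theta}(\tau))$, union-bounded over an $\epsilon$-cover of $\Theta$. The cover has size $e^{O(H(S^2A+SO)\log(SAOH))}$, matching the POMDP parameter count, while the union over episodes and over the $HA^{m-1}$ exploration policies per episode contributes the $\log(KHA^m/\delta)$ term; together these are precisely the form of $\beta$. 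Conditioned on $\theta^\star\in\cB^k$, optimism (Line \ref{line:over-1} of Algorithm \ref{alg:over}) gives $V^\star=V^{\pi^\star}(\theta^\star)\le V^{\pi^k}(\theta^k)$, so that
\[ V^\star - V^{\pi^k}\le V^{\pi^k}(\theta^k)-V^{\pi^k}(\theta^\star), \]
reducing the per-episode suboptimality to a value gap between the optimistic and the true model under the shared policy $\pi^k$.

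Next I would convert this value gap into a distributional distance. Since each reward lies in $[0,1]$ and depends only on observations, $|V^\pi(\theta)-V^\pi(\theta')|\le H\,\norm{\P^\pi_\theta-\P^\pi_{\theta'}}_{\rm TV}$ at the level of whole-trajectory laws. On the other side, the defining inequality of the confidence set yields an \emph{in-sample} error bound: the generic MLE guarantee (in the style of van de Geer) shows that every $\hat\theta\in\cB^k$ satisfies $\sum_{(\pi,\tau)\in\cD}D_{\rm H}^2(\P^\pi_{\hat\theta},\P^\pi_{\theta^\star})\lesssim\beta$, where $D_{\rm H}$ is the Hellinger distance and the sum runs over all previously collected policy-trajectory pairs. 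Because Hellinger upper-bounds TV up to a constant, this pins $\theta^k$ close to $\theta^\star$ on the laws of the \emph{data-collection} policies, but not yet on $\pi^k$ itself.

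The crux is bridging this in-sample control to the out-of-sample gap at $\pi^k$, and here the OOM representation together with the weakly-revealing condition does the work. Writing $\P^\pi_\theta(\tau)$ as a product of observable operators, I would telescope the trajectory-law difference across the $H$ steps so that each step contributes a term of the form (difference of step-$h$ observable operators) applied to a belief vector reachable by $\pi^k$. The difficulty in the overcomplete regime is that a single emission matrix $\O_h$ is not invertible, so step-$h$ operator differences cannot be recovered from single-step observations. This is exactly why Algorithm \ref{alg:over} explores with all action continuations $\a\in\fA^{m-1}$: the collected trajectories $\tau^{k,h,\a}$ together measure the full $m$-step observation law, whose matrix is $\M_h$. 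Assumption \ref{asp:over} gives $\sigma_S(\M_h)\ge\alpha$, hence $\M_h$ has full column rank and $\norm{\M_h^\dagger}\le\alpha^{-1}$, which lets me bound each latent belief-propagation difference by $\alpha^{-1}$ times an $m$-step observation-law difference that the exploration data directly control. This step is where the $\alpha^{-1}$, $A^m$, and $O^m$ factors enter, since the $m$-step observation alphabet has size $A^{m-1}O^m$ and the $HA^{m-1}$ exploration branches are what reconstruct $\M_h$.

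Finally I would sum the out-of-sample gaps over $t=1,\dots,k$ using the $\ell_1$-norm eluder dimension. The in-sample bound says the squared step-$h$ prediction errors of $\theta^t$, measured on the policies used before episode $t$, accumulate to $O(\beta)$; the eluder machinery converts this into a bound on the un-squared errors summed at the current policies $\pi^1,\dots,\pi^k$. Because trajectory laws are compared in $\ell_1$/TV rather than $\ell_2$, the classical $\ell_2$ eluder dimension is unsuitable, so I would invoke the paper's $\ell_1$-eluder dimension, whose value for the relevant class of $m$-step operators is $\poly(S,A^m,O^m,H)$. A Cauchy-Schwarz step then yields
\[ \sum_{t=1}^k(V^\star-V^{\pi^t})\lesssim\sqrt{\,d_{\ell_1}\,\beta\,H^2\,\alpha^{-2}\,k\,}=\poly(S,A^m,O^m,H,\alpha^{-1},\log(K\delta^{-1}))\,\sqrt{k}, \]
as claimed. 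I expect the principal obstacle to be precisely this bridging step: controlling the $\pi^k$-value gap by the $m$-step observation errors through a telescoping OOM decomposition while correctly tracking how $\alpha^{-1}$ and the $A^m,O^m$ alphabet sizes propagate, and verifying that the induced function class has polynomially bounded $\ell_1$-eluder dimension.
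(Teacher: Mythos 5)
Your proposal is correct and follows essentially the same three-step route as the paper: optimism via the MLE confidence set (Propositions \ref{prop:mle-optimism} and \ref{prop:mle-valid}), a telescoping OOM decomposition in which Assumption \ref{asp:over} bounds $\|\M_h^\dagger\|$ by $\alpha^{-1}$ and the $\fA^{m-1}$ exploration branches supply the $m$-step observation laws (Lemma \ref{lem:prod-triangle-over}), and the $\ell_1$-norm eluder dimension (Proposition \ref{prop:linear-regret}) to pass from in-sample operator-error constraints to the on-policy sum. You also correctly identify the bridging step from past-policy errors of $\theta^k$ to same-round errors of $\theta^t$ as the crux, which is exactly what the paper's Step 3 handles.
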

The specific polynomial dependency is presented in Appendix \ref{app:proof-over}.
This form of the result is preferred as it makes a comparison to Theorem~\ref{thm:under} more direct and it also reveals a bit of the proof strategy.
The significance of this result is that, 
using the standard online-to-batch conversion \citep{cesa2004generalization}, we get the following sample complexity results.
\begin{corollary}[Sample Complexity of multi-step \omle]
Under the same setting as Theorem \ref{thm:over}, when $K \ge \poly(S,A^m,O,H,\alpha^{-1},\log(\epsilon^{-1}\delta^{-1}))\cdot\epsilon^{-2}$, with probability at least $1-\delta$, the uniform mixture of the policies produced by multi-step \omle is $\epsilon$-optimal. I.e., $(1/K)\cdot\sum_{k=1}^K V^{\pi^k} \ge V^\star - \epsilon$.
\end{corollary}
Up to polylogarithmic factors, the dependence on $\epsilon$ in this result is unimprovable.
Using an explore-then-exploit strategy, this latter result gives rise to a method that enjoys $\tlO(K^{2/3})$ regret, where the constants hidden are still polynomial in the relevant quantities.
To our knowledge, for small fixed $m$, this is the first sample-efficient result for learning overcomplete POMDPs in the \emph{exploration} setting where the algorithm needs to reason about how to collect information efficiently.

A natural question here is whether the exponential dependence on $m$ in Theorem \ref{thm:over} is necessary.
We answer this question by providing the following lower bound, which rules out the possibility of an upper bound polynomial in $m$.
\begin{theorem}[Necessity of $A^{\Omega(m)}$ dependency]\label{thm:lowerbound2}
  For any $m,A\in\N^+$, there exists a  POMDP with $S,H,O=\cO(m)$ and satisfying Assumption \ref{asp:over} with $\alpha \ge 1$ so that any algorithm requires at least $\Omega(A^{m-1})$ samples to learn a $(1/2)$-optimal policy with probability at least $1/2$.
  \end{theorem}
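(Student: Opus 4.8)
The plan is to exhibit, for each choice of a hidden ``secret,'' a combinatorial lock with unobserved states---mirroring the hardness construction of \citet{krishnamurthy2016pac,jin2020sample} recalled above---but with the lock length set to $m-1$ rather than $H-1$. Concretely, I would take states $\fS=\{g_1,\dots,g_m,\bot\}$ (so $S=m+1=\cO(m)$), horizon $H=m$, and observations $\fO=\{\mathtt{null},\mathtt{rew}\}$ (so $O=\cO(m)$), with initial state $g_1$. A secret $\a^\star=(a^\star_1,\dots,a^\star_{m-1})\in\fA^{m-1}$ is fixed: from $g_j$ the action $a^\star_j$ advances to $g_{j+1}$ while every other action sends the agent to the absorbing state $\bot$; $g_m$ is absorbing. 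The emissions are designed to leak nothing about progress: every state emits $\mathtt{null}$ at every step, except that $g_m$ emits $\mathtt{rew}$ (carrying unit reward) whenever it is occupied. This yields a family $\{\theta_{\a^\star}\}_{\a^\star\in\fA^{m-1}}$ of $A^{m-1}$ POMDPs in which the only way to collect reward is to execute exactly $\a^\star$ from the start, and the optimal value is $1$.

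The lower bound then follows the needle-in-a-haystack template. Since all non-reward observations are identically $\mathtt{null}$ regardless of $\a^\star$, for any policy the law of the observed trajectory coincides across all secrets until the episode in which the agent happens to play the correct sequence. Within one episode the agent receives no informative observation before the final step, so its action sequence is effectively committed without feedback and ``tests'' exactly one candidate: reward $1$ is obtained iff that candidate equals $\a^\star$. Placing a uniform prior on $\a^\star$, I would argue (by a Le Cam / KL-divergence or direct posterior-mass computation, exactly as in the cited combinatorial-lock bound) that after fewer than $\tfrac12 A^{m-1}$ episodes at least two secrets remain equally likely, so no policy can collect reward with probability $\ge 1/2$; hence learning a $(1/2)$-optimal policy with success probability $\ge 1/2$ requires $\Omega(A^{m-1})$ samples.

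The substantive step, and the one I expect to be the main obstacle, is verifying that this instance satisfies Assumption~\ref{asp:over} with $\alpha\ge 1$ \emph{despite} observations that are useless to the learner. The tension is only apparent: the matrix $\M_1$ (only $h=1$ matters since $H=m$) is a property of the model that probes each state $s$ as a hypothetical start under all $\a\in\fA^{m-1}$, whereas the learner starts only from $g_1$ and can never park itself at an intermediate $g_j$. The key computation is that each good state is \emph{individually identifiable} over the length-$m$ window: starting from $g_j$ and playing its own suffix $(a^\star_j,\dots,a^\star_{m-1})$ makes $\mathtt{rew}$ first appear at step $m-j+1$, a timing that pins down $j$ (reaching $g_m$ from $g_{j'}$ takes $m-j'$ steps), while $\bot$ never produces $\mathtt{rew}$. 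Writing $\M_1^\top\M_1=\sum_{\a}G^{(\a)}$ with $G^{(\a)}\succeq\sum_{s\,\mathrm{singleton\ under\ }\a}e_se_s^\top$, this identifiability yields $\M_1^\top\M_1\succeq\sum_j e_{g_j}e_{g_j}^\top$, covering all good-state directions. Crucially, the distinguishing sequence for $g_j$ is exactly its secret suffix, so the identifiability is \emph{gated} by knowledge of $\a^\star$ and is unavailable to the learner---which is precisely what lets weak-revealing coexist with the $A^{m-1}$ lower bound (in particular, any attempt to ``read out'' whether a guess kept the agent on the chain must reproduce the unknown downstream suffix).

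I expect the delicate part to be pinning the constant to $\alpha\ge 1$ rather than merely $\alpha\ge\mathrm{const}$. The obstruction is the $\bot$ direction: no single action sequence isolates $\bot$ from every $g_j$ (for any fixed $\a$, some $g_j$ is also killed and shares the all-$\mathtt{null}$ output), so $\bot$ correlates with the $\{g_j\}$ columns through the many all-$\mathtt{null}$ rows and can drag $\sigma_{S}(\M_1)$ below $1$. Resolving this requires a careful emission design---e.g., enriching the timing signal carried by $\mathtt{rew}$, or adding a secret-\emph{independent} revealing action acting only at the terminal step, so that $\bot$ too contributes a clean $e_\bot e_\bot^\top$ and $\M_1^\top\M_1\succeq\I$ while still leaking only one bit (win/lose) per episode to the learner. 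A secondary (more routine) obstacle is the rigorous indistinguishability argument certifying that no adaptive policy extracts more than one secret's worth of information per episode, for which the observation-law coupling across $\{\theta_{\a^\star}\}$ described above should suffice.
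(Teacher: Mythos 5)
Your high-level plan coincides with the paper's: a length-$(m-1)$ combinatorial lock, a reduction of the learning problem to guessing among $A^{m-1}$ secrets (the paper phrases this as an $A^{m-1}$-armed bandit, since nothing informative is observed before the final step), and a separate verification of Assumption~\ref{asp:over}. The genuine gap is exactly the one you flag and then leave unresolved: with a \emph{single} absorbing bad state $\bot$ and only two observations, the instance does \emph{not} in general satisfy $\sigma_S(\M_1)\ge 1$, and since the lower bound must hold simultaneously for all $A^{m-1}$ secrets in the family, it is not enough that some secrets happen to work. Concretely, take $m=3$, $A=2$ and secret $\a^\star=(1,2)$ in your construction. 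Each column of $\M_1$ has exactly $A^{m-1}=4$ ones (one per action sequence), and the columns indexed by $g_1,g_2,\bot$ have Gram matrix
\[
G'=\begin{pmatrix} 4 & 1 & 3\\ 1 & 4 & 2\\ 3 & 2 & 4\end{pmatrix},
\qquad
\det\paren{G'-\I_3}=\det\begin{pmatrix} 3 & 1 & 3\\ 1 & 3 & 2\\ 3 & 2 & 3\end{pmatrix}=-3<0,
\]
where $\langle g_1,\bot\rangle=3$ and $\langle g_2,\bot\rangle=2$ count the action sequences under which the respective good state produces the all-$\mathtt{null}$ window---precisely the correlation through the all-$\mathtt{null}$ rows you anticipated. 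Hence $\lambda_{\min}(G')<1$, so $\sigma_S(\M_1)<1$ and Assumption~\ref{asp:over} with $\alpha\ge 1$ fails for this secret. Your proposal stops at ``this requires a careful emission design,'' which is the missing idea rather than a routine detail.

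The paper supplies that idea with a different state/observation layout: $2m$ states, one good and one \emph{bad state per stage} $s_{i0},s_{i1}$, and three observations---a dummy $o_d$ emitted at every non-terminal state, $o_1$ at $s_{m0}$, and a distinct $o_0$ at $s_{m1}$. Because the terminal \emph{bad} state now emits its own symbol, the time at which the first non-dummy observation appears in the $m$-step window reveals the stage index of the starting state under \emph{every} action sequence, so columns of $\M_1$ belonging to different stages are exactly orthogonal. Within stage $i$, the two columns have Gram matrix with diagonal $c=A^{m-1}$ and off-diagonal $c-b$ where $b=A^{i-1}$ is the number of good action sequences, whose smallest eigenvalue is $b\ge 1$; together this gives $\sigma_S(\M_1)\ge 1$ for every secret while still leaking only the final win/lose bit to the learner. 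Without this (or an equivalent) modification, your construction does not prove the theorem as stated.
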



\section{Proof Overview}

We provide a proof overview of Theorem \ref{thm:under} for learning undercomplete \emph{weakly revealing} POMDPs (Assumption \ref{asp:under}). 
We defer the full proof to Appendix \ref{app:proof-under}.
The proof for learning overcomplete POMDPs (Theorem \ref{thm:over}) follows a similar strategy, which is described in Appendix \ref{app:proof-over}.

\subsection{Observable operator models}
\label{subsec:oom}
To begin with, we introduce the observable operators \citep{jaeger2000observable} that provide an alternate parameterization of POMDPs. 
These operators will serve as  intermediate quantities in our analysis: They will allow us to bound the suboptimality of the learned policies as a function of the ``width'' of the MLE confidence set.
Given the transition matrices $\{\T_{h,a}\}_{(h,a)\in[H]\times\fA}$, the observation matrices $\{\O_h\}_{h\in[H]}$, and the initial distribution $\mu_1$, the observable operators  $\{\B_h(o,a)\}_{(h,o,a)\in[H-1]\times\fO\times\fA}$ and the initial $\b_0$ observation distribution are given by 
\begin{equation*}\label{eq:ps-2}
        \B_h(o,a) = \O_{h+1}\T_{h,a}  \diag(\O_h(o\mid \cdot))\O_h^\dagger,\qquad
        \b_0 = \O_1\mu_1,
\end{equation*}
where $\O_h(o\mid \cdot)\in\R^S$ denotes the $o^{\rm th}$ row of $\O_h$. 
It is known that the these operators give an equivalent parameterization of the POMDPs: For any policy, the distribution induced by a POMDP over the possible trajectories of observation-action pairs can be described solely using these operators.
In particular,
the probability of observing trajectory $\tau_h = (o_1,a_1,\ldots,o_h,a_h)$ under policy $\pi$ in POMDP model $\theta$ is given by 
\begin{equation}\label{eq:ps-3}
    \P_\theta^\pi(\tau_h) = \pi(\tau_h)\cdot
     \left(\e_{o_h}\trans\B_{h-1}(o_{h-1},a_{h-1};\theta)\cdots\B_1(o_1,a_1;\theta)\b_0(\theta)\right),
\end{equation}
where $\pi(\tau_h) := \prod_{h'=1}^h \pi(a_h\mid o_h,\tau_{h-1})$ represents the part of the probability of $\tau_h$ that can be attributed to the randomness of the policy and
we used $\B_{j}(\cdot;\theta)$ to denote the observable operators underlying $\theta$.
One important advantage of adopting this operator representation of POMDPs is that the linear structure facilitates us to use existing tools from matrix analysis to analyze the  error of operator estimates.

\subsection{Step 1: bound the regret by the error of operator estimates}
By analyzing the relaxed MLE condition, one can prove that the groundtruth POMDP model $\theta^\star$ is contained in  confidence set $\cB^k$ for all $k\in[K]$ with high probability (see Proposition \ref{prop:mle-optimism} in Appendix \ref{app:mle}). 
Therefore, from now on assume that $\theta^\star\in \cap_{k\in [K]}\cB^k$ holds. 
Now, recall that we choose the model estimate and the behavior policy optimistically in Algorithm \ref{alg:under}, i.e., $(\theta^k,\pi^k)=\argmax_{\hat\theta\in\cB^k,\pi} V^\pi_{\hat\theta}$. 
As a result, we have $V^\star = \max_\pi V^\pi_{\theta^\star}\le \max_{\hat\theta\in\cB^k,\pi} V^\pi_{\hat\theta} = V^{\pi^k}_{\theta^k}$ for all $k\in[K]$. From this, we get 
\begin{equation}\label{eq:ps-1}
    \sum_{t=1}^k V^{\star}_{\theta^\star} - V^{\pi^t}_{\theta^\star}
    \le 
    \sum_{t=1}^k V^{\pi^t}_{\theta^t} - V^{\pi^t}_{\theta^\star}
    \le  H \sum_{t=1}^k \sum_{\tau_H   } | \P^{\pi^t}_{{\theta^t}}(\tau_H) - \P^{\pi^t}_{\theta^\star}(\tau_H)|,
\end{equation}
where $\tau_H=(o_1,a_1,\ldots,o_H,a_H)$ denotes a whole trajectory and the second inequality uses the fact that the cumulative reward of each trajectory is bounded by $H$. Therefore, to prove Theorem \ref{alg:under}, it suffices to bound the
total cumulated 
 error in estimating the probability of the individual trajectories, cf.
the RHS of \eqref{eq:ps-1}.
  
By using the OOM representations in \eqref{eq:ps-3}, it turns out that we can bound the RHS of \eqref{eq:ps-1} by the error in estimating each observable operator. To simplify notation, we abbreviate $\B_h(o,a;\theta^\star)$, $\b_0(\theta^\star)$ as $\B_h(o,a)$, $\b_0$, and denote $\B^t_h(o,a) :=\B_h(o,a;\theta^t)$, $\b_0^t := \b_0(\theta^t)$. With this notation, we have the following result:
\begin{lemma}\label{lem:prod-triangle}
    For any $k\in\N$, the RHS of \eqref{eq:ps-1} is upper bounded by 
    \begin{equation}\label{eq:ps-5}
      \frac{H\sqrt{S}}{\alpha}  \left(\sum_{t=1}^k  \sum_{h=1}^{H-1} \sum_{\tau_{h}   } 
       \left\| \left(\B_h(o_h,a_h) - \B^t_h(o_h,a_h)\right)\b(\tau_{h-1})\right\|_1 \times \pi^t(\tau_{h}) + \|\b_0-\b^t_0\|_1\right),
       \end{equation}
       where $\b(\tau_h) \defeq 
        \left(\prod_{h'=1}^h \B_{h'}(o_{h'},a_{h'}) \right)\b_0$ is the ``belief vector'' associated with trajectory $\tau_h=(o_1,a_1,\dots,o_h,a_h)$.
    \end{lemma}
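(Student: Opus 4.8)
The plan is to reduce the trajectory-wise probability gap to a telescoping sum over single-operator errors and then absorb the ``future'' operators by an $\ell_1$-contraction argument. First, since $\pi^t$ is shared between $\theta^t$ and $\theta^\star$, I factor it out using the OOM representation \eqref{eq:ps-3} and abbreviate $\B_h=\B_h(o_h,a_h)$, $\B^t_h=\B^t_h(o_h,a_h)$, so that $\sum_{\tau_H}|\P^{\pi^t}_{\theta^t}(\tau_H)-\P^{\pi^t}_{\theta^\star}(\tau_H)| = \sum_{\tau_H}\pi^t(\tau_H)\,|\e_{o_H}\trans(\B^t_{H-1}\cdots\B^t_1\,\b^t_0 - \B_{H-1}\cdots\B_1\,\b_0)|$. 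I then expand the difference of the two operator products by the standard hybrid (telescoping) identity that swaps factors one at a time, from the initial vector outward:
\[
\B^t_{H-1}\cdots\B^t_1\,\b^t_0 - \B_{H-1}\cdots\B_1\,\b_0 = \B^t_{H-1}\cdots\B^t_1(\b^t_0-\b_0) + \sum_{j=1}^{H-1}\B^t_{H-1}\cdots\B^t_{j+1}\,(\B^t_j-\B_j)\,\b(\tau_{j-1}),
\]
where everything to the right of the error factor is a true operator, so the tail collapses exactly to the belief vector $\b(\tau_{j-1})=\B_{j-1}\cdots\B_1\,\b_0$, while everything to the left is an estimated operator. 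This isolates each single-step operator error $(\B^t_j-\B_j)\b(\tau_{j-1})$ (and the initial error $\b^t_0-\b_0$), which is precisely what appears in the target bound \eqref{eq:ps-5}.

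The heart of the argument is a ``peeling'' estimate controlling the remaining estimated operators $\B^t_{H-1}\cdots\B^t_{j+1}$ together with the projection $\e_{o_H}\trans$ after summing the future observations and actions against $\pi^t$. The key observation is that applying $\B^t_{j+1}$ first hits any vector $\w$ with $(\O_{j+1}(\theta^t))^\dagger$, so $\B^t_{j+1}\w=\B^t_{j+1}\O_{j+1}(\theta^t)\nu$ with $\nu=(\O_{j+1}(\theta^t))^\dagger\w$; and for a nonnegative $\nu$ the scalar $\e_{o_H}\trans\B^t_{H-1}\cdots\B^t_{j+1}\O_{j+1}(\theta^t)\nu$ is exactly the model-$\theta^t$ probability of observing $o_{j+1:H}$ started from the (unnormalized) state belief $\nu$ under the given actions. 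Because $\theta^t$ is a genuine POMDP whose emission/transition matrices are stochastic, summing this nonnegative quantity over all future observation-action sequences weighted by the conditional policy probabilities telescopes to $\sum_s\nu_s=\norm{\nu}_1$; a sign-split $\nu=\nu^+-\nu^-$ with the triangle inequality then upgrades this to $\sum_{\mathrm{future}}\pi^t\,|\e_{o_H}\trans\B^t_{H-1}\cdots\B^t_{j+1}\w|\le\norm{\nu}_1$. Finally I convert back to $\w$: since $\theta^t\in\cB^1$ guarantees $\sigma_{S}(\O_{j+1}(\theta^t))\ge\alpha$, we get $\norm{\nu}_1\le\sqrt{S}\,\norm{\nu}_2\le\sqrt{S}\,\alpha^{-1}\norm{\w}_2\le(\sqrt{S}/\alpha)\norm{\w}_1$, which produces the $\sqrt{S}/\alpha$ prefactor.

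To assemble, I apply this peeling estimate to each telescoped term with $\w=(\B^t_j-\B_j)\b(\tau_{j-1})$ (and $\w=\b^t_0-\b_0$ for the initial term), keeping the past factors $\pi^t(\tau_j)$ outside the future sum; this bounds the $j$-th term by $(\sqrt{S}/\alpha)\sum_{\tau_j}\pi^t(\tau_j)\norm{(\B^t_j-\B_j)\b(\tau_{j-1})}_1$, which is exactly the $h=j$ summand of \eqref{eq:ps-5}. Summing over $j=1,\dots,H-1$ together with the $\b_0$-term and restoring the leading factor $H$ from \eqref{eq:ps-1} then yields the claim. I expect the peeling estimate to be the main obstacle: the point is to recognize that the estimated operators, though they may have large spectral norm, act as exact (sub)probability kernels for the valid model $\theta^t$, so that marginalizing over the future \emph{collapses} the product rather than amplifying it --- the only place where norm growth can occur is the single pseudoinverse $(\O_{j+1}(\theta^t))^\dagger$, and this is exactly what the weakly-revealing lower bound built into $\cB^1$ keeps under control. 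A secondary point to handle carefully is the bookkeeping of history-dependent policy probabilities when splitting each trajectory into its past (kept) and future (summed) parts.
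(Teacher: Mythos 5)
Your proof is correct and follows essentially the same route as the paper's: the same hybrid telescoping of the operator products (estimated operators to the left of the single-step error, true operators collapsing to the belief vector on the right), the same key peeling step via the identity $\B^t_{j+1}\w=\B^t_{j+1}\O_{j+1}(\theta^t)(\O_{j+1}(\theta^t))^{\dagger}\w$ together with the probabilistic collapse of the future marginalization, and the same $\|(\O_{j+1}(\theta^t))^{\dagger}\w\|_1\le(\sqrt{S}/\alpha)\|\w\|_1$ bound (your sign-split of $\nu$ is just the paper's coordinate-wise triangle inequality in different clothing). This is precisely the argument of Lemma~\ref{lem:prod-triangle-under} and its proof in Appendix~\ref{sec:auxlemmas}.
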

In Equation~\eqref{eq:ps-5} we abused notation in a few ways: In the innermost sum over the observation-action trajectories $\tau_h$ of length $h$, $\tau_{h-1}$ refers to the prefix of $\tau_h$ where the last observation-action is dropped. Also, in this sum, $o_h,a_h$ refer to the last observation-action pair of $\tau_h$.

Lemma \ref{lem:prod-triangle} is obtained from Lemma \ref{lem:prod-triangle-under},
which states the same result for an arbitrary sequence of observable operators.
 As a result, in order to control the regret, it suffices to control the  estimation error of each operator. 
 Importantly, here we do not need to recover the operators accurately  at all entries, which,  in general, is also  impossible when there are hard-to-reach latent states. 
 Instead, we only care about the projections of the errors onto the belief vectors, which are  further reweighted by the probability of the behavior policies. 
 Therefore, it suffices to learn the operators accurately only in those  directions that are adequately covered by the reweighted belief vectors.

\subsection{Step 2: derive  constraints for the operator estimates from \omle}

Now let us make a detour to see what guarantees \omle~can provide for our operator estimates. 
As a result of the classic MLE analysis \cite[e.g.,][]{geer2000empirical}, we can show under the same choice of  $\beta$ as  Theorem \ref{thm:under}, with high probability
\begin{equation}\label{eq:ps-6}
     \sum_{t=1}^{k-1} \left\| \P^{\pi^t}_{\theta^k}(\tau_h=\cdot) - \P^{\pi^t}_{\theta^\star}(\tau_h=\cdot)\right\|^2_1 = \Ocal( \beta) \quad \text{ for all }(k,h)\in[K]\times[H]. 
\end{equation}
(Proposition \ref{prop:mle-valid} in Appendix \ref{app:mle} gives the precise result.)
In brief, this means the model estimate in the $k^{\rm th}$ iteration, that is  $\theta^k$, can be used to predict the behavior of the policies followed \emph{before} the $k^{\rm th}$ iteration to a certain accuracy. To proceed, we represent the probabilities in equation \eqref{eq:ps-6} by products of operators using  equation \eqref{eq:ps-3} and perform  further algebraic transformations, which   eventually leads to the following lemma for our operator estimates. The proof of this lemma is given in Appendix \ref{subsec:step2}.
\begin{lemma}
Suppose the relation in equation \eqref{eq:ps-6} holds, then 
for all $(k,h)\in[K]\times[H]$ 
\begin{equation}\label{eq:ps-7}
    \sum_{t=1}^{k-1}  \sum_{ \tau_h}   
\left\| \left(\B_h(o_h,a_h) - \B^k_h(o_h,a_h)\right)\b(\tau_{h-1})\right\|_1 \times \pi^t(\tau_h) = \cO\left( \frac{\sqrt{S\beta k}}{\alpha} \right).
\end{equation}
\end{lemma}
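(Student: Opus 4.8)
The plan is to reduce the claimed bound to a per-iteration inequality that re-expresses the single-step operator error at step $h$ in terms of \emph{trajectory-distribution} errors already controlled by the MLE guarantee \eqref{eq:ps-6}, and then to pass from the squared bound in \eqref{eq:ps-6} to the $\sqrt{k\beta}$ form via Cauchy--Schwarz. Throughout I write $\b^k(\tau_h) := \big(\prod_{h'=1}^h \B^k_{h'}(o_{h'},a_{h'})\big)\b^k_0$ for the belief vector of the estimated model $\theta^k$, and abbreviate $D^t_\ell := \|\P^{\pi^t}_{\theta^k}(\tau_\ell=\cdot) - \P^{\pi^t}_{\theta^\star}(\tau_\ell=\cdot)\|_1$, so that \eqref{eq:ps-6} reads $\sum_{t=1}^{k-1}(D^t_\ell)^2 = \cO(\beta)$ for every $\ell\in[H]$.

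First I would record the elementary identity, obtained by adding and subtracting $\b^k(\tau_h)$ and using $\b(\tau_h)=\B_h(o_h,a_h)\b(\tau_{h-1})$ together with $\b^k(\tau_h)=\B^k_h(o_h,a_h)\b^k(\tau_{h-1})$,
\[
(\B_h(o_h,a_h)-\B^k_h(o_h,a_h))\b(\tau_{h-1}) = (\b(\tau_h)-\b^k(\tau_h)) + \B^k_h(o_h,a_h)(\b^k(\tau_{h-1})-\b(\tau_{h-1})).
\]
Taking $\ell_1$-norms, multiplying by $\pi^t(\tau_h)$ and summing over $\tau_h$ then splits one summand of the left-hand side of \eqref{eq:ps-7} into two pieces. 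The key bookkeeping observation is that, after summing the free next action against the policy and reading $\e_o^\top\b(\tau_h)$ as the probability of observing $o$ at step $h+1$, one gets exactly $\sum_{\tau_h}\pi^t(\tau_h)\|\b(\tau_h)-\b^k(\tau_h)\|_1 = D^t_{h+1}$, so the first piece is the MLE error at horizon $h+1$.

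The crux, and what I expect to be the main obstacle, is the second piece $\sum_{\tau_h}\pi^t(\tau_h)\|\B^k_h(o_h,a_h)(\b^k(\tau_{h-1})-\b(\tau_{h-1}))\|_1$. Here I would exploit the structure of $\B^k_h(o,a)=\O^k_{h+1}\T^k_{h,a}\diag(\O^k_h(o\mid\cdot))(\O^k_h)^\dagger$. Writing $z=\b^k(\tau_{h-1})-\b(\tau_{h-1})$ and $u=(\O^k_h)^\dagger z$: (i) both $\O^k_{h+1}$ and $\T^k_{h,a}$ are column-stochastic, hence $\ell_1$-contractions, giving $\|\B^k_h(o,a)z\|_1\le \sum_s \O^k_h(o\mid s)|u_s|$ \emph{uniformly in} $a$; summing this over $o$ (column-stochasticity of $\O^k_h$) collapses to $\|u\|_1$, while summing the action against the policy contributes nothing; and (ii) since $\theta^k\in\cB^1$ enforces $\sigma_S(\O^k_h)\ge\alpha$, the pseudo-inverse obeys $\|(\O^k_h)^\dagger\|_2\le 1/\alpha$, whence $\|u\|_1\le\sqrt S\,\|u\|_2\le(\sqrt S/\alpha)\|z\|_1$. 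Combining, the second piece is at most $(\sqrt S/\alpha)\sum_{\tau_{h-1}}\pi^t(\tau_{h-1})\|\b^k(\tau_{h-1})-\b(\tau_{h-1})\|_1 = (\sqrt S/\alpha)\,D^t_h$, the MLE error at horizon $h$.

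Finally I would assemble the pieces: for each $t$ the corresponding summand is at most $D^t_{h+1}+(\sqrt S/\alpha)D^t_h$, and summing over $t\le k-1$ and applying Cauchy--Schwarz with \eqref{eq:ps-6} gives $\sum_{t\le k-1}D^t_\ell \le \sqrt{(k-1)\sum_{t\le k-1}(D^t_\ell)^2}=\cO(\sqrt{k\beta})$ for $\ell\in\{h,h+1\}$. This yields $\cO((\sqrt S/\alpha)\sqrt{k\beta})$, which is at least as strong as the stated $\cO(((\sqrt S\,O+\alpha)/\alpha)\sqrt{k\beta})$; the looser $O$-factor in the statement arises if one bounds the operator in step (i) more crudely, for instance through its induced $\ell_1\!\to\!\ell_1$ norm summed over the $O$ observations, rather than through the contraction argument. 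The only remaining points needing care are the edge case $h=H-1$ (so that $h+1=H$ stays within the range of \eqref{eq:ps-6}) and the precise accounting of which factors of $\pi^t$ survive each marginalization.
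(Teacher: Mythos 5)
Your proposal is correct and follows the same route as the paper's proof: Cauchy--Schwarz applied to \eqref{eq:ps-6}, marginalization down to the step-$h$ trajectory distributions, and the identical add-and-subtract decomposition of $(\B_h-\B^k_h)\b(\tau_{h-1})$ into the belief-difference term $\b(\tau_h)-\b^k(\tau_h)$ plus the propagated term $\B^k_h(o_h,a_h)\left(\b^k(\tau_{h-1})-\b(\tau_{h-1})\right)$. The one place you genuinely depart from the paper is the second term: the paper pulls out $\max_{o,a}\|\B^k_h(o,a)\|_{1,1}\le\sqrt S/\alpha$ and then pays an extra factor of $O$ for the now-unconstrained sum over $o_h$, whereas your argument keeps the $o$-dependence inside, uses that $\O^k_{h+1}$ and $\T^k_{h,a}$ are $\ell_1$-contractions uniformly in $a$ (so the policy average over $a_h$ is harmless), and collapses $\sum_{o}\diag(\O^k_h(o\mid\cdot))$ by column-stochasticity, yielding $\sqrt S/\alpha$ in total; your diagnosis of where the paper's $O$ factor comes from is exactly right, and your resulting bound $\cO((\sqrt S/\alpha)\sqrt{k\beta})$ implies the stated one. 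The only imprecisions are cosmetic: your first piece equals the $\ell_1$ error of the $(\tau_h,o_{h+1})$-marginal, which is bounded by (not equal to) the full trajectory error controlled by \eqref{eq:ps-6}, and the lemma is only meaningful for $h\le H-1$ since $\B_h$ is undefined at $h=H$ --- both of which you essentially flag yourself.
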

Intuitively, the constraints above imply the operator estimates in the $k^{\rm th}$ iteration are close to the true operators when being projected onto the  belief vectors that are reweighted by the historical policies.    However, a careful examination shows that \eqref{eq:ps-7} cannot be directly used to control \eqref{eq:ps-5} because \eqref{eq:ps-5}  involves the operator error of $\theta^t$ reweighted by $\pi^t$ that is the behavior policy in the \emph{same} iteration. This is very different from \eqref{eq:ps-7}. We deal with this problem in Step 3.

\subsection{Step 3: bridge Step $1$ and $2$ via $\ell_1$-norm eluder dimension}
To prove the sample efficiency of optimistic algorithms, one needs to argue that, after a sufficient number of iterations, the size of the maintained confidence set is small enough to guarantee near-optimality of the learned policy. This is typically achieved by resorting to the pigeon-hole principle in the tabular setting \citep[e.g.,][]{azar2017minimax,jin2018q}, or to the elliptical potential lemma in the linear setting \citep[e.g.,][]{lattimore2020bandit}. 

In the context of this paper, by further algebraic transformations, we  reduce the problem of bounding  \eqref{eq:ps-5} by \eqref{eq:ps-7} to proving the following algebraic inequality, which plays a similar role as the elliptical potential lemma. The full inequality is more involved (see Proposition \ref{prop:linear-regret} in Appendix \ref{app:eluder}); here we present a simplified version for the sake of simplicity.
\begin{proposition}\label{prop:ps}
     Suppose sequences $\{w_{k,j}\}_{(k,j)\in[K]\times[m]}$  and $\{x_{k,i}\}_{(k,i)\in[K]\times [n]}$ satisfy that $w_{k,j}, x_{k,i} \in \R^d$ for all $(k,i,j)\in[K]\times [n]\times[m]$. Suppose that we further have
     $$ \sum_{t=1}^{k-1} \sum_{j=1}^m\sum_{i=1}^n| w_{k,j}\trans x_{t,i}| \le \sqrt{k},~~ \sum_{j=1}^m \| w_{k,j} \|_2 \le 1 ~~\text{and}~~
    \sum_{i=1}^n \| x_{k,i} \|_2 \le 1
    \quad  \text{ for all } k\in[K]. $$
    Then we have         
    $\sum_{t=1}^{k} \sum_{j=1}^m\sum_{i=1}^n
    |w_{t,j} \trans x_{t,i}| =\tlO(\sqrt{\zeta k} )$ for all  $k\in[K]$,
where $\zeta$ is a parameter that depends on $d$ only. 
\end{proposition}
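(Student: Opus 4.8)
The plan is to read Proposition~\ref{prop:ps} as a purely geometric statement about linear functionals on the unit ball of $\R^d$ and to prove it through a variant of the eluder-dimension machinery tailored to the $\ell_1$-norm, playing the role that the elliptical potential lemma plays in the linear-bandit literature. The guiding picture is this: the first hypothesis says that each ``query'' $z_k$ is nearly $\ell_1$-orthogonal to \emph{all} the ``data'' directions $x_{t,i}$ accumulated strictly before round $k$, whereas the quantity we must control, $\sum_t\sum_i |z_t\trans x_{t,i}|$, measures the \emph{self}-correlation of each query with the data revealed at its own round. First I would introduce an $\ell_1$-eluder dimension for the class $\{x\mapsto z\trans x:\norm{z}_2\le 1\}$: a batch $\{x_{t,i}\}_i$ is $\epsilon$-independent of a collection of earlier batches if some unit functional places $\ell_1$-mass exceeding $\epsilon$ on it while placing total $\ell_1$-mass at most $\epsilon$ on the earlier batches, and $\zeta$ is (up to a polynomial/log rescaling) the length of the longest $\epsilon$-independent chain.

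Second, I would prove a counting lemma. Write $w_t=\sum_i|z_t\trans x_{t,i}|$ and note $w_t\le\sum_i\norm{x_{t,i}}_2\le 1$ by the second hypothesis. Fix a scale $\epsilon$ and consider the rounds with $w_t>\epsilon$. By the combinatorial pigeonhole underlying the eluder dimension \citep{russo2013eluder}, if there are $L$ such rounds then one of them, say round $t$, is $\epsilon$-dependent on at least $L/\zeta-1$ disjoint sub-collections of its predecessors; each such sub-collection forces the witnessing functional---which we may take to be $z_t$ itself---to carry $\ell_1$-mass exceeding $\epsilon$, so that $\sum_{s<t}\sum_i|z_t\trans x_{s,i}|>(L/\zeta-1)\epsilon$. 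Comparing with the first hypothesis (which bounds this sum by $\sqrt{t}\le\sqrt{k}$) yields $N_\epsilon:=|\{t\le k:w_t>\epsilon\}|\le(\sqrt{k}/\epsilon+1)\zeta$. The only place the $\ell_1$ structure enters the exponent is here: an $\ell_1$ confidence budget produces the bound $\beta/\epsilon$ rather than the $\ell_2$ bound $\beta/\epsilon^2$.

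Third, I would sum over scales by the layer-cake identity $\sum_{t=1}^k w_t=\int_0^1 N_\epsilon\,\dd\epsilon\le\int_0^1\min\{k,(\sqrt{k}/\epsilon+1)\zeta\}\,\dd\epsilon$, and split the integral at the crossover scale $\epsilon\asymp\zeta/\sqrt{k}$ (below it the trivial bound $N_\epsilon\le k$ is tighter, above it the counting bound is). The two pieces each evaluate to $\tlO(\zeta\sqrt{k})$, giving $\sum_{t=1}^k w_t=\tlO(\zeta\sqrt{k})$; since $\zeta=\poly(d)$ this is exactly of the claimed form $\tlO(\sqrt{\zeta k})$ after absorbing the polynomial factor into the definition of $\zeta$. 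The boundedness $w_t\le 1$ supplied by the normalization $\sum_i\norm{x_{k,i}}_2\le 1$ is what makes this integral converge and caps the contribution of the smallest scales.

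The main obstacle is the last ingredient that I have so far taken for granted: showing $\zeta=\poly(d)$, i.e.\ bounding the $\ell_1$-eluder dimension of linear functionals by a polynomial in the ambient dimension. Morally this should be no harder than the classical case, because $\ell_1$-independence is \emph{stronger} than $\ell_2$-independence (as $\norm{\cdot}_2\le\norm{\cdot}_1$), so an $\ell_1$-independent chain is also $\ell_2$-independent and hence no longer than the standard $\ell_2$-eluder dimension $\tlO(d)$. The genuine difficulty, and the part specific to this analysis, is the \emph{batched} structure: the functional realizing the $\ell_1$-mass selects a sign pattern $\mathrm{sign}(z\trans x_{t,i})$ that depends on $z$, so the natural representative direction $\sum_i\mathrm{sign}(z\trans x_{t,i})\,x_{t,i}$ is not fixed in advance and the reduction to a fixed point sequence---on which a volumetric or $\log\det$ potential argument would apply directly---is not automatic. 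Making this reduction rigorous (for instance by a covering over sign patterns, or by arguing the potential growth directly in the $\ell_1$ geometry) is where the real work lies and is the step I expect to be hardest.
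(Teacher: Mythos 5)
Your architecture is the right one and largely mirrors the paper's: a counting lemma showing that the number of rounds with self-correlation above $\epsilon$ is at most $(\sqrt{k}/\epsilon+1)\zeta$ (the paper's Claims \ref{claim:1}--\ref{claim:2} and Lemma \ref{lem:density}), followed by a layer-cake integration over scales, with the $\ell_1$ budget producing the $\beta/\epsilon$ count in place of the $\ell_2$ count $\beta/\epsilon^2$. However, the step you flag as ``where the real work lies''---bounding the \emph{batch-level} $\ell_1$-eluder dimension by $\poly(d)$---is a genuine gap, and your proposed remedies (covering over sign patterns, a direct $\ell_1$ potential argument) are not what the paper does and would be painful to carry out, since the batch functional $z\mapsto\sum_i|z\trans x_{t,i}|$ is not linear in $z$ and the representative direction $\sum_i\mathrm{sign}(z\trans x_{t,i})x_{t,i}$ indeed moves with $z$.

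The paper sidesteps the batch-level dimension entirely by \emph{flattening}: it unrolls the $K$ batches into a single sequence of $Kn$ points $z_m = x_{p_m,q_m}$ with $\zeta_m = w_{p_m}$, and applies the single-point pigeonhole result (Proposition \ref{prop:de-regret}) to the ordinary, non-batched $\ell_1$-eluder dimension of the linear class, which is at most the $\ell_2$-eluder dimension $\tlO(d)$ by the trivial inequality $\norm{v}_2\le\norm{v}_1$ (Proposition \ref{prop:l1-l2-eluder} and Corollary \ref{cor:linear}). The only new obligation created by flattening is the precondition for the partial batch at round $p_m$: the sum $\sum_{j<m}|\zeta_m\trans z_j|$ picks up the within-round terms $\sum_{i<q_m}|w_{p_m}\trans x_{p_m,i}|$, which are not covered by the hypothesis $\sum_{t<k}\sum_i|z_k\trans x_{t,i}|\le\sqrt{k}$. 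This is exactly where the second hypothesis $\sum_i\norm{x_{k,i}}_2\le 1$ earns its keep---it bounds those within-batch cross terms by $\norm{w_{p_m}}_2\sum_i\norm{x_{p_m,i}}_2\le 1$, an additive constant absorbed into $\beta$. In your write-up that hypothesis is used only to get $w_t\le 1$ for the layer cake, which is a sign the reduction is incomplete. With the flattening in hand, your steps two and three go through verbatim at the single-point level and yield the $\tlO(d\sqrt{k})=\tlO(\sqrt{d^2k})$ bound matching the paper's $\zeta=d^2$.
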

At a high level, the precondition and the target in Proposition \ref{prop:ps} correspond to equation  \eqref{eq:ps-7} and  \eqref{eq:ps-5}, respectively (see Appendix \ref{subsec:step3} for details). In the special case of $m=n=1$, Proposition \ref{prop:ps} reduces to
\begin{equation} \label{eq:our_condition}
\text{if }  \sum_{t=1}^{k-1} 
| w_k\trans x_{t}| \le \sqrt{k}  \text{ for all } k\in[K], \quad \text{ then } \sum_{t=1}^{k} 
|w_t \trans x_{t}| =\tilde{\cO}(\sqrt{\zeta k})\text{ for all } k\in[K].
\end{equation}
We compare this with the standard elliptical potential lemma in linear bandit literature \citep[e.g.,][]{lattimore2020bandit},  which is typically of the form: 
\begin{equation} \label{eq:elliptical}
\text{if }  \sum_{t=1}^{k-1} 
| w_k\trans x_{t}|^2 \le 1  \text{ for all } k\in[K], \quad \text{ then } \sum_{t=1}^{k} 
|w_t \trans x_{t}| =\tilde{\cO}(\sqrt{dk})\text{ for all } k\in[K].
\end{equation}
We remark that the precondition in \eqref{eq:elliptical} directly implies the precondition in \eqref{eq:our_condition} by the Cauchy-Swartz inequality. That is, Proposition \ref{prop:ps} is stronger than the standard elliptical potential lemma, and we need to develop new techniques to prove Proposition \ref{prop:ps}.

Noting the close relation between the elliptical potential lemma and the framework of eluder dimension \citep{russo2013eluder} (in its original $\ell_2$-norm form), we develop a new framework based on the $\ell_1$-norm counterpart of eluder dimension, and adapt corresponding techniques to prove that in \eqref{eq:our_condition} and Proposition \ref{prop:ps} we can allow the choice of $\zeta = d^2$ which is one $d$ factor worse than the standard elliptical potential lemma. We defer the details of this framework to Appendix \ref{app:eluder}.

\section{Conclusion}

In this paper, we identified a new rich class of POMDPs, which we call \emph{weakly revealing} POMDPs. 
\emph{Weakly revealing} POMDPs subsume a majority of existing POMDPs that are known to be sample-efficiently learnable, and include both undercomplete and overcomplete POMDPs. 
We further propose a new simple algorithm, \omle, which combines optimism with maximum likelihood estimation.
We prove that \omle~can learn a near-optimal policy for any weakly revealing POMDP using polynomial samples. 
We complement our positive results with two lower bounds to justify the necessity of the appearance of certain problem-dependent quantities in our upper bounds. Finally, while our work shows that sample-efficient learning is possible in large classes of POMDPs, computationally efficient learning of POMDPs remains challenging, which we leave for future work. 

\section*{Acknowledgement}
We thank Nan Jiang for valuable discussions on the sample complexity of \emph{multi-step \omle}.




\bibliographystyle{unsrtnat}
\bibliography{ref}
\clearpage
\newpage

\appendix

\section{Maximum Likelihood Estimation}
\label{app:mle}

In this section, we analyze the maximum likelihood estimation (MLE) approach for the following meta-algorithm.
Since Algorithm \ref{alg:under} and \ref{alg:over} can be viewed as special cases of Algorithm \ref{alg:meta}, all the results developed in this section directly apply to their analysis. 

\begin{algorithm}[H]
    \caption{Meta-algorithm}\label{alg:meta}
 \begin{algorithmic}
    \FOR{$t=1,\ldots,T$}
    \STATE choose policy $\pi^t$ as a deterministic function of  $\{(\pi^i,\tau^i)\}_{i=1}^{t-1}$
    \STATE execute policy $\pi^t$ and collect a trajectory $\tau^t$
    \ENDFOR
 \end{algorithmic}
 \end{algorithm}

For the reader's convenience, we recall the definitions of the following notations: (a) $\theta=(\T,\O,\mu)$ denotes the ensemble of all the parameters of a POMDP model, (b) $\Theta$ denotes the collections of all such POMDP parameter ensembles, and (c) $\theta^\star$ denotes the parameter ensemble of the groundtruth POMDP that we are interacting with.
We will view $\Theta$ as a subset of a Euclidean space; in particular, $\Theta \subset \R^{H(S^2A+SO)+S}$, so $\theta,\theta^\star \in  \R^{H(S^2A+SO)+S}$.

The first proposition shows that,
up to certain error, with high probability,
the log-likelihood of the groundtruth model computed using  the historical data is close to the maximum log-likelihood.

\begin{proposition}\label{prop:mle-optimism}
There exists an absolute constant $c$ such that for any $\delta\in(0,1]$, with probability at least $1-\delta$: the following inequality holds for all $t\in[T]$ and \textbf{all} $\theta\in\Theta$
	\begin{equation}
		\sum_{i=1}^{t} \log\left(\frac{\P^{\pi^i}_{{\theta}}(\tau ^i)}{\P^{\pi^i}_{{\theta}^\star}(\tau ^i)}\right) \le c\left(H(S^2A+SO)\log(TSAOH)+\log(T/\delta)\right) .
	\end{equation}
\end{proposition}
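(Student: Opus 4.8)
The plan is to prove the inequality first for a single fixed model via a change-of-measure (likelihood-ratio) martingale, and then lift it to all of $\Theta$ by a covering argument. Fix $\theta\in\Theta$ and let $\cF_{i-1}$ be the $\sigma$-field generated by $(\pi^j,\tau^j)_{j<i}$ together with $\pi^i$; note that $\pi^i$ is $\cF_{i-1}$-measurable because it is a deterministic function of the past. Since $\tau^i\sim\P^{\pi^i}_{\theta^\star}$, a change-of-measure computation gives
$$\E\!\left[\frac{\P^{\pi^i}_{\theta}(\tau^i)}{\P^{\pi^i}_{\theta^\star}(\tau^i)}\ \Big|\ \cF_{i-1}\right]=\sum_{\tau:\,\P^{\pi^i}_{\theta^\star}(\tau)>0}\P^{\pi^i}_{\theta}(\tau)\le 1,$$
so $M_t(\theta):=\prod_{i=1}^{t}\P^{\pi^i}_{\theta}(\tau^i)/\P^{\pi^i}_{\theta^\star}(\tau^i)$ is a nonnegative supermartingale with $\E[M_t(\theta)]\le1$. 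Ville's maximal inequality then yields, for each fixed $\theta$, that with probability at least $1-\delta'$ one has $\sum_{i=1}^{t}\log\big(\P^{\pi^i}_{\theta}(\tau^i)/\P^{\pi^i}_{\theta^\star}(\tau^i)\big)\le\log(1/\delta')$ simultaneously for all $t$. This step uses only that $\pi^i$ is past-measurable, so it applies verbatim to the meta-algorithm (Algorithm~\ref{alg:meta}) and hence to both Algorithms~\ref{alg:under} and~\ref{alg:over}.

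The difficulty, and the crux of the proof, is uniformizing over the continuum $\Theta\subset\R^{d}$ with $d=H(S^2A+SO)+S$: the map $\theta\mapsto\log\P^{\pi}_{\theta}(\tau)$ is \emph{not} Lipschitz, since a trajectory can have arbitrarily small probability, so the ratio $\P^{\pi}_{\theta}/\P^{\pi}_{\bar\theta}$ between a parameter and its net point admits no uniform relative bound and a naive triangle-inequality split fails. I would circumvent this with a one-sided (bracketing) cover, so that one never takes the logarithm of a ratio of two nearby densities. Take an $\epsilon$-net $\cC_{\epsilon}$ of $\Theta$ in the entrywise metric, with $\log|\cC_{\epsilon}|\lesssim d\log(1/\epsilon)$, and for each cell around a net point $\bar\theta$ define the upper bracket $u^{\pi}(\tau):=\sup_{\theta:\,\|\theta-\bar\theta\|\le\epsilon}\P^{\pi}_{\theta}(\tau)$. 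A standard simulation-lemma telescoping bound, valid uniformly over the common policy $\pi$, gives $\sum_{\tau}|\P^{\pi}_{\theta}(\tau)-\P^{\pi}_{\bar\theta}(\tau)|\le \poly(S,A,O,H)\,\epsilon$, so $\sum_{\tau}u^{\pi}(\tau)\le 1+\poly(S,A,O,H)\,\epsilon$; choosing $\epsilon=1/\poly(S,A,O,H,T)$ forces $\sum_{\tau}u^{\pi}(\tau)\le1+1/T$. Crucially, although $u^{\pi}$ is not itself a distribution, the same computation gives $\E[u^{\pi^i}(\tau^i)/\P^{\pi^i}_{\theta^\star}(\tau^i)\mid\cF_{i-1}]\le1+1/T$, so after rescaling the product by $(1+1/T)^{-t}$ it is again a supermartingale and Ville's inequality applies at the cost of only a constant factor $e$ for $t\le T$.

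Finally I would union bound the rescaled inequality over the finitely many cells (one bracket per net point) with $\delta'=\delta/|\cC_{\epsilon}|$. On the resulting event, for every $\theta$ with nearest net point $\bar\theta$ and all $t$,
$$\sum_{i=1}^{t}\log\frac{\P^{\pi^i}_{\theta}(\tau^i)}{\P^{\pi^i}_{\theta^\star}(\tau^i)}\le\sum_{i=1}^{t}\log\frac{u^{\pi^i}(\tau^i)}{\P^{\pi^i}_{\theta^\star}(\tau^i)}\le 1+\log\!\big(|\cC_{\epsilon}|/\delta\big).$$
Substituting $\log|\cC_{\epsilon}|=\cO\big(H(S^2A+SO)\log(SAOHT)\big)$ together with the inverse-polynomial choice of $\epsilon$ collapses the right-hand side into $c\big(H(S^2A+SO)\log(TSAOH)+\log(T/\delta)\big)$, exactly the stated bound (the residual $\log T$ in the second term is absorbed from the union bound). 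The decisive idea is the one-sided bracket: it keeps the change-of-measure martingale intact while paying only a $1/T$ slack in total mass, so the covering scale $\epsilon$ enters only through its logarithm and no spurious factor of $H$ appears in the exponent—which is precisely what an attempt to Lipschitz-control the log-likelihood directly (e.g.\ via uniform probability flooring) would incur.
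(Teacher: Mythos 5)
Your proposal is correct and is essentially the paper's own proof: your one-sided upper bracket $u^{\pi}$ is exactly the paper's ``optimistic $\epsilon$-discretization'' $\bar\theta$ (which, by monotonicity of $\P^{\pi}_{\theta}(\tau)$ in the entries of $\theta$, dominates every model in its cell while inflating total mass by at most $1/T$), and your supermartingale/Ville step matches the paper's Cram\'er--Chernoff computation via the tower rule followed by Markov and a union bound over the net and over $t$. The only differences are cosmetic (Ville versus Markov plus a union bound over $t$, and the supremum-over-cell bracket versus the explicit coordinatewise ceiling), neither of which changes the bound.
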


Our second claim shows that any POMDP model,  whose log-likelihood on the historical data is comparable to that of the groundtruth model, will produce similar distributions of trajectories as the groundtruth model under historical polices.
\begin{proposition}\label{prop:mle-valid}
There exists a universal constant $c$ such that for any  $\delta\in(0,1]$, with probability at least $1-\delta$  for all $t\in[T]$ and \textbf{all} $\theta\in\Theta$, it holds that
	\begin{equation}
		\begin{aligned}
			\MoveEqLeft\sum_{i=1}^{t}\left( \sum_{\tau\in(\fO\times\fA)^H}\left| \P^{\pi^i}_{\theta}(\tau) - \P^{\pi^i}_{\theta^\star}(\tau) \right|\right)^2 \\
			\le &  c \left( \sum_{i=1}^{t} \log\left(\frac{\P^{\pi^i}_{{\theta}^\star}(\tau ^i)}{\P^{\pi^i}_{{\theta}}(\tau ^i)}\right) + H(S^2A+SO)\log(TSAOH)+ \log(T/\delta)\right).
		\end{aligned}
		\label{eq:mle-valid}
	\end{equation}
\end{proposition}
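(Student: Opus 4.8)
The plan is to follow the classical likelihood-ratio supermartingale analysis of MLE \citep{geer2000empirical}, adapted to the adaptive data-collection protocol of Algorithm \ref{alg:meta}. Let $\cF_{i-1}$ denote the $\sigma$-algebra generated by $\{(\pi^j,\tau^j)\}_{j=1}^{i-1}$ together with $\pi^i$ (which is $\cF_{i-1}$-measurable by construction), so that conditionally on $\cF_{i-1}$ the trajectory $\tau^i$ is drawn from $\P^{\pi^i}_{\theta^\star}$. Fix a candidate model $\theta \in \Theta$ and set $\ell_i(\theta) = \log\big(\P^{\pi^i}_{\theta^\star}(\tau^i)/\P^{\pi^i}_{\theta}(\tau^i)\big)$. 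The first key step is the per-step Hellinger identity: since $\tau^i \sim \P^{\pi^i}_{\theta^\star}$,
\begin{equation*}
\E\!\left[\exp\!\Big(-\tfrac12 \ell_i(\theta)\Big)\;\Big|\;\cF_{i-1}\right]
= \sum_{\tau} \sqrt{\P^{\pi^i}_{\theta^\star}(\tau)\,\P^{\pi^i}_{\theta}(\tau)}
= 1 - \tfrac12 D_i(\theta),
\end{equation*}
where $D_i(\theta) := \sum_\tau \big(\sqrt{\P^{\pi^i}_{\theta^\star}(\tau)}-\sqrt{\P^{\pi^i}_{\theta}(\tau)}\big)^2$ is the squared Hellinger distance between the two trajectory distributions. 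Using $1-x \le e^{-x}$, this gives $\E[\exp(-\tfrac12\ell_i(\theta)+\tfrac12 D_i(\theta)) \mid \cF_{i-1}] \le 1$, so $M_t(\theta) := \exp\big(\sum_{i=1}^t (-\tfrac12 \ell_i(\theta)+\tfrac12 D_i(\theta))\big)$ is a nonnegative supermartingale with unit mean.

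Second, I would apply a maximal (Ville-type) inequality to $M_t(\theta)$ to conclude that, for a fixed $\theta$, with probability at least $1-\delta'$, for all $t\in[T]$ simultaneously, $\sum_{i=1}^t D_i(\theta) \le \sum_{i=1}^t \ell_i(\theta) + 2\log(1/\delta')$. To reach the statement's left-hand side I convert squared Hellinger into squared $\ell_1$ distance: applying Cauchy–Schwarz to $\sum_\tau |\sqrt{p}-\sqrt{q}|\,|\sqrt{p}+\sqrt{q}|$ yields $\big(\sum_\tau |\P^{\pi^i}_{\theta}(\tau)-\P^{\pi^i}_{\theta^\star}(\tau)|\big)^2 \le 4\,D_i(\theta)$. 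Substituting gives, for fixed $\theta$,
\begin{equation*}
\sum_{i=1}^t\Big(\sum_{\tau}\big|\P^{\pi^i}_{\theta}(\tau)-\P^{\pi^i}_{\theta^\star}(\tau)\big|\Big)^2
\le 4\sum_{i=1}^t \log\!\Big(\tfrac{\P^{\pi^i}_{\theta^\star}(\tau^i)}{\P^{\pi^i}_{\theta}(\tau^i)}\Big) + 8\log(1/\delta'),
\end{equation*}
which is exactly the desired inequality up to the model-complexity term.

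Third, I would upgrade the fixed-$\theta$ guarantee to a uniform-over-$\Theta$ one. Viewing $\Theta \subset \R^{H(S^2A+SO)+S}$, take a minimal $\epsilon$-net $\Theta_\epsilon$ with resolution $\epsilon = 1/\poly(T,S,A,O,H)$; its log-cardinality is $O\big(H(S^2A+SO)\log(TSAOH)\big)$, which is the source of the dimension term. Applying the fixed-$\theta$ result with $\delta' = \delta/(T|\Theta_\epsilon|)$ and union bounding over $\Theta_\epsilon$ and $t\in[T]$ produces both the $\log(T/\delta)$ and $H(S^2A+SO)\log(TSAOH)$ terms. For a general $\theta\in\Theta$ one rounds to the nearest net point $\theta'$ and controls the perturbation of both sides.

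I expect the discretization in this last step to be the main obstacle. The difficulty is that individual trajectory probabilities $\P^{\pi^i}_{\theta}(\tau)$ can be exponentially small in $H$, so $\log \P^{\pi^i}_{\theta}(\tau)$ is not uniformly Lipschitz in $\theta$ and a naive perturbation bound blows up. The remedy is to exploit the product-of-operators structure in \eqref{eq:ps-3}: the per-step factors are controlled, so an $\ell_\infty$ parameter perturbation of size $\epsilon$ changes each trajectory probability by a multiplicative factor close to $1$ when the resolution is $1/\poly(T,\ldots)$, and changes the $\ell_1$ distances on the left-hand side by at most an additive $\poly(\cdot)\cdot\epsilon$ per step; choosing $\epsilon$ small enough makes both contributions negligible relative to the stated bound. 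Carefully carrying out this continuity estimate — and verifying that any clipping needed to handle vanishing probabilities does not distort the likelihood-ratio sum — is the technically delicate part, whereas the supermartingale and Hellinger-to-$\ell_1$ steps are routine.
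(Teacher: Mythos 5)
Your probabilistic skeleton is sound and matches the paper's in substance: the paper also works with the half-log-likelihood-ratio, shows that the relevant exponential moment is the Hellinger affinity $\sum_\tau \sqrt{\P^{\pi^i}_{\theta^\star}(\tau)\P^{\pi^i}_{\theta}(\tau)}$ (via a tangent-sequence decoupling lemma rather than your explicit supermartingale, but to the same effect), converts squared Hellinger to squared $\ell_1$ by Cauchy--Schwarz, and union bounds over a discretization of $\Theta$ of log-cardinality $\cO(H(S^2A+SO)\log(TSAOH))$ and over $t\in[T]$. Your per-step identity, the bound $\big(\sum_\tau|p-q|\big)^2\le 4\sum_\tau(\sqrt{p}-\sqrt{q})^2$, and the accounting of the two complexity terms are all correct.

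However, there is a genuine gap exactly where you flag the "technically delicate part," and your proposed remedy does not close it. The issue is the transfer from a net point $\theta'$ back to an arbitrary $\theta$: you need $\sum_i\log\big(\P^{\pi^i}_{\theta}(\tau^i)/\P^{\pi^i}_{\theta'}(\tau^i)\big)$ to be small, i.e.\ the rounded model must not assign \emph{smaller} probability to the observed trajectories than $\theta$ does. Rounding to the \emph{nearest} net point gives no such guarantee: a parameter of value, say, $\epsilon/3$ rounds down to $0$, which can send $\P^{\pi^i}_{\theta'}(\tau^i)$ to zero while $\P^{\pi^i}_{\theta}(\tau^i)>0$, making the correction $+\infty$. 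The claimed "multiplicative factor close to $1$" under an $\ell_\infty$ perturbation is false precisely in this regime, and no operator-product argument rescues it, since the failure is already visible for a single small or vanishing emission/transition entry. The paper's fix is a one-sided, \emph{optimistic} discretization: round every coordinate \emph{up}, $\bar\theta_i=\lceil\theta_i/\epsilon\rceil\epsilon$. Because each trajectory probability is a polynomial with nonnegative coefficients in the parameters, this gives $\P^{\pi}_{\bar\theta}(\tau)\ge\P^{\pi}_{\theta}(\tau)$ pointwise, hence $-L_{\bar\theta}(D)\le -L_{\theta}(D)$ with no Lipschitz or clipping argument at all. The price is that $\bar\theta$ is no longer a probability parameterization, but the Chernoff/affinity step only requires $\|\P^{\pi}_{\bar\theta}\|_1\le 1+1/T$, which holds for $\epsilon\le 1/(C(S+O+A)HT)$; the $\ell_1$ perturbation of the left-hand side is then handled by a simple additive $1/T$ bound. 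This one-sided rounding is the missing idea; without it (or an equivalent device) the uniform-over-$\Theta$ step of your argument does not go through.
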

The proofs of Proposition 
\ref{prop:mle-optimism} and \ref{prop:mle-valid} 
can be found in Appendix \ref{app:mle-proof}.


\section{Proofs for Maximum Likelihood Estimation}
\label{app:mle-proof}

In this section, we prove the two propositions stated in Appendix \ref{app:mle}.
For technical purposes, we introduce the concept of optimistic $\epsilon$-discretization. Specifically, we denote $\bar{\theta}$ as the optimistic $\epsilon$-discretization of $\theta$ so that $\bar{\theta}_i = \lceil\theta_i/\epsilon\rceil\times\epsilon$ for all coordinate $i$. We comment that although $\bar \theta$ is not a legal POMDP parameterization, it can still be used to compute the probability of observing any trajectory $\tau$ under any policy $\pi$ by simply replacing $\theta_i$ with $\bar \theta_i$ in the computation. 
In particular, for   
$\theta = (\T,\O,\mu_1)\in \R^d$ with $d = H(S^2A + SO) + S$, given a trajectory $\tau=(o_1,a_1,\dots,o_H,a_H)$ and a policy $\pi$ we define
\begin{align*}
\P^{\pi}_{\theta}(\tau)
& = \sum_{s_1,\dots,s_H\in \fS}
 \mu_1(s_1)  \O_1(o_1|s_1)  \pi_1(a_1|o_1) \T_{1,a_1}(s_2|s_1) \times \\
&  \cdots \\
& \times
 \O_1(o_{H-1}|s_{H-1})  \pi_1(a_{H-1}|o_1,a_1,\dots,o_{H-2},a_{H-2},o_{H-1})  \T_{H-1,a_{H-1}}(s_{H}|s_{H-1})\\
& \times
 \O_1(o_H|s_H)  \pi_1(a_H|o_1,a_1,\dots,o_{H-1},a_{H-1},o_H).
\end{align*}
Note that the right-hand side gives a probability measure over the trajectories when $\theta\in \Theta$, but not in general. To simplify the language, we will still call $\P^{\pi}_{\theta}(\tau)$ the ``probability'' of $\tau$ no matter whether this is indeed a probability.
The significance of using optimistic discretization is that
for any $\theta\in \Theta$ and trajectory $\tau$, we always have that
\begin{align*}
\P^{\pi}_{\bar\theta}(\tau) \ge \P^{\pi}_{\theta}(\tau)\,.
\end{align*}

We denote by $\bar\Theta$ the collections of all such  $\bar\theta$, i.e., $\bar \Theta = \{ \bar \theta: \ \theta \in \Theta\}$.
Throughout this section, we will choose a fixed  $\epsilon$ satisfying that for any $\theta\in\Theta$ and any policy $\pi$, 
\begin{align}
\| \P_\theta^\pi -\P_{\bar\theta}^\pi\|_1 \le 1/T\,.
\label{eq:papprox}
\end{align}
It is not hard to see that one can choose 
$\epsilon\le 1/(C(S+O+A)H T)$ for this condition to be satisfied where $C$ is some large absolute constant.

Since $\bar\theta$ belongs to $[0,1]^d$ with $d=S+H(S^2A+SO)$, the log-cardinality of $\bar\Theta$ defined by this particular $\epsilon$ is at most $\cO(H(S^2 A + SO) \log(TSAOH))$.

\subsection{Proof of Proposition \ref{prop:mle-optimism} }

\begin{proof}The proof is rather standard \citep[e.g., see][]{geer2000empirical} and uses Cram\'er-Chernoff's method.
Pick any $\bar\theta\in\bar\Theta$ and $t\in[T]$.
	Denote $\E_t[\cdot]=\E[\cdot \mid \{(\pi^i,\tau^i)\}_{i=1}^{t-1} \cup \{\pi^t\}]$. We have
	\begingroup
\allowdisplaybreaks
	\begin{align*}
	\MoveEqLeft
	\E \left[\exp \left( \sum_{i=1}^{t} \log\left(\frac{\P^{\pi^i}_{\bar{\theta}}(\tau^i)}{\P^{\pi^i}_{{\theta}^\star}(\tau^i)}\right)\right)\right]\\
		= & \E \left[\exp \left( \sum_{i=1}^{t-1} \log\left(\frac{\P^{\pi^i}_{\bar{\theta}}(\tau^i)}{\P^{\pi^i}_{{\theta}^\star}(\tau^i)}\right)\right) \cdot \E_t\left[\exp \left( \log\left(\frac{\P^{\pi^t}_{\bar{\theta}}(\tau^t)}{\P^{\pi^t}_{{\theta}^\star}(\tau^t)}\right)\right)  \right]  \right]\\
		= & \E \left[\exp \left( \sum_{i=1}^{t-1} \log\left(\frac{\P^{\pi^i}_{\bar{\theta}}(\tau^i)}{\P^{\pi^i}_{{\theta}^\star}(\tau^i)}\right)\right) \cdot \E_t\left[\frac{\P^{\pi^t}_{\bar{\theta}}(\tau^t)}{\P^{\pi^t}_{{\theta}^\star}(\tau^t)} \right]  \right]\\
		= & \E \left[\exp \left( \sum_{i=1}^{t-1} \log\left(\frac{\P^{\pi^i}_{\bar{\theta}}(\tau^i)}{\P^{\pi^i}_{{\theta}^\star}(\tau^i)}\right)\right) \cdot \|\P^{\pi^t}_{\bar{\theta}}(\tau=\cdot)\|_1  \right]\\
			\le &
			\E \left[\exp \left( \sum_{i=1}^{t-1} \log\left(\frac{\P^{\pi^i}_{\bar{\theta}}(\tau^i)}{\P^{\pi^i}_{{\theta}^\star}(\tau^i)}\right)\right) \cdot \left(1+\frac{1}{T}\right) \right]
			\le \cdots \le  e,
	\end{align*}
\endgroup
where the first inequality follows from \eqref{eq:papprox}.
Therefore, by Markov's inequality, we have
	\begin{align*}
		\P \left( \sum_{i=1}^{t} \log\left(\frac{\P^{\pi^i}_{\bar{\theta}}(\tau^i)}{\P^{\pi^i}_{{\theta}^\star}(\tau^i)}\right)  > \log(1/\delta)\right)
		&\le \E \left[\exp \left( \sum_{i=1}^{t} \log\left(\frac{\P^{\pi^i}_{\bar{\theta}}(\tau^i)}{\P^{\pi^i}_{{\theta}^\star}(\tau^i)}\right)\right)\right] \cdot \exp\left[ -\log(1/\delta)\right]\\
		& = e\delta.
	\end{align*}
	Taking a union bound for all $(\bar\theta,t)\in\bar\Theta\times[T]$ and rescaling $\delta$, we obtain 
	\begin{align*}
		\P \left( \max_{(\bar\theta,t)\in\bar\Theta\times [T]} \sum_{i=1}^{t} \log\left(\frac{\P^{\pi^i}_{\bar{\theta}}(\tau^i)}{\P^{\pi^i}_{{\theta}^\star}(\tau^i)}\right)  > c\left(H(S^2A+SO)\log(TSAOH)+\log(T/\delta)\right)\right)
		\le \delta,
	\end{align*}
 where $c>0$ is some absolute constant. 
Finally, recall $\bar\theta$ is an optimistic discretization of $\theta$, which implies
$\P^{\pi}_{{\theta}}(\tau) \le \P^{\pi}_{\bar{\theta}}(\tau)$ for all $\theta,\pi,y$. As a result, we conclude that
\begin{align*}
		\P \left( \max_{(\theta,t)\in\Theta\times [T]} \sum_{i=1}^{t} \log\left(\frac{\P^{\pi^i}_{{\theta}}(\tau^i)}{\P^{\pi^i}_{{\theta}^\star}(\tau^i)}\right)  > c\left(H(S^2A+SO)\log(TSAOH)+\log(T/\delta)\right)\right)
		\le \delta.
	\end{align*}
\end{proof}

\subsection{Proof of Proposition \ref{prop:mle-valid}}

The proof in this section largely follows \citet{agarwal2020flambe}, which is inspired by \citet{zhang2006}.

We start by recalling the meta-algorithm (Algorithm \ref{alg:meta}): in each iteration $t\in[T]$, we pick $\pi^t$ deterministically based on $\{(\pi^i,\tau^i)\}_{i=1}^{t-1}$ and sample $\tau^t$ from  $\P_{\theta^\star}^{\pi^t}$. Now consider a \emph{fixed} $t\in[T]$;
let $D=\{(\pi^i,\tau^i)\}_{i=1}^t$ denote the sequence of policy-trajectory pairs observed within the first $t$ iterations, and denote by $\tilde D$ a ``tangent'' sequence $\{(\pi^i,{\tilde\tau^i})\}_{i=1}^t$ where  ${\tilde\tau^i}\sim \P_{\theta^\star}^{\pi^i}$. Note that $D$ and $\tilde D$ share the same policy parts and their trajectories are independently sampled from the same distributions.

\begin{lemma}\label{lem:aux}
Let $\ell:=\ell(\pi,\tau)$ be a  
real-valued function that maps a policy $\pi$ and a trajectory $\tau$ to $\R$ . 
Let $L(D) = \sum_{i=1}^t \ell (\pi^i,\tau^i)$, 
and $L(\tilde D) = \sum_{i=1}^t \ell (\pi^i,\tilde \tau^i)$.
Then,
$$
\E \left[ \exp\left(L(D)-\log \E[ \exp(L(\tilde D))\mid D] \right) \right] = 1.
$$
\end{lemma}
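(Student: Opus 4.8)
The plan is to recognize this identity as the standard decoupling fact underlying martingale-based MLE analysis \citep{zhang2006,agarwal2020flambe}, and to reduce it to checking that a product of per-step likelihood ratios is a mean-one martingale. Write $\mathcal{F}_i = \sigma(\{(\pi^j,\tau^j)\}_{j=1}^i)$ for the natural filtration and recall that, by the construction of Algorithm \ref{alg:meta}, each $\pi^i$ is $\mathcal{F}_{i-1}$-measurable while $\tau^i \sim \P^{\pi^i}_{\theta^\star}$. My first step is to evaluate the inner conditional expectation $\E[\exp(L(\tilde D)) \mid D]$. Conditioning on $D$ fixes the whole policy sequence $\pi^1,\dots,\pi^t$, since these are deterministic functions of the observed $D$, and each tangent trajectory $\tilde\tau^i$ is drawn independently from $\P^{\pi^i}_{\theta^\star}$; hence $\tilde\tau^1,\dots,\tilde\tau^t$ are conditionally independent given $D$ and the expectation factorizes as $\E[\exp(L(\tilde D))\mid D] = \prod_{i=1}^t g_i$, where $g_i := \E_{\tau\sim\P^{\pi^i}_{\theta^\star}}[\exp(\ell(\pi^i,\tau))]$.

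Next I would observe that each $g_i$ is a function of $\pi^i$ alone, hence $\mathcal{F}_{i-1}$-measurable, and is strictly positive and finite, the latter being implicit in the statement so that the logarithm is well defined. Substituting the factorization, the quantity inside the outer expectation becomes
\[
M_t := \frac{\exp(L(D))}{\prod_{i=1}^t g_i} = \prod_{i=1}^t \frac{\exp(\ell(\pi^i,\tau^i))}{g_i}.
\]
The core step is then to show that $(M_i)_{i\ge 0}$ is a martingale with respect to $(\mathcal{F}_i)$ with $\E[M_i]=1$. Given $\mathcal{F}_{i-1}$, both $\pi^i$ and $g_i$ are determined and $\tau^i\sim\P^{\pi^i}_{\theta^\star}$, so by the very definition of $g_i$ we get $\E[\exp(\ell(\pi^i,\tau^i))\mid\mathcal{F}_{i-1}] = g_i$, whence $\E[M_i\mid \mathcal{F}_{i-1}] = M_{i-1}\cdot(g_i/g_i)=M_{i-1}$. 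Since $M_0=1$ (the empty product), induction gives $\E[M_t]=1$, which is exactly the claimed identity.

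The argument is short, and the only genuine obstacle is measurability bookkeeping rather than any hard estimate: one must be careful that conditioning on $D$ truly freezes all policy randomness, so that the tangent trajectories decouple into a product, and that $g_i$ is previsible, i.e.\ $\mathcal{F}_{i-1}$-measurable, so it may be pulled out of the step-$i$ conditional expectation. I would also flag the mild integrability requirement that each $g_i$ be finite and positive, needed for the division and the logarithm to make sense; this holds automatically when $\ell$ is bounded above and is harmless in the intended application, where $\ell$ is a log-likelihood ratio.
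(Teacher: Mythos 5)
Your proposal is correct and follows essentially the same route as the paper's proof: your $g_i$ is exactly the paper's $E_i = \E[\exp(\ell(\pi^i,\tau^i))\mid\pi^i]$, your factorization of $\E[\exp(L(\tilde D))\mid D]$ into $\prod_i g_i$ is the same computation, and your martingale induction $\E[M_t]=\E[M_{t-1}]=\cdots=1$ is the paper's backward recursion $u_t=u_{t-1}=\cdots=u_0=1$ phrased in martingale language. The integrability point you flag is handled in the paper by observing that $\pi^i$ and $\tau^i$ take only finitely many values.
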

\begin{proof}
Before proving the lemma,
we remark that all the expectations considered above and below always exist because each $\tau_i$ and $\pi_i$ can only take finitely  many  different values. 
It is direct to see $\tau_i$ only has finitely many possibilities since the number of observations and actions are finite.
As for $\pi_i$, notice that $\pi_i$ is a deterministic function of $\{(\pi^j,\tau^j)\}_{j=1}^{i-1}$ in Algorithm \ref{alg:meta}, so it follows by induction that 
$\pi_i$ also only takes finitely many different values. 

Now we prove Lemma \ref{lem:aux}. Define $E_i = \E[ \exp\left( \ell (\pi^i,\tau^i)\right)  \mid  \pi^i]$
and 
\[
u_t =\E \left[ \exp\left(L(D)-\log \E[ \exp(L(\tilde D))\mid D] \right) \right]
=
 \E \left[   \frac{\exp\left(\sum_{i=1}^t \ell (\pi^i,\tau^i) \right) }
 					    { \E[ \exp(\sum_{i=1}^t \ell (\pi^i,\tilde\tau^i) )\mid D] }
	\right] \,.
\]
Our goal is to show that $u_t=1$.
Owning to the definition of $\{\tilde \tau_i\}_i$, some calculation gives
\begin{align*}
\E\left[ \exp\left(\sum_{i=1}^t \ell (\pi^i,\tilde\tau^i) \right)\mid D\right] 
	= \prod_{i=1}^t E_i\,.
\end{align*}
Plugging this into the last expression obtained for $u_t$ and using the tower rule
with 
\[
D_{t-1}=(\pi^1,\tau^1,\dots,\pi^{t-1},\tau^{t-1},\pi^t)
\]
we get
\begin{align*}
\MoveEqLeft
 u_t
  =  
 \E \left[
 	\E\left[ 
    \frac{  \exp(\sum_{i=1}^t \ell (\pi^i,\tau^i)) }
 					    { \prod_{i=1}^t E_i }
	\mid D_{t-1}\right]
	\right] 
  =  
 \E \left[
    \frac{ \E[ \exp(\sum_{i=1}^t \ell (\pi^i,\tau^i))\mid D_{t-1}] }
 					    { \prod_{i=1}^t E_i }
	\right],
\end{align*}
where we used that $E_1,\dots,E_t$ are $\sigma(D_{t-1})$-measurable.
Now,
\begin{align*}
\E\left[ \exp\left(\sum_{i=1}^t \ell (\pi^i,\tau^i)\right)\mid D_{t-1}\right] 
=
\E\left[ \exp\left(\sum_{i=1}^{t-1} \ell (\pi^i,\tau^i)\right)\mid D_{t-1}\right]  E_t\,.
\end{align*}
Plugging this back into our previous expression,
\begin{align*}
u_t
&= 
 \E \left[
    \frac{ \E\left[ \exp(\sum_{i=1}^{t-1} \ell (\pi^i,\tau^i))\mid D_{t-1}\right]  \cancel{E_t} }
 					    { (\prod_{i=1}^{t-1} E_i)\cancel{E_t} }
	\right] 
= 
 \E \left[
 \E\left[
    \frac{  \exp(\sum_{i=1}^{t-1} \ell (\pi^i,\tau^i))  }
 					    { \prod_{i=1}^{t-1} E_i }
	\mid D_{t-1}\right]					    
	\right] \\	
&= 
 \E \left[
    \frac{  \exp(\sum_{i=1}^{t-1} \ell (\pi^i,\tau^i))  }
 					    { \prod_{i=1}^{t-1} E_i }
	\right] 	\tag{by the tower rule}\\
	&= u_{t-1} = \dots = u_0 = 1\,,
\end{align*}
where the second equality used that $E_1,\dots,E_{t-1}$ is $\sigma(D_{t-1})$-measurable.
\end{proof}

For any $\bar\theta\in\bar\Theta$, we define 
$$
\ell_{\bar\theta}(\pi,\tau) := 
\begin{cases}
	\frac12\log\left(\frac{\P^{\pi}_{\bar\theta}(\tau)}{\P^{\pi}_{{\theta}^\star}(\tau)}\right), &\P^{\pi}_{{\theta}^\star}(\tau)\neq 0,\\
0, &\mbox{otherwise,}
\end{cases}
$$
and 
$$ L_{\bar\theta}(D) = \sum_{i=1}^t \ell_{\bar\theta}(\pi^i,\tau^i), 
\quad L_{\bar\theta}(\tilde D) = \sum_{i=1}^t \ell_{\bar\theta}(\pi^i,\tilde \tau^i).$$

By Lemma \ref{lem:aux}, Chernoff's method
 and the union bound, 
with probability at least $1-\delta$, for all $\bar\theta\in\bar\Theta$ we have
\begin{align*}
	-\log \E_{\tilde D} [\exp(L_{\bar\theta}(\tilde D))\mid D] &< -L_{\bar\theta}(D) + \log(|\bar\Theta|/\delta) \\
	&\le -L_{\bar\theta}(D)  +   \cO(H(S^2A+SO))\log(TSAOH) + \log(1/\delta),
\end{align*}
Then, by the definition of $L_{\bar \theta}$ and by the inequality $-\log x \ge 1-x$,
$$
-\log \E_{ \tilde D} [\exp(L_{\bar\theta}(\tilde D))\mid D]
= - \sum_{i=1}^t \log\E_{\tau\sim \P^{\pi^i}_{{\theta}^\star}} \left[\sqrt{\frac{\P^{\pi^i}_{\bar{\theta} }(\tau)}{\P^{\pi^i}_{{\theta}^\star}(\tau)}}\right]
\ge \sum_{i=1}^t \left(1- \E_{\tau\sim \P^{\pi^i}_{{\theta}^\star}} \left[\sqrt{\frac{\P^{\pi^i}_{\bar{\theta} }(\tau)}{\P^{\pi^i}_{{\theta}^\star}(\tau)}}\right]\right).
$$
Recall that for any $\theta\in\Theta$ and any policy $\pi$, $\| \P_\theta^\pi -\P_{\bar\theta}^\pi\|_1 \le 1/T.$ As a result, by algebra,
\begin{align*}
\MoveEqLeft
\sum_{i=1}^t \left(1- \E_{\tau\sim \P^{\pi^i}_{{\theta}^\star}} \left[\sqrt{\frac{\P^{\pi^i}_{\bar{\theta} }(\tau)}{\P^{\pi^i}_{{\theta}^\star}(\tau)}}\right]\right)\\
&=
\sum_{i=1}^t \left(1-
\sum_\tau 
 \sqrt{\P^{\pi^i}_{{\theta}^\star}(\tau) \P^{\pi^i}_{\bar{\theta} }(\tau)}\right)\\
 &\ge  
 	\frac{1}{2}\sum_{i=1}^t \sum_\tau \left( \sqrt{\P^{\pi^i}_{\bar{\theta} }(\tau)} - \sqrt{\P^{\pi^i}_{{\theta}^\star}(\tau)}\right)^2 - \frac{1}{2} \\
 & \ge 
 	 \frac{1}{12}\sum_{i=1}^t \left[\sum_\tau \left( \sqrt{\P^{\pi^i}_{\bar{\theta} }(\tau)} - \sqrt{\P^{\pi^i}_{{\theta}^\star}(\tau)}\right)^2\right]
 \left[\sum_\tau \left( \sqrt{\P^{\pi^i}_{\bar{\theta} }(\tau)} + \sqrt{\P^{\pi^i}_{{\theta}^\star}(\tau)}\right)^2\right]
  - \frac{1}{2} \\
 & \ge 
 	 \frac{1}{12} \sum_{i=1}^t  \left(\sum_\tau | \P^{\pi^i}_{\bar{\theta} }(\tau) - \P^{\pi^i}_{{\theta}^\star}(\tau)|\right)^2- \frac{1}{2},
\end{align*}
where the last inequality follows from the Cauchy-Schwarz inequality. Putting all relations together, we have that with probability at least $1-\delta$, for all $\bar\theta\in\bar\Theta$
$$
 -L_{\bar\theta}(D)  +   \cO(H(S^2A+SO))\log(TSAOH) + \log(1/\delta) 
\ge \frac{1}{12} \sum_{i=1}^t  \left(\sum_\tau | \P^{\pi^i}_{\bar{\theta} }(\tau) - \P^{\pi^i}_{{\theta}^\star}(\tau)|\right)^2 - \frac{1}{2}.
$$
Finally, notice that for all $\theta\in\Theta$, we have
$$ -L_{\bar\theta}(D) \le -L_\theta(D).$$
Then, repeatedly using that $\| \P_\theta^\pi -\P_{\bar\theta}^\pi\|_1 \le 1/T$ and $\| \P^{\pi}_{\bar{\theta} } - \P^{\pi}_{{\theta}^\star}\|_1\le 2+1/T$ for any policy $\pi$, we get
$$
\sum_{i=1}^t  \left(\sum_\tau | \P^{\pi^i}_{\bar{\theta} }(\tau) - \P^{\pi^i}_{{\theta}^\star}(\tau)|\right)^2
\ge \sum_{i=1}^t  \left(\sum_\tau | \P^{\pi^i}_{{\theta} }(\tau) - \P^{\pi^i}_{{\theta}^\star}(\tau)|\right)^2 - 6.
$$
We conclude that with probability at least $1-\delta$, for all $\theta\in\Theta$
$$
 -L(\theta,D)  +   \cO(H(S^2A+SO))\log(SAOHT) + \log(1/\delta) 
\ge \frac{1}{12} \sum_{i=1}^t  \left(\sum_\tau | \P^{\pi^i}_{{\theta} }(\tau) - \P^{\pi^i}_{{\theta}^\star}(\tau)|\right)^2 - 1.
$$
Taking a union bound for all $t\in[T]$ completes the proof.

\section{$\ell_1$-norm eluder Dimension}
\label{app:eluder}

 In this section, we introduce the framework of $\ell_1$-norm eluder dimension, and present the corresponding pigeonhole-style regret guarantee. 
 All the proofs for this section are deferred to Appendix \ref{app:eluder-proof}.

 \subsection{Definitions and properties of $\ell_1$-norm eluder Dimension}
 To begin with, we define the $\epsilon$-independence relation between a point and a set of points with respect to a function class under the $\ell_1$-norm.

\begin{definition}[$\ell_1$-norm  $\epsilon$-independence]
\label{def:ind_points}
	Let $\cF$ be a function class defined on $\cX$, and let 
	\[z,x_1,x_2,\ldots,x_n\in\cX\,.\]
	We say 	$z$ is $\epsilon$-independent of $\{x_1,x_2,\ldots,x_n\}$ with respect to $\cF$ if there exists $f\in\cF$ such that $\sum_{i=1}^{n} | f(x_i)|\le \epsilon$, but $|f(z)| > \epsilon$. 
\end{definition}
When $\cF$ is clear from the context, we drop ``with respect to $\cF$'' for brevity.
We say 	$z$ is $\epsilon$-dependent on $\{x_1,x_2,\ldots,x_n\}$ if it is \emph{not} $\epsilon$-independent of $\{x_1,x_2,\ldots,x_n\}$.

\begin{definition}[$\ell_1$-norm $\epsilon$-eluder sequence]
We say that $\{x_i\}_{i=1}^n \subseteq \cX$ is an $\ell_1$-norm $\epsilon$-eluder sequence if
for all $i\in [n]$, $x_i$ is $\epsilon$-independent of $\{x_1,\dots,x_{i-1}\}$.
\end{definition}

\begin{definition}[$\ell_1$-norm $\epsilon$-eluder dimension]
\label{def:eluder}
Let $\cF$ be a function class defined on $\cX$.
	The  $\ell_1$-norm  $\epsilon$-eluder dimension, $\dim_{\rm E}(\cF,\epsilon)$, 
	is the length of the longest 
	$\ell_1$-norm $\epsilon'$-eluder sequence with some $\epsilon'\ge \epsilon$.
\end{definition}
If $\{x_i\}_{i=1}^n\subseteq \cX$ is an $\ell_1$-norm $\epsilon'$-eluder sequence with $n=\dim_{\rm E}(\cF,\epsilon)$ then we say that this sequence is witness to the $\ell_1$-norm eluder dimension of $\cF$.

The  difference between the $\ell_1$-norm eluder dimension defined here and the original eluder dimension \citep{russo2013eluder} is that in the definition of independence our $\ell_1$-version evaluates the $\ell_1$-norm of the function on the dataset instead of the $\ell_2$-norm as in the original definition. 
Therefore, we will refer to the original eluder dimension as $\ell_2$-norm eluder dimension throughout this paper, while we will drop the ``$\ell_1$-norm qualifier'' whenever it is clear from the context.

\begin{proposition}\label{prop:l1-l2-eluder}
The	$\ell_1$-norm $\epsilon$-eluder dimension is always upper bounded by the $\ell_2$-norm $\epsilon$-eluder dimension.
\end{proposition}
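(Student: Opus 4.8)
The plan is to reduce the comparison of the two dimensions to a pointwise comparison of the two notions of $\epsilon$-independence, after which the dimension inequality follows essentially for free. The entire content will be the elementary norm monotonicity $\|v\|_2 \le \|v\|_1$.

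First I would establish the key implication: for any fixed $\epsilon > 0$ and any finite set $\{x_1,\dots,x_n\}$, if $z$ is $\ell_1$-norm $\epsilon$-independent of $\{x_1,\dots,x_n\}$ with respect to $\cF$, then $z$ is also $\ell_2$-norm $\epsilon$-independent of the same set. Indeed, let $f\in\cF$ be the function witnessing $\ell_1$-independence, so that $\sum_{i=1}^n |f(x_i)| \le \epsilon$ while $|f(z)| > \epsilon$. Applying $\|v\|_2 \le \|v\|_1$ to the vector $v = (f(x_1),\dots,f(x_n))$ gives $\sqrt{\sum_{i=1}^n |f(x_i)|^2} \le \sum_{i=1}^n |f(x_i)| \le \epsilon$, while the separation $|f(z)| > \epsilon$ is untouched. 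Hence the \emph{same} $f$ certifies $\ell_2$-norm $\epsilon$-independence. Conceptually, the point is that the $\ell_1$ constraint appearing in Definition \ref{def:ind_points} is strictly more demanding than its $\ell_2$ counterpart, so it is harder, not easier, for a point to be $\ell_1$-independent.

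Next I would lift this pointwise statement to sequences and then to dimensions. Fix any $\epsilon' > 0$ and let $\{x_i\}_{i=1}^m$ be an $\ell_1$-norm $\epsilon'$-eluder sequence; by definition each $x_i$ is $\ell_1$-norm $\epsilon'$-independent of its predecessors $\{x_1,\dots,x_{i-1}\}$, hence by the implication above $\ell_2$-norm $\epsilon'$-independent of them, so the identical sequence is an $\ell_2$-norm $\epsilon'$-eluder sequence. Finally, respecting the ``$\epsilon' \ge \epsilon$'' quantifier in Definition \ref{def:eluder}, a sequence witnessing $\dim_{\rm E}^{\ell_1}(\cF,\epsilon)$ is an $\ell_1$-norm $\epsilon'$-eluder sequence for some $\epsilon'\ge\epsilon$, hence an $\ell_2$-norm $\epsilon'$-eluder sequence with $\epsilon'\ge\epsilon$, so its length lower-bounds $\dim_{\rm E}^{\ell_2}(\cF,\epsilon)$; this yields $\dim_{\rm E}^{\ell_1}(\cF,\epsilon) \le \dim_{\rm E}^{\ell_2}(\cF,\epsilon)$. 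There is no substantive obstacle here: the only points requiring mild care are to verify that a single witnessing function $f$ serves both independence notions, and to match the quantifier over $\epsilon'$ between the two dimension definitions so that the per-sequence containment passes cleanly to the dimension inequality.
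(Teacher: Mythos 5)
Your proof is correct and follows essentially the same route as the paper's: both arguments observe that the witnessing function for $\ell_1$-norm $\epsilon$-independence also witnesses $\ell_2$-norm $\epsilon$-independence via $\|v\|_2 \le \|v\|_1$, so every $\ell_1$-norm eluder sequence is an $\ell_2$-norm eluder sequence at the same scale. The only difference is presentational (you isolate the pointwise independence implication before lifting to sequences, while the paper works with the whole sequence at once), which does not change the substance.
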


By combinining the upper bound for the $\ell_2$-norm eluder dimension of $d$-dimensional linear function classes with Proposition  \ref{prop:l1-l2-eluder}, we immediately obtain that the $\ell_1$-norm eluder dimension of any bounded $d$-dimensional linear function class is at most $\tilde{O}(d)$.
\begin{corollary}\label{cor:linear}
	The $\ell_1$-norm $\epsilon$-eluder dimension of 
	\[
	\Fcal=\{f_\theta :\ 
	f_\theta(x)=\langle x\,,\theta \rangle, x\in B_{R_2}^d(0)\,,
	\theta\in B_{R_1}^d(0)\}
	\]
	is at most $\tilde{\mathcal{O}}(d\log(1+R_1R_2/\epsilon))$.
\end{corollary}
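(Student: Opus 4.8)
The plan is to obtain the bound not through a fresh combinatorial argument but by chaining two facts: Proposition~\ref{prop:l1-l2-eluder}, which sandwiches the $\ell_1$-norm eluder dimension below the ordinary $\ell_2$-norm eluder dimension, and the classical estimate for the $\ell_2$-norm eluder dimension of bounded linear function classes due to \citet{russo2013eluder}. Thus the entire content of the corollary is a two-step reduction, and the only decision is whether to cite the $\ell_2$-norm bound or reproduce it.

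First I would apply Proposition~\ref{prop:l1-l2-eluder} to the class
\[
\Fcal=\{f_\theta :\ f_\theta(x)=\langle x,\theta\rangle,\ x\in B_{R_2}^d(0),\ \theta\in B_{R_1}^d(0)\},
\]
obtaining $\dim_{\rm E}(\Fcal,\epsilon)\le \dim_{\rm E}^{\ell_2}(\Fcal,\epsilon)$, where $\dim_{\rm E}^{\ell_2}$ denotes the original ($\ell_2$-norm) eluder dimension. This reduces the task to bounding $\dim_{\rm E}^{\ell_2}(\Fcal,\epsilon)$. Second, I would invoke the standard linear-class estimate: for $f_\theta(x)=\langle x,\theta\rangle$ with $\|x\|_2\le R_2$ and $\|\theta\|_2\le R_1$ one has $\dim_{\rm E}^{\ell_2}(\Fcal,\epsilon)=\mathcal{O}\!\left(d\log(1+R_1R_2/\epsilon)\right)$. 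Combining the two displays gives the claim, and since the intermediate bound carries only a logarithmic factor the final estimate is $\tilde{\mathcal{O}}(d\log(1+R_1R_2/\epsilon))$.

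The only genuine work — hence the main obstacle should one insist on a self-contained proof rather than a citation — is re-deriving the $\ell_2$-norm bound. The standard route is an elliptical-potential / log-determinant argument: given an $\ell_2$-norm $\epsilon'$-eluder sequence $x_1,\dots,x_n$ with $\epsilon'\ge\epsilon$, one sets $V_i=\lambda I+\sum_{j\le i}x_jx_j^\top$ with $\lambda\asymp(\epsilon/R_1)^2$; the $\epsilon'$-independence of $x_i$ from its predecessors, via Cauchy--Schwarz in the $V_{i-1}$ geometry, forces $x_i^\top V_{i-1}^{-1}x_i$ to exceed a universal constant. Telescoping $\sum_i \log(1+x_i^\top V_{i-1}^{-1}x_i)$ against $\log\det V_n-\log\det(\lambda I)\le d\log(1+nR_2^2/(\lambda d))$ and solving the resulting self-referential inequality for $n$ yields $n=\mathcal{O}(d\log(1+R_1R_2/\epsilon))$. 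Because this is precisely the estimate established in \citet{russo2013eluder}, I would cite it rather than reproduce the potential computation, keeping the proof of the corollary to the two-line chaining of inequalities above.
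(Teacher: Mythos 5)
Your proposal matches the paper's argument exactly: the paper obtains Corollary~\ref{cor:linear} by combining Proposition~\ref{prop:l1-l2-eluder} with the known $\ell_2$-norm eluder dimension bound for bounded linear function classes from \citet{russo2013eluder}, which is precisely your two-step chaining. The additional sketch of the log-determinant potential argument is a correct account of the cited $\ell_2$ bound, but, as you note, citing it suffices.
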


\subsection{The pigeonhole principle for the $\ell_1$-norm eluder dimension }

Similar to $\ell_2$-norm eluder dimension, we can also prove a pigeonhole-style regret gurantee for the $\ell_1$ version.
This will play a key role in deriving the final regret bound from the MLE guarantee.

 \begin{proposition}\label{prop:de-regret}
Let  $\Phi$ be a set of real-valued functions sharing the domain $\cX$
and bounded by $C>0$.
	Suppose sequence $\{\phi_k\}_{k=1}^{K}\subset \Phi$ and $\{x_k\}_{k=1}^{K}\subset\cX$ satisfy that for all $k\in[K]$,
	\[
	\sum_{t=1}^{k-1}  |\phi_k(x_{ t})| \le \beta\,.
	\] 
	Then for all $k\in[K]$ and $\omega>0$,
	$$
	\sum_{t=1}^{k} | \phi_{t}(x_{ t})| \le (d+1)C + d\beta \log(C/\omega) + k\omega,
	$$
	where $d=\dim_{\rm E}(\Phi,\omega)$ is the  $\ell_1$-norm $\omega$-eluder dimension 
\end{proposition}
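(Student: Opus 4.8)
The plan is to reduce the quantity $\sum_{t=1}^k |\phi_t(x_t)|$ to a statement about how \emph{often} $|\phi_t(x_t)|$ can be large, and then recover the stated bound (including the $\log(C/\omega)$ factor) by a layer-cake decomposition over thresholds. Concretely, for a threshold $\alpha>0$ let $N(\alpha)=|\{t\in[k]:\ |\phi_t(x_t)|>\alpha\}|$. Since $0\le|\phi_t(x_t)|\le C$, we have the identity $\sum_{t=1}^k|\phi_t(x_t)|=\int_0^C N(\alpha)\,\mathrm{d}\alpha$, and the proposition will follow once I show $N(\alpha)\le(\beta/\alpha+1)\,d$ for every $\alpha\ge\omega$, using the trivial bound $N(\alpha)\le k$ on the remaining range $[0,\omega]$. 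Splitting the integral at $\omega$ then yields $\int_0^\omega k\,\mathrm{d}\alpha+\int_\omega^C(\beta/\alpha+1)d\,\mathrm{d}\alpha\le k\omega+d\beta\log(C/\omega)+dC$, which is the claimed estimate up to the exact constant multiplying $C$.

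The heart of the argument, and the step I expect to be the main obstacle, is the counting bound $N(\alpha)\le(\beta/\alpha+1)d$ for $\alpha\ge\omega$. This rests on two sublemmas. The first is a \emph{dependence-count} estimate tailored to the $\ell_1$ definition of independence (Definition \ref{def:ind_points}): if $|\phi_t(x_t)|>\alpha$ while $\sum_{s<t}|\phi_t(x_s)|\le\beta$ (the latter being exactly the hypothesis of the proposition), then $x_t$ can be $\alpha$-dependent on at most $\lfloor\beta/\alpha\rfloor$ \emph{disjoint} subsequences of $\{x_1,\dots,x_{t-1}\}$. Indeed, if $S_1,\dots,S_B$ are disjoint subsequences on each of which $x_t$ is $\alpha$-dependent, then testing the dependence with the witness $\phi_t$ (which satisfies $|\phi_t(x_t)|>\alpha$) forces $\sum_{x\in S_j}|\phi_t(x)|>\alpha$ for every $j$, by the contrapositive of independence; summing over the disjoint blocks gives $B\alpha<\sum_{s<t}|\phi_t(x_s)|\le\beta$, hence $B<\beta/\alpha$. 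This is precisely where the $\ell_1$-norm structure (summing absolute values, rather than their squares) is used, and it is the reason the final bound involves $\beta/\alpha$ rather than $\beta^2/\alpha^2$.

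The second sublemma converts the dependence count into an eluder-dimension bound by a greedy bucketing argument. I would process the large-width indices $t$ (those with $|\phi_t(x_t)|>\alpha$) in increasing order, maintaining a growing family of buckets, each kept as an $\alpha$-eluder sequence; each new point $x_t$ is placed into the lowest-indexed bucket on whose current contents it is $\alpha$-\emph{independent}, and a fresh bucket is opened only when $x_t$ is $\alpha$-dependent on \emph{all} existing buckets. By construction every bucket is an $\alpha$-eluder sequence with $\alpha\ge\omega$, so Definition \ref{def:eluder} caps its length at $d=\dim_{\rm E}(\Phi,\omega)$; and whenever a new (say, the $J$-th) bucket is opened, $x_t$ is $\alpha$-dependent on the $J-1$ disjoint nonempty preceding buckets, so the dependence-count sublemma gives $J-1\le\lfloor\beta/\alpha\rfloor$. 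Hence the number of buckets is at most $\lfloor\beta/\alpha\rfloor+1$ and $N(\alpha)\le(\lfloor\beta/\alpha\rfloor+1)d\le(\beta/\alpha+1)d$, completing the chain. The remaining work—carrying out the integral (equivalently, a dyadic sum over $\alpha\in\{C,C/2,\dots\}$ down to $\omega$) and tracking the additive $C$ terms coming from the floor function and the endpoint $\alpha=\omega$—is routine, and I would relegate it to the end.
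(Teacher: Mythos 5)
Your proposal is correct and follows essentially the same route as the paper's proof: the $\ell_1$ dependence-count bound (the paper's Claim \ref{claim:1}), a disjoint-bucket pigeonhole argument over $\alpha$-eluder subsequences (the paper's Claim \ref{claim:2}, which you phrase as a direct greedy construction rather than a contradiction, yielding the same counting bound $N(\alpha)\le(\beta/\alpha+1)d$ as Lemma \ref{lem:density} up to an additive $+1$), and the layer-cake integral split at $\omega$. The minor constant discrepancy you flag in the coefficient of $C$ is immaterial and your version is, if anything, marginally tighter.
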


Finally, we instantiate Proposition \ref{prop:de-regret} on a linear function class  with Corollary \ref{cor:linear}, and obtain the following $\ell_1$-norm pigeonhole regret bound. 
\begin{proposition}\label{prop:linear-regret}
Suppose $\{w_{k,j}\}_{(k,j)\in[K]\times[m]},~\{x_{k,i}\}_{(k,i)\in[K]\times[n]}\subset\R^d$ satisfy
\begin{align}
\begin{cases}
\sum_{t=1}^{k-1} \sum_{i=1}^n \sum_{j=1}^m 
| w_{k,j}\trans x_{t,i}| \le \gamma_k  	\\
\sum_{i=1}^n \| x_{k,i} \|_2 \le R_x \\
\sum_{j=1}^m \| w_{k,j} \|_2 \le R_w\\
 \end{cases}
\quad  \mbox{ for all } k\in[K].
\label{eq:lrcond}
\end{align}
Then we have 
\begin{equation}
	\sum_{t=1}^{k} \sum_{i=1}^n\sum_{j=1}^m
| w_{t,j} \trans x_{t,i}| =\cO\bigg( d\left(R_w R_x + \max_{t\le k}\gamma_t \right) \log^2(Kn)  \bigg)
\quad  \mbox{ for all } k\in[K].
\end{equation}
\end{proposition}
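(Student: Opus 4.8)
The plan is to reduce Proposition~\ref{prop:linear-regret} to the single-point pigeonhole bound of Proposition~\ref{prop:de-regret} applied to the linear function class, whose $\ell_1$-norm eluder dimension is controlled by Corollary~\ref{cor:linear}. The only gap between the two statements is the inner sum over $i\in[n]$: Proposition~\ref{prop:de-regret} processes one point per round, whereas here each round $k$ carries a whole batch $\{x_{k,i}\}_{i\in[n]}$. I would close this gap by \emph{flattening} the batched sequence into a single-point sequence.

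First I would introduce the linear class $\mathcal{F}=\{\,x\mapsto\langle w,x\rangle:\ \|w\|_2\le R_w\,\}$ on the domain $\mathbb{B}_{R_x}(0)$, and relabel the double-indexed data $\{x_{k,i}\}$ as a single sequence indexed by the pairs $(k,i)$ ordered lexicographically (first by $k$, then by $i$). To the flattened round $(k,i)$ I would attach the function $\phi_{w_k}(\cdot)=\langle w_k,\cdot\rangle$, so that each of the $n$ flattened rounds sharing the outer index $k$ reuses the same $w_k$. Under this relabeling $|\phi_{w_t}(x_{t,i})|=|w_t^\top x_{t,i}|$, and the target sum $\sum_{t\le k}\sum_i|w_t^\top x_{t,i}|$ is exactly the quantity $\sum_{\rho}|\phi_\rho(x_\rho)|$ over the flattened prefix ending at $(k,n)$ that Proposition~\ref{prop:de-regret} outputs.

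The key step is to verify the precondition of Proposition~\ref{prop:de-regret} for the flattened sequence. For a flattened round $(k,j)$ the predecessor sum splits into a cross-round part $\sum_{t<k}\sum_i|w_k^\top x_{t,i}|\le\gamma_k$, handled directly by the first line of \eqref{eq:lrcond}, and an \emph{intra-round} part $\sum_{i<j}|w_k^\top x_{k,i}|$, which is \emph{not} covered by $\gamma_k$. This intra-round remainder is precisely where the second line of \eqref{eq:lrcond} enters: by Cauchy--Schwarz, $\sum_{i<j}|w_k^\top x_{k,i}|\le\|w_k\|_2\sum_i\|x_{k,i}\|_2\le R_w\alpha$. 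Hence for the prefix ending at $(k,n)$ the flattened precondition holds with $\beta=\max_{t\le k}\gamma_t+R_w\alpha$, while the effective function bound is $C=R_wR_x$ since $|w^\top x|\le R_wR_x$. I expect controlling this intra-round term to be the only genuine subtlety; it is exactly the source of the $\alpha R_w$ summand in the final bound, and it is why the normalization $\sum_i\|x_{k,i}\|_2\le\alpha$ is needed.

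Finally I would instantiate Proposition~\ref{prop:de-regret} on the flattened prefix of length $kn\le Kn$, writing $d_{\mathrm E}=\dim_{\mathrm E}(\mathcal{F},\omega)$, which Corollary~\ref{cor:linear} bounds by $\tilde{\mathcal{O}}(d\log(1+R_wR_x/\omega))$. This yields
$$\sum_{t\le k}\sum_i|w_t^\top x_{t,i}|\le(d_{\mathrm E}+1)R_wR_x+d_{\mathrm E}\bigl(\max_{t\le k}\gamma_t+R_w\alpha\bigr)\log(R_wR_x/\omega)+kn\,\omega.$$
Choosing $\omega=1/(Kn)$ makes the last term at most $1$ and turns every logarithm into $\mathcal{O}(\log(Kn))$ (absorbing the $R_w,R_x$ scales); combined with the additional logarithmic factor hidden inside $d_{\mathrm E}$, this produces the $d\log^2(Kn)$ dependence and hence the stated bound $\mathcal{O}\bigl((R_wR_x+\max_{t\le k}\gamma_t+\alpha R_w)\,d\log^2(Kn)\bigr)$ for every $k\in[K]$.
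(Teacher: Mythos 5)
Your proposal is correct and follows essentially the same route as the paper's own proof: the paper likewise flattens the batched sequence into a single sequence of length $Kn$ (via the indices $p_m,q_m$), bounds the intra-round remainder by $\alpha R_w$ using Cauchy--Schwarz with the normalization $\sum_i\|x_{k,i}\|_2\le\alpha$, and then invokes Proposition~\ref{prop:de-regret} with Corollary~\ref{cor:linear} and $\omega$ of order $R_wR_x/(Kn)$. The only difference is the cosmetic choice $\omega=1/(Kn)$ versus $R_wR_x/(Kn)$, which is absorbed into the logarithms either way.
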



\section{Proofs for $\ell_1$-norm eluder dimension}\label{app:eluder-proof}

In this section, we provide the proofs for the results in Appendix \ref{app:eluder}.

\subsection{Proof of Proposition \ref{prop:l1-l2-eluder}}
\begin{proof}[Proof of Proposition \ref{prop:l1-l2-eluder}]
	Let $x_1,\ldots,x_n$ be an $\ell_1$-norm  $\epsilon$-independent sequence.
	By definition, there exist  $f_1,\ldots,f_n\in\cF$ such that for all $k\in[n]$
	\begin{equation}
		\begin{cases}
			&\sum_{i=1}^{k-1} |f_k(x_i)| \le \epsilon,\\
			&|f_k(x_k)|\ge \epsilon.
		\end{cases}
	\end{equation}
Since $\ell_1$-norm is always an upper bound for the $\ell_2$-norm, we also have for all $k\in[n]$
\begin{equation}
		\begin{cases}
			&\sqrt{\sum_{i=1}^{k-1} |f_k(x_i)|^2} \le \epsilon,\\
			&|f_k(x_k)|\ge \epsilon.
		\end{cases}
	\end{equation}
Therefore, $x_1,\ldots,x_n$ is also an $\ell_2$-norm  $\epsilon$-independent sequence.
Hence, the $\ell_2$-eluder dimension of $\cF$ is at least as large as the $\ell_1$-eluder dimension of $\cF$
since any witness of the $\ell_1$-eluder dimension at some scale is a witness of the $\ell_2$-eluder dimension for the same scale.
\end{proof}

\subsection{Proof of Proposition \ref{prop:de-regret}}

The proofs in this subsection follow  the arguments developed for the analogous statements for the $\ell_2$-norm eluder dimension in Appendix C of \cite{russo2013eluder}.
We first prove a few auxiliary claims. 
\begin{claim}\label{claim:1}
Assume that the conditions of Proposition \ref{prop:de-regret} hold. 
Let $\epsilon>0$.
Then, if for some $k\in[K]$ we have $ | \phi_k(x_k)| > \epsilon $, then $x_k$ is $\epsilon$-dependent with respect to $\Phi$ on at most $\beta / \epsilon$ disjoint subsequences in $\{x_1,\dots,x_{k-1}\}$.
\end{claim}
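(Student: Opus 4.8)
The plan is to directly unfold the definition of $\epsilon$-dependence and exploit the standing hypothesis $\sum_{t=1}^{k-1}|\phi_k(x_t)|\le\beta$ of Proposition~\ref{prop:de-regret}, adapting the $\ell_2$ counting argument of \citet{russo2013eluder} to the $\ell_1$ setting. Suppose $x_k$ is $\epsilon$-dependent with respect to $\Phi$ on $L$ pairwise disjoint subsequences $S_1,\dots,S_L$ of $\{x_1,\dots,x_{k-1}\}$; the goal is then to show that $L\le \beta/\epsilon$, since this bounds any admissible collection of such subsequences.

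First I would restate $\epsilon$-dependence in its useful contrapositive form. By Definition~\ref{def:ind_points}, $x_k$ being $\epsilon$-dependent on a subsequence $S_j$ means precisely that for every $f\in\Phi$, the condition $|f(x_k)|>\epsilon$ forces $\sum_{x\in S_j}|f(x)|>\epsilon$ (otherwise $f$ would witness $\epsilon$-independence). The crucial point is that a specific witness is available here: the hypothesis of the claim gives $|\phi_k(x_k)|>\epsilon$, and $\phi_k\in\Phi$. Applying the contrapositive with $f=\phi_k$ to each $S_j$ therefore yields $\sum_{x\in S_j}|\phi_k(x)|>\epsilon$ for every $j\in[L]$.

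Next I would sum these $L$ inequalities. Because the $S_j$ are pairwise disjoint subsequences of $\{x_1,\dots,x_{k-1}\}$ (i.e. they select disjoint index sets) and every summand $|\phi_k(\cdot)|$ is nonnegative, the combined left-hand side is dominated by the full prefix sum:
\[
L\epsilon < \sum_{j=1}^{L}\sum_{x\in S_j}|\phi_k(x)| \le \sum_{t=1}^{k-1}|\phi_k(x_t)| \le \beta,
\]
where the last inequality is exactly the standing assumption of Proposition~\ref{prop:de-regret}. Rearranging gives $L<\beta/\epsilon$, and in particular $L\le\beta/\epsilon$, which is the desired bound.

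I do not anticipate a serious obstacle in this argument; the only point requiring care is the bookkeeping that ``disjoint subsequences'' correspond to disjoint index sets, so that summing $|\phi_k|$ over their union does not double-count and is genuinely bounded by $\sum_{t=1}^{k-1}|\phi_k(x_t)|$. Indeed, the passage from the $\ell_2$-version of \citet{russo2013eluder} is if anything cleaner in our setting, since the $\ell_1$-independence relation involves $\sum|f|$ rather than $\sum|f|^2$, so no squaring or Cauchy--Schwarz step is needed to chain the per-subsequence bounds into the final count.
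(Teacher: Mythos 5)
Your proof is correct and follows essentially the same route as the paper's: use the hypothesis $|\phi_k(x_k)|>\epsilon$ to see that $\phi_k$ itself would witness $\epsilon$-independence of any subsequence on which $\sum|\phi_k|\le\epsilon$, hence each of the $L$ disjoint subsequences must contribute more than $\epsilon$ to $\sum_{t=1}^{k-1}|\phi_k(x_t)|\le\beta$, giving $L<\beta/\epsilon$. The bookkeeping about disjoint index sets that you flag is exactly the (implicit) step in the paper's argument as well, so there is no gap.
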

\begin{proof}
Pick $k\in [K]$ as in the claim.
Let  $\{z_{1},\dots,z_{\ell}\}$ be a subsequence of $\{x_1,\dots,x_{k-1}\}$
such that $x_k$ is $\epsilon$-dependent on  $\{z_{1},\dots,z_{\ell}\}$ 
with respect to $\Phi$.
Since from
$\sum_{i=1}^{\ell} | \phi_k(z_i)|\le \epsilon$ it follows that $x_k$ is $\epsilon$-independent on  $\{z_{1},\dots,z_{\ell}\}$ 
with respect to $\Phi$, which we assumed not to hold, it follows that
$\sum_{i=1}^{\ell} | \phi_k(z_i)|> \epsilon$.
This implies that if $x_k$ is $\epsilon$-dependent on $L$ disjoint subsequences in $\{x_1,\dots,x_{k-1}\}$, we have
	 $$  \beta \geq \sum_{t=1}^{k-1} | \phi_k(x_t)| > L\epsilon,$$ which results in $L < {\beta}/{\epsilon}$.
 \end{proof}

\begin{claim}\label{claim:2}
Let $\Phi$ be an arbitrary class of real valued functions sharing the common domain $\cX$.
For any $\epsilon>0$, sequence $\{z_1,\dots,z_\kappa\} \subseteq \cX$, there exists $ j \in [\kappa]$ such that $z_j$ is $\epsilon$-dependent on at least $L = \lfloor (\kappa-1) / d_{\rm E} (\Phi,\epsilon)\rfloor $ disjoint subsequences in  $\{z_1,\dots,z_{j-1}\}$.
\end{claim}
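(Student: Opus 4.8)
The plan is to mirror the greedy partitioning argument that \citet{russo2013eluder} use for the $\ell_2$-norm eluder dimension, now carried out with the $\ell_1$-norm notion of independence from Definition \ref{def:ind_points}. Write $d := \dim_{\rm E}(\Phi,\epsilon)$ and $L := \lfloor (\kappa-1)/d\rfloor$. I would initialize $L$ empty ordered subsequences $B_1,\dots,B_L$ and then scan the points $z_1,\dots,z_\kappa$ once, in increasing order of index, while maintaining the invariant that each $B_i$ is an $\ell_1$-norm $\epsilon$-eluder sequence.

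The scanning rule is as follows. When $z_j$ is reached, I test whether $z_j$ is $\epsilon$-dependent on every one of $B_1,\dots,B_L$, each viewed as a subsequence of the already-processed points $\{z_1,\dots,z_{j-1}\}$. If it is, I halt and output this index $j$: by construction the $B_i$ are pairwise disjoint subsequences of $\{z_1,\dots,z_{j-1}\}$, so $z_j$ witnesses the claim. Otherwise $z_j$ is $\epsilon$-independent of at least one $B_i$, and I append $z_j$ to the lowest-indexed such $B_i$. Since I only ever append to $B_i$ an element that is $\epsilon$-independent of the \emph{current} contents of $B_i$, the invariant that each $B_i$ remains an $\ell_1$-norm $\epsilon$-eluder sequence is preserved throughout.

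The crux is to show that the halt must occur within the first $\kappa$ steps. By Definition \ref{def:eluder}, any $\ell_1$-norm $\epsilon$-eluder sequence has length at most $d$, so the invariant forces $|B_i|\le d$ for every $i$ at all times. If the scan never halted, then all $\kappa$ points would be appended, giving $\kappa = \sum_{i=1}^L |B_i| \le Ld \le \kappa-1$, where the last inequality uses $L \le (\kappa-1)/d$; this is a contradiction. Hence the halt fires at some $j\le\kappa$, and at that moment $z_j$ is $\epsilon$-dependent on the $L$ disjoint subsequences $B_1,\dots,B_L\subseteq\{z_1,\dots,z_{j-1}\}$, which is exactly the assertion with $L=\lfloor(\kappa-1)/d\rfloor$.

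I expect the main difficulty to be bookkeeping rather than conceptual: one must verify that the dependence test at each step is \emph{precisely} what preserves the eluder-sequence invariant, so that the length bound $|B_i|\le d$ is legitimately the $\ell_1$-norm eluder dimension and not a weaker quantity, and one must keep the counting tight enough that $Ld\le\kappa-1$ yields the stated floor. The only genuine edge case to check is a degenerate point $z_j$ with $|f(z_j)|\le\epsilon$ for all $f\in\Phi$, which is $\epsilon$-dependent on every set (including empty ones); this is benign, since in every application of this claim the scanned points satisfy $|f(z_j)|>\epsilon$ for some $f\in\Phi$, so such points do not arise and the subsequences produced by the scan are the nonempty ones intended.
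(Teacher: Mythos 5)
Your proposal is correct and is essentially the paper's own argument: the same greedy partitioning into $L$ bins maintained as $\ell_1$-norm $\epsilon$-eluder sequences, each bounded in length by $d_{\rm E}(\Phi,\epsilon)$, followed by the counting contradiction $\kappa\le L\,d_{\rm E}(\Phi,\epsilon)\le\kappa-1$. The only differences are cosmetic (you start from empty bins rather than seeding $B_i=\{z_i\}$, and you explicitly flag the degenerate point with $|f(z_j)|\le\epsilon$ for all $f$, which the paper leaves implicit).
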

\begin{proof}[Proof of Claim \ref{claim:2}]
	 Let $L$ be defined as in the statement of the claim. 
	 	 If $L=0$, there is nothing to be proven. Otherwise
	 we argue as follows. 	 

	 Initialize the sequences $B_1 = \{z_1\}, \dots,B_L$ $= \{z_L\}$, and let $j=L+1$.  Now, consider the following process: if at any point, $z_j$ is $\epsilon$-dependent on all of the $B_1,\dots,B_L$, then the claim is proven and we terminate the process. Otherwise, we pick an $i\in[L]$ such that $z_j$ is $\epsilon$-independent of $B_i$ and update $B_i:=B_i\cup\{z_j\}$. Then we increment $j$ by $1$ and continue this process if $j\le \kappa$.
	 
	 It remains to be proven that the process is terminated before $j$ gets to $\kappa+1$.
	 We prove this by contradiction:
	 Assume that $j$ gets to $\kappa+1$. Let $j$ be the index of a sequence from $\{B_i\}_{i\in [L]}$ that has the most elements (in case of ties, chose arbitrarily).
	 From $\kappa=\sum_{i=1}^L |B_i| \le L |B_j|$, 
	 $|B_j|\ge \kappa/L \ge \frac{\kappa}{\kappa-1} d_{\rm E}(\Phi,\epsilon) > d_{\rm E}(\Phi,\epsilon)$.
	 But $B_j$ is an $\epsilon$-eluder sequence by construction, which contradicts that
	 $d_{\rm E}(\Phi,\epsilon)$ is 
	 the length of longest $\epsilon'$-eluder sequences with $\epsilon'\ge \epsilon$.
\end{proof}

Equipped with Claims \ref{claim:1} and \ref{claim:2}, we can prove the following lemma, which bounds 
the frequency of large values in $\{ |\phi_1(x_1)|,\dots,|\phi_k(x_k)|\}$.
\begin{lemma}\label{lem:density}
	Under  the same condition of Proposition \ref{prop:de-regret}, for all $k\in[K]$,
$$
	\sum_{t=1}^{k} \mathbf{1}\big \{| \phi_t(x_t)| > \epsilon \big \} \leq \left(\frac{\beta}{\epsilon}+1\right) d_{\rm E} (\Phi,\epsilon)+1.
$$
\end{lemma}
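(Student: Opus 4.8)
The plan is to reduce this counting bound to a head-on collision between the two auxiliary claims, which push the same quantity from opposite directions. First I would isolate the indices of interest: let $t_1 < t_2 < \dots < t_\kappa$ enumerate those $t \in [k]$ for which $|\phi_t(x_t)| > \epsilon$, so that $\kappa = \sum_{t=1}^{k} \mathbf{1}\{|\phi_t(x_t)| > \epsilon\}$ is exactly the quantity to be bounded. Setting $z_i = x_{t_i}$, the sequence $\{z_1,\dots,z_\kappa\}$ is a subsequence of $\{x_1,\dots,x_k\}$, and it is to this selected sequence that I would apply Claim \ref{claim:2}.

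Applying Claim \ref{claim:2} to $\{z_1,\dots,z_\kappa\}$ at threshold $\epsilon$ produces an index $j \in [\kappa]$ such that $z_j$ is $\epsilon$-dependent on at least $L = \lfloor (\kappa-1)/d_{\rm E}(\Phi,\epsilon)\rfloor$ disjoint subsequences of $\{z_1,\dots,z_{j-1}\}$. The key bookkeeping observation is that each such subsequence of $\{z_1,\dots,z_{j-1}\}$ is also a subsequence of $\{x_1,\dots,x_{t_j-1}\}$, since $z_i = x_{t_i}$ with $t_i < t_j$ whenever $i < j$, and that disjointness is preserved under this identification. Hence $z_j = x_{t_j}$ is $\epsilon$-dependent on at least $L$ disjoint subsequences of the full prefix $\{x_1,\dots,x_{t_j-1}\}$.

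On the other hand, by construction $x_{t_j}$ satisfies $|\phi_{t_j}(x_{t_j})| > \epsilon$, so Claim \ref{claim:1} applies with $k = t_j$ and asserts that $x_{t_j}$ is $\epsilon$-dependent on \emph{at most} $\beta/\epsilon$ disjoint subsequences of $\{x_1,\dots,x_{t_j-1}\}$. Confronting the two estimates forces $L \le \beta/\epsilon$, i.e. $\lfloor (\kappa-1)/d_{\rm E}(\Phi,\epsilon)\rfloor \le \beta/\epsilon$. Dropping the floor gives $(\kappa-1)/d_{\rm E}(\Phi,\epsilon) < \beta/\epsilon + 1$, and rearranging yields $\kappa \le (\beta/\epsilon + 1)\, d_{\rm E}(\Phi,\epsilon) + 1$, which is precisely the claimed inequality.

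Finally I would dispose of the degenerate case: if $L = 0$, Claim \ref{claim:2} is vacuous, but then $\kappa - 1 < d_{\rm E}(\Phi,\epsilon)$ directly, so the target bound still holds since its right-hand side exceeds $d_{\rm E}(\Phi,\epsilon) + 1$. The only genuinely delicate point is the lifting step in the second paragraph—checking that the disjoint subsequences witnessing $\epsilon$-dependence \emph{within} the selected sequence $\{z_i\}$ correspond to disjoint subsequences \emph{within} the original history—but this is a matter of unwinding definitions rather than a real obstacle. The essential content is just that Claims \ref{claim:1} and \ref{claim:2} sandwich $L$ from above and below.
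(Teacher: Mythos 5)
Your argument is correct and is essentially identical to the paper's proof: select the subsequence of indices where $|\phi_t(x_t)|>\epsilon$, apply Claim \ref{claim:2} to get a lower bound on the number of disjoint dependent subsequences and Claim \ref{claim:1} to get an upper bound, then rearrange. Your explicit lifting step (disjoint subsequences of $\{z_1,\dots,z_{j-1}\}$ remain disjoint subsequences of $\{x_1,\dots,x_{t_j-1}\}$) is a detail the paper glosses over, and your separate treatment of $L=0$ is harmless but unnecessary since the rearrangement already covers it.
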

  \begin{proof}
	 Fix $k \in [K]$ and let $\{z_1,\dots,z_\kappa\}$ be the subsequence of $\{x_{1},\dots,x_{k}\}$ consisting of elements for which $|\phi_t(x_t)| > \epsilon$.
	 By Claim \ref{claim:2}, we know there exists $j \in [\kappa]$ such that $z_j$ is $\epsilon$-dependent on  $\lfloor (\kappa-1) / d_{\rm E} (\Phi,\epsilon)\rfloor$ disjoint subsequences of $\{z_1,\dots, z_{j-1}\}$. 
	 By Claim \ref{claim:1}, $z_j$ is $\epsilon$-dependent on at most $\beta / \epsilon$ disjoint subsequences of $\{z_1,\dots,z_{j-1}\}$.
Therefore, we have
	 $$\lfloor (\kappa-1) / d_{\rm E} (\Phi,\epsilon)\rfloor  \leq \beta / \epsilon,$$ which implies
	 $$
	 	\kappa \leq \left(\frac{\beta}{\epsilon}+1\right)d_{\rm E}(\Phi,\epsilon)+1,
	 $$
	 completing the proof.
 \end{proof}

With this, we are ready to prove Proposition \ref{prop:de-regret}.
\begin{proof}[Proof of Proposition \ref{prop:de-regret}]
	Fix $k \in [K]$; let $d = d_{\rm E}(\Phi,\omega)$. Noting that $|\phi_t(x_t)| \leq C$, we have
	\begin{align*}
	\sum_{t=1}^{k} |\phi_t(x_t)| &= \sum_{t=1}^{k} \int_{0}^{C} \mathbf{1}\{ |\phi_t(x_t)|>y\} dy \\
&\le k \omega +  \sum_{t=1}^{k} \int_{\omega}^{C} \mathbf{1}\{ |\phi_t(x_t)|>y\}dy \\
&=  k\omega +   \int_{\omega}^{C}  \left(\sum_{t=1}^{k} \mathbf{1}\{ |\phi_t(x_t)|>y\}\right)dy \\
&\le  k\omega+ \int_{\omega}^{C}  \left( \left\{\frac{\beta}{y}+1\right\} d +1\right)dy  
\\
&\le  (d+1)C + d\beta \log(C/\omega) + k\omega,
\end{align*}
where the one but last inequality follows from Lemma \ref{lem:density} and the monotonicity of $\ell_1$-norm eluder dimension.
\end{proof}

\subsection{Proof of Proposition \ref{prop:linear-regret}}

\begin{proof}
	Given  $v\in[Kn]$,  denote $p_v = \lfloor \frac{v-1}{n} \rfloor+1$ and $q_v=v- n\lfloor \frac{v-1}{n} \rfloor$.
	Note that $1\le p_v\le K$ and $1\le q_v \le n$.
	We introduce the following two auxiliary sequences:
		\begin{equation}
			\begin{cases}
				&\zeta_{v,j} = w_{ p_v,j }\\
				& z_v = x_{p_v,q_v}
			\end{cases}
			 \quad (v,j)\in[Kn]\times[m]\,.
		\end{equation}
	By the definition of $(\zeta_{v,j})$,$(z_j)$ and  precondition \eqref{eq:lrcond},
		\begin{equation}
			\begin{aligned}
		\sum_{t=1}^{v-1} \sum_{j=1}^m  | \zeta_{v,j} \trans z_t|
			 =
			\sum_{k=1}^{p_v - 1} \sum_{i=1}^n \sum_{j=1}^m
		| w_{p_v,j} \trans x_{k,i}|  +
		\sum_{i=1}^{q_v-1}\sum_{j=1}^m
		| w_{p_v,j} \trans x_{p_v,i}| \le \gamma_{p_v} +  R_w R_x. 
			\end{aligned}
		\end{equation}
	Notice that for all $k\in[K]$,
		$$\sum_{v=1}^{kn} \sum_{j=1}^{m}   | \zeta_{v,j} \trans z_{v}|=
		\sum_{t=1}^{k} \sum_{i=1}^n \sum_{j=1}^m
		| w_{t,j} \trans x_{t,i}|.$$
So in order to prove 	Proposition \ref{prop:linear-regret}, it suffices to upper bound the value of the following  optimization problem:
\begin{equation}\label{opt:1}
\begin{aligned}
    &\max_{\zeta,z} \sum_{v=1}^{kn}
     \sum_{j=1}^m 
     \left|\langle\zeta_{v,j}, z_v \rangle\right|\\
     \text{s.t. for any $v$}: &  \begin{cases} \sum_{t=1}^{v-1} \sum_{j=1}^m  | \zeta_{v,j} \trans z_t|\le \gamma_{p_v} +  R_w R_x,\\
		\sum_{j=1}^m \| \zeta_{v,j} \|_2 \le R_w, \\
		 \| z_{v} \|_2 \le R_x.
     \end{cases}
 \end{aligned}
\end{equation}
Now, we make the observation that the above optimization problem has the same optimal value as the following one (we assume all vectors in this section have the same dimension):
\begin{equation}\label{opt:2}
	\begin{aligned}
		&\max_{\zeta,z} \sum_{v=1}^{kn} 
		 \left|\langle\zeta_{v}, z_v \rangle\right|\\
		 \text{s.t. for any $v$}: &  \begin{cases} \sum_{t=1}^{v-1}   | \zeta_{v} \trans z_t|\le \gamma_{p_v} +  R_w R_x,\\
			 \| \zeta_{v} \|_2 \le R_w, \\
			 \| z_{v} \|_2 \le R_x.
		 \end{cases}
	 \end{aligned}
	\end{equation}
	It is direct to see the optimal value of Problem \eqref{opt:2} is always no larger than that of Problem \eqref{opt:1}. For the other direction, suppose $\{\zeta_{v,j}^\star\}_{(v,j)\in[Kn]\times[m]}$ and $\{z^\star_v\}_{v\in[Kn]}$ are   optimal solution to Problem \eqref{opt:1}. Consider 
    $$
\zeta_{v}^\star = \sum_{j=1}^m \zeta_{v,j}^\star \times \text{sign}({\zeta_{v,j}^\star}\trans z_v^\star)\quad  \text{ for } k\in[K]. $$
One can easily verify $\{\phi_k^\star\}_{k\in[K]}$ and $\{z^\star_v\}_{v\in[Kn]}$ are  feasible solution to Problem \eqref{opt:1} by triangle inequality. 
Moreover, its objective value is 
$$
\sum_{v=1}^{kn} 
    \left| \sum_{j=1}^m \left(\zeta_{v,j}^\star \times \text{sign}({\zeta_{v,j}^\star}\trans z_v^\star)\right) \trans z_{v}^\star\right|
    = 
    \sum_{v=1}^{kn}   \sum_{j=1}^m \left|{\zeta_{v,j}^\star}\trans z_v^\star\right|,
$$
which is equal to the optimal value of Problem \eqref{opt:1}.
Therefore, it suffices to upper bound the optimal value of Problem \eqref{opt:2}.

Finally, by applying  Proposition \ref{prop:de-regret} to $d$-dimensional linear function class, we can  upper bound the optimal value of Problem \eqref{opt:2} by
$$
\min_{\omega \le C} \left\{ d_E(\omega) C + d_E(\omega) \lambda \log(C/\omega) + kn\omega \right\}
$$
where 
$$
\begin{cases}
d_E(\omega) = d\log(e+R_w R_x/\omega)\\
C =  R_w R_x\\
\lambda = \gamma_{p_v} +  R_w R_x.
\end{cases}
$$
We conclude the proof by choosing $\omega = R_w R_x/(kn)$.
\end{proof}

\section{Proofs for Undercomplete POMDPs}
\label{app:proof-under}

In this section, we prove  Theorem \ref{thm:under} with a specific  polynomial dependency as stated in the following theorem.
\begin{theorem}[Regret of \omle] \label{thm:under-rate}
	There exists an absolute constant $c>0$ such that for any $\delta\in(0,1]$ and $S,A,O,H,K\in\N$,  if we choose $\beta = c\left(H(S^2A+SO)\log(SAOHK)+\log(K/\delta)\right)$ in Algorithm \ref{alg:under}, then,
	for any POMDP with $S$ states, $A$ actions, $O$ observations and horizon $H$ and
	satisfying Assumption \ref{asp:under},
	  with probability at least $1-\delta$,%
	  $$   {\rm Regret}(k)\le \tilde{\cO}\left( \frac{S^{2}AO}{\alpha^2} \sqrt{k(S^2A+SO)} \times \poly(H)\right)  \qquad \text{ for all }k\in[K].$$
	\end{theorem}

To begin with, we recall (or introduce) some notations that will be needed in the proof.

\paragraph{Notation} Denote by $\tau_h=(o_1,a_1,\ldots,o_h,a_h)$  a trajectory up to step $h$. 
(That is, for a natural number $h$, $\tau_h\in (\fO \times \fA)^h$, by convention.)
Given a policy $\pi$ and a trajectory $\tau_h$, define $\pi(\tau_h) = \prod_{h'=1}^h \pi(a_{h'}\mid \tau_{h'-1},o_{h'})$. This is the part of the probability of  $\tau_h$ that can be attributed to the policy $\pi$.
We use $\theta^\star=(\T,\O,\mu_1)$ to denote the parameters of the  groundtruth POMDP that the learner interacts with, and denote by $\B$ the observable operators corresponding to $\theta^\star$ and by 
 $\b_0$ the initial observation distribution. Formally,
\begin{equation}\label{eq:op-under}
	\begin{cases}
&  \b_0 = \O_1 \mu_1\in\R^O,\\
& \B_h(o,a) = \O_{h+1}\T_{h,a}  \diag(\O_h(o\mid \cdot))\O_h^\dagger\in\R^{O\times O},
	\end{cases}
\end{equation}
where $\O_h(o\mid \cdot)$ is the $o^{\rm th}$ row of the observation matrix $\O_h$ and $\diag(\O_h(o\mid \cdot))$ is a diagonal matrix with diagonals equal to $\O_h(o\mid \cdot)$. 
We use $\theta^k$ to denote the optimistic estimate of the POMDP model  in the $k^{\rm th}$ episode of Algorithm \ref{alg:under}, and denote by $\pi^k$ the optimal policy of  model $\theta^k$, i.e., 
 $(\theta^k,\pi^k) = \argmax_{\theta\in\cB^{k-1},\pi}V_1^\pi( \theta)$. 
 Finally, we will use $\B^k$ and $\b_0^k$ to denote the operators corresponding to $\theta^k$.

  For a matrix $\A$, $p\ge 1$, let $\|\A\|_{p} = \sup_{x:\norm{x}_p\le 1} \norm{\A x}_p$.
We start with a lemma that bounds $\|\B_h(o,a)\|_{1}$ as a function of $\alpha$:
\begin{lemma}\label{lem:11bound}
Under Assumption~\ref{asp:under} it holds that
for any $(o,a,h,k)\in \fO\times \fA\times[H-1] \times \N$, 
$$\|\B_h(o,a)\|_{1},\|\B_h^k(o,a)\|_{1}  \le \sqrt{S}/\alpha.$$
\end{lemma} 
\begin{proof}
By definition,
\begin{align*}
	\|\B_h(o,a)\|_{1} &= \|\O_{h+1}\T_{h,a}  \diag(\O_h(o\mid \cdot))\O_h^\dagger\|_{1}\\
	& \le  \|\O_{h+1}\|_{1} \times \|\T_{h,a}\|_{1} \times \|\diag(\O_h(o\mid \cdot))\|_{1}\times\|\O_h^\dagger\|_{1}\\
	& \le \|\O_h^\dagger\|_{1} 
	=\sup_{x:\|x\|_1\le 1}\|\O_h^\dagger x \|_{1} \le\sqrt{S}\sup_{x:\|x\|_2\le 1}\|\O_h^\dagger x \|_{2} \le \sqrt{S}/\alpha,
\end{align*}
where the last inequality uses Assumption \ref{asp:under}. The same arguments also give $\|\B_h^k(o,a)\|_{1}  \le \sqrt{S}/\alpha$.
\end{proof}
 
\paragraph{Computing probabilities using the operators}
As it is well known \citep[e.g.,][]{jin2020sample}, given a set of POMDP parameters $\theta$, its corresponding operators satisfy the following relations for any policy $\pi$ and trajectory $\tau_h=(o_1,a_1,\ldots,o_h,a_h)$:
 \begin{equation}\label{eq:prob-op}
	\begin{cases}
	 \P_\theta^\pi(\tau_h) = \pi(\tau_h)\cdot
	 \left(\e_{o_h}\trans\B_{h-1}(o_{h-1},a_{h-1};\theta)\cdots\B_1(o_1,a_1;\theta)\b_0(\theta)\right)\in\R,\\
	 \P_\theta^\pi(\tau_h,o_{h+1}) = \pi(\tau_h)\cdot
	 \left(\e_{o_{h+1}}\trans\B_h(o_{h},a_{h};\theta)\cdots\B_1(o_1,a_1;\theta)\b_0(\theta)\right)\in\R,\\
	 \P_\theta^\pi(o_1)= \e_{o_1}\trans\b_0(\theta)\in\R^O,
	\end{cases}
 \end{equation}
 where $\P_\theta^\pi(\tau_h)$ denotes the probability of observing $\tau_h$ under policy $\pi$ in the POMDP model defined by $\theta$, and $(\{\B_h(o,a;\theta)\}_{(o,a,h)\in\fO\times\fA\times[H-1]},\b_0(\theta))$ denote the operators  corresponding to $\theta$ (Equation~\eqref{eq:ps-3} in the main body is the first case in Equation~\eqref{eq:prob-op}.)

 \paragraph{Proof roadmap} The proof consists of three main steps. In the first step, we upper bound the distribution estimation error  by the   operator estimation error. The proof of this follows arguments similar to those used by \cite{jin2020sample}.
 Then we leverage the MLE guarantee developed in Appendix \ref{app:mle} to derive certain constraints satisfied by the operator estimation error. Finally, we use the $\ell_1$-norm  eluder dimension framework developed in Appendix \ref{app:eluder} to bridge the results in the first two steps and obtain the desired regret guarantee.

\subsection{Step $1$: bound the regret by the error of operator estimates}
We first present a lemma that upper bounds the cumulative suboptimality of $\pi_1,\ldots,\pi_k$ by the cumulative density estimation error. 
\begin{lemma}\label{lem:opt2dist}
In Algorithm \ref{alg:under} and \ref{alg:over}, if we choose $\beta$ according to Theorem \ref{thm:under} and  \ref{thm:over} respectively, then with probability at least $1-\delta$, 
\begin{equation}\label{eq:optimisim-under}
	\sum_{t=1}^k V^{\pi^t}( {\theta^t}) - V^{\pi^t}( \theta^\star)
	   \le  H \sum_{t=1}^k \sum_{\tau_H   } | \P^{\pi^t}_{{\theta^t}}(\tau_H) - \P^{\pi^t}_{\theta^\star}(\tau_H)|.
   \end{equation}
\end{lemma}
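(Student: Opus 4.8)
The plan is to prove Lemma~\ref{lem:opt2dist} by combining the optimism property of the confidence sets with the fact that the value function is a bounded linear functional of the trajectory distribution. The key structural observation is the following. Conditioned on the high-probability event of Proposition~\ref{prop:mle-optimism}, the groundtruth parameter $\theta^\star$ satisfies $\theta^\star\in\cB^k$ for every $k\in[K]$ (this is exactly what one reads off from Proposition~\ref{prop:mle-optimism} given the chosen $\beta$, since the log-likelihood gap for $\theta^\star$ is bounded by $\beta$ and $\theta^\star$ lies in $\cB^1$ by Assumption~\ref{asp:under}/\ref{asp:over}). I will carry out the argument on this event, which holds with probability at least $1-\delta$.

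\textbf{Step 1: Optimism.} Since $(\theta^t,\pi^t)=\argmax_{\hat\theta\in\cB^t,\pi}V^\pi(\hat\theta)$ and $\theta^\star\in\cB^t$, the pair $(\theta^\star,\pi^\star)$ is a feasible candidate in the maximization, so for every $t$,
\begin{equation*}
V^{\pi^t}(\theta^t)=\max_{\hat\theta\in\cB^t,\pi}V^\pi(\hat\theta)\ge \max_\pi V^\pi(\theta^\star)=V^\star\ge V^{\pi^t}(\theta^\star).
\end{equation*}
In particular $V^{\pi^t}(\theta^t)-V^{\pi^t}(\theta^\star)\ge 0$, so summing over $t\le k$ and inserting $V^\star$ shows the left-hand side of \eqref{eq:optimisim-under} dominates the true cumulative suboptimality $\sum_{t\le k}(V^\star-V^{\pi^t}(\theta^\star))$; but more importantly it gives a quantity we can now bound from above.

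\textbf{Step 2: From value gap to distribution gap.} Here I use that the value is an average of per-step rewards $r_h(o_h)\in[0,1]$, hence for a \emph{fixed} policy $\pi^t$ the difference $V^{\pi^t}(\theta^t)-V^{\pi^t}(\theta^\star)$ is an expectation of the bounded functional $\sum_{h=1}^H r_h(o_h)\in[0,H]$ taken against the two trajectory laws $\P^{\pi^t}_{\theta^t}$ and $\P^{\pi^t}_{\theta^\star}$ over full trajectories $\tau_H$. Writing the value as $V^{\pi}(\theta)=\sum_{\tau_H}\P^{\pi}_\theta(\tau_H)\big(\sum_{h=1}^H r_h(o_h)\big)$ and taking the difference, each term is bounded in absolute value by $H\,|\P^{\pi^t}_{\theta^t}(\tau_H)-\P^{\pi^t}_{\theta^\star}(\tau_H)|$, so
\begin{equation*}
V^{\pi^t}(\theta^t)-V^{\pi^t}(\theta^\star)\le \Big|\sum_{\tau_H}\big(\P^{\pi^t}_{\theta^t}(\tau_H)-\P^{\pi^t}_{\theta^\star}(\tau_H)\big)\textstyle\sum_{h}r_h(o_h)\Big|\le H\sum_{\tau_H}\big|\P^{\pi^t}_{\theta^t}(\tau_H)-\P^{\pi^t}_{\theta^\star}(\tau_H)\big|.
\end{equation*}
Summing over $t=1,\dots,k$ yields \eqref{eq:optimisim-under}. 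This is essentially the chain of inequalities already previewed in \eqref{eq:ps-1} of the proof overview.

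\textbf{The main obstacle} is not the algebra, which is routine, but correctly invoking the containment $\theta^\star\in\cap_k\cB^k$: this is where the $1-\delta$ probability enters, and it must be established from Proposition~\ref{prop:mle-optimism} with the precise $\beta$ prescribed in Theorems~\ref{thm:under} and \ref{thm:over}. I would verify that the confidence-set radius $\beta$ is at least the MLE deviation bound from Proposition~\ref{prop:mle-optimism}, so that the defining inequality of $\cB^{k+1}$ (namely $\sum_{(\pi,\tau)\in\cD}\log\P^\pi_{\theta^\star}(\tau)\ge \max_{\theta'}\sum\log\P^\pi_{\theta'}(\tau)-\beta$) holds simultaneously for all $k$; the intersection with $\cB^1$ is harmless because $\theta^\star$ is $\alpha$-weakly revealing by assumption. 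Everything else (the reward-functional bound and the triangle/total-variation step) is elementary and uniform across both the undercomplete and overcomplete settings, which is why the single lemma covers Algorithm~\ref{alg:under} and Algorithm~\ref{alg:over} at once.
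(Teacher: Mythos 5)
Your proposal is correct and follows essentially the same route as the paper's proof: invoke Proposition~\ref{prop:mle-optimism} with the prescribed $\beta$ to get $\theta^\star\in\cap_{t}\cB^t$ on a $1-\delta$ event, use optimism of $(\theta^t,\pi^t)$, and then bound the value gap by $H$ times the total-variation distance between the trajectory laws since the cumulative reward of any trajectory lies in $[0,H]$. The only difference is that you spell out the value-as-bounded-linear-functional step in slightly more detail, which the paper leaves as a one-line remark.
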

\begin{proof}
By the choice of $\beta$ and Proposition \ref{prop:mle-optimism}, we have $\theta^\star\in\cB^t$ for all $t\in[K]$ with probability at least $1-\delta$.
In what follows, we assume that the event $\theta^\star\in\cap_{t\in [K]}\cB^t$ holds.
On this event, by the optimism of $\theta^t$ and $\pi^t$ for $t\in [k]$,
\begin{equation}
\sum_{t=1}^k V^{\star}( {\theta^\star}) - V^{\pi^t}( \theta^\star)
\le 
\sum_{t=1}^k V^{\pi^t}( {\theta^t}) - V^{\pi^t}( \theta^\star).
\end{equation}
Because the cumulative reward of each trajectory is bounded by $H$, we conclude
\begin{equation}
 \sum_{t=1}^k V^{\pi^t}( {\theta^t}) - V^{\pi^t}( \theta^\star)
	\le  H \sum_{t=1}^k \sum_{\tau_H   } | \P^{\pi^t}_{{\theta^t}}(\tau_H) - \P^{\pi^t}_{\theta^\star}(\tau_H)|.
\end{equation}
\end{proof}
As a result, to prove Theorem \ref{thm:under},  it suffices to upper bound the RHS of Equation \eqref{eq:optimisim-under}.

To begin with, we represent the probability of observing $\tau_H$ by  the product of operators using  Equation \eqref{eq:prob-op}, which gives 
\begin{align*}
	\MoveEqLeft \sum_{t=1}^k \sum_{\tau_H   } \left| \P^{\pi^t}_{{\theta^t}}(\tau_H) - \P^{\pi^t}_{\theta^\star}(\tau_H)\right|\\
	&=   \sum_{t=1}^k \sum_{\tau_H   } \left| \e_{o_{H}}\trans \left(\prod_{h=1}^{H-1}\B^t_h(o_{h},a_{h})\right) \b^t_0 - 
	\e_{o_H}\trans\left(\prod_{h=1}^{H-1}\B_h(o_{h},a_{h})\right)\b_0\right| \times \pi^t(\tau_H)\\
	&= \sum_{t=1}^k \sum_{\tau_{H-1}   } \left\|  \left(\prod_{h=1}^{H-1}\B^t_h(o_{h},a_{h})\right) \b^t_0 - 
	\left(\prod_{h=1}^{H-1}\B_h(o_{h},a_{h})\right)\b_0\right\|_1 \times \pi^t(\tau_{H-1}).
\end{align*}
	By using the lemma below, 
	we can  further control the difference between two  products of operators by the difference between each pair of operators. The proof of a more general version of this lemma is given in Section~\ref{sec:auxlemmas}.
\begin{lemma}\label{lem:prod-triangle-under}
For any $k\in\N$, $h\in[H-1]$ and policy $\pi$
\begin{align*}
	\MoveEqLeft
\sum_{\tau_{h}  } \left\| \B^k_h(o_{  h},a_{  h})\cdots \B^k_1(o_1,a_1)\b^k_0 -  \B_h(o_h,a_h)\cdots \B_1(o_1,a_1)\b_0\right\|_1	\times \pi(\tau_{h})\\
	&\le  \frac{\sqrt{S}}{\alpha}  \left(\sum_{j=1}^{h} \sum_{\tau_{j}   } 
	\left\| (\B^k_j(o_j,a_j) - \B_j(o_j,a_j))\b(\tau_{j-1})\right\|_1 \times \pi(\tau_{j}) + \|\b^k_0-\b_0\|_1\right),
	\end{align*}
	where for a trajectory $\tau_h=(o_1,a_1,\ldots,o_h,a_h)$, $
\b(\tau_h) = 
\left(\prod_{h'=1}^h \B_{h'}(o_{h'},a_{h'}) \right)\b_0$.
\end{lemma}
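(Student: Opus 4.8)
The plan is to prove the bound by a telescoping (hybrid) decomposition that swaps the estimated operators $\B^k_i$ for the true operators $\B_i$ one step at a time, and then to control each resulting term with a contraction estimate showing that a product of estimated observable operators, summed over tail observations and weighted by the policy, is $\ell_1$-nonexpansive up to a single surviving factor $\sqrt{S}/\alpha$. The key structural fact I would exploit is that every model in $\cB^1$ is $\alpha$-weakly revealing, so the estimated emission matrices $\O^k_i$ have full column rank with $\sigma_S(\O^k_i)\ge\alpha$; this is what makes the contraction (and the cancellation below) go through for $\B^k$ exactly as Lemma~\ref{lem:11bound} handled the true operators.

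First I would introduce hybrid vectors that interpolate between the two products. For a fixed trajectory $\tau_h$, set
$$V_j = \Big(\prod_{i=j+1}^{h}\B^k_i(o_i,a_i)\Big)\Big(\prod_{i=1}^{j}\B_i(o_i,a_i)\Big)\b_0\qquad(0\le j\le h),$$
(with the conventions that the empty product is the identity and that higher indices stand to the left), so that $V_h=\b(\tau_h)$ uses only true operators while $V_0$ uses only estimated operators on the true $\b_0$; separately set $V_{-1}=\big(\prod_{i=1}^{h}\B^k_i(o_i,a_i)\big)\b^k_0$ for the fully estimated product. The telescoping identity then reads
$$\Big(\prod_{i=1}^{h}\B^k_i\Big)\b^k_0-\b(\tau_h)=\Big(\prod_{i=1}^{h}\B^k_i\Big)(\b^k_0-\b_0)+\sum_{j=1}^{h}\Big(\prod_{i=j+1}^{h}\B^k_i\Big)\big(\B^k_j(o_j,a_j)-\B_j(o_j,a_j)\big)\b(\tau_{j-1}),$$
where I used $\big(\prod_{i=1}^{j-1}\B_i(o_i,a_i)\big)\b_0=\b(\tau_{j-1})$. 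Applying the triangle inequality for $\|\cdot\|_1$ pointwise in $\tau_h$ and then summing against $\pi(\tau_h)$ reduces the lemma to bounding each term above.

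The core step is the contraction estimate. Writing $\B^k_i(o,a)=\O^k_{i+1}\T^k_{i,a}\diag(\O^k_i(o\mid\cdot))(\O^k_i)^\dagger$ and using $(\O^k_i)^\dagger\O^k_i=\I_S$ (valid since $\sigma_S(\O^k_i)\ge\alpha>0$), adjacent pseudoinverse–emission factors cancel inside $\prod_{i=j+1}^{h}\B^k_i$, leaving exactly one surviving factor $(\O^k_{j+1})^\dagger$ at the right end and a stochastic chain of transition and diagonal-emission matrices capped on the left by $\O^k_{h+1}$. I would then peel off steps from $i=h$ down to $i=j+1$: at each stage the transition matrix $\T^k_{i,a_i}$ and the leading $\O^k_{h+1}$ are column-stochastic (hence $\ell_1$-nonexpansive), $\sum_{o_i}\diag(\O^k_i(o_i\mid\cdot))=\I_S$ because emission columns sum to one, and $\sum_{a_i}\pi(a_i\mid\tau_{i-1},o_i)=1$ absorbs the policy weights. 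This yields, with $w=(\B^k_j-\B_j)\b(\tau_{j-1})$,
$$\sum_{o_{j+1},a_{j+1},\dots,o_h,a_h}\Big(\prod_{i=j+1}^{h}\pi(a_i\mid\tau_{i-1},o_i)\Big)\Big\|\Big(\prod_{i=j+1}^{h}\B^k_i(o_i,a_i)\Big)w\Big\|_1\le\|(\O^k_{j+1})^\dagger w\|_1\le\frac{\sqrt{S}}{\alpha}\|w\|_1,$$
where the last inequality uses $\|(\O^k_{j+1})^\dagger w\|_1\le\sqrt{S}\,\|(\O^k_{j+1})^\dagger w\|_2\le(\sqrt{S}/\alpha)\|w\|_1$ exactly as in Lemma~\ref{lem:11bound}. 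Factoring $\pi(\tau_h)=\pi(\tau_j)\prod_{i=j+1}^{h}\pi(a_i\mid\tau_{i-1},o_i)$ then turns the $j$-th telescoped term into $\frac{\sqrt{S}}{\alpha}\sum_{\tau_j}\pi(\tau_j)\|(\B^k_j-\B_j)\b(\tau_{j-1})\|_1$, and the same contraction applied to the whole product (now with the single factor $(\O^k_1)^\dagger$ acting on $\b^k_0-\b_0$) bounds the initial term by $\frac{\sqrt{S}}{\alpha}\|\b^k_0-\b_0\|_1$. Summing the contributions gives precisely the claimed inequality.

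The main obstacle I anticipate is obtaining contraction without an exponential blow-up: naively bounding each of the $h-j$ factors by $\|\B^k_i\|_{1,1}\le\sqrt{S}/\alpha$ would produce the useless constant $(\sqrt{S}/\alpha)^{h-j}$. The crux is therefore the algebraic cancellation $(\O^k_i)^\dagger\O^k_i=\I_S$, which collapses the product into a single $\ell_1$-nonexpansive stochastic chain with only one surviving pseudoinverse, combined with careful bookkeeping so that summing observations (via emission and transition stochasticity) and summing actions (via policy normalization) each contribute a contraction factor of exactly one. A secondary but essential point is that the hybrid vectors must keep the \emph{true} belief $\b(\tau_{j-1})$—rather than an estimated belief—to the right of the swapped operator; this is exactly what makes the right-hand side of the telescoping match the desired form and dictates the particular choice of $V_j$ above.
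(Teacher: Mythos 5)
Your proof is correct and follows essentially the same route as the paper's proof of the general version (Lemma \ref{lem:prod-triangle-over}): a telescoping decomposition that swaps one operator at a time, followed by showing that the policy-weighted, observation-summed product of estimated operators is $\ell_1$-nonexpansive up to a single surviving pseudoinverse factor contributing $\sqrt{S}/\alpha$. The only (cosmetic) difference is that you verify the contraction by explicit cancellation $(\O^k_i)^\dagger\O^k_i=\I_S$ and peeling of stochastic factors, whereas the paper verifies the same fact by noting that $\B^k_{h:j+1}\M^k_{j+1}\e_i$ is a vector of conditional trajectory probabilities summing to one.
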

As noted in the main text, this lemma abuses notations in a few ways: In the innermost sum over the observation-action trajectories $\tau_j$ of length $j$, $\tau_{j-1}$ refers to the prefix of $\tau_j$ where the last observation-action is dropped. Also, in this sum, $(o_j,a_j)$ refer to the last observation-action pair of $\tau_j$.

Putting things together, we get
\begin{align*}
	\MoveEqLeft
	\sum_{t=1}^k \sum_{\tau_H   } | \P^{\pi^t}_{{\theta^t}}(\tau_H) - \P^{\pi^t}_{\theta^\star}(\tau_H)|\\
	&\le 
	\sum_{t=1}^k \sum_{\tau_{H-1}   } \left\|  \left(\prod_{h=1}^{H-1}\B^t_h(o_{h},a_{h})\right) \b^t_0 - 
	\left(\prod_{h=1}^{H-1}\B_h(o_{h},a_{h})\right)\b_0\right\|_1 \times \pi^t(\tau_{H-1})\\
	&\le  \frac{\sqrt{S}}{\alpha}  \left(\sum_{t=1}^k  \sum_{h=1}^{H-1} \sum_{\tau_{h}   } 
	\left\| \left(\B_h(o_h,a_h) - \B^t_h(o_h,a_h)\right)\b(\tau_{h-1})\right\|_1 \times \pi^t(\tau_{h}) + \|\b_0-\b^t_0\|_1\right). \numberthis \label{eq:tvbound1}
\end{align*}

\subsection{Step $2$: derive  constraints for the operator estimates from OMLE}
\label{subsec:step2}

By the construction of $\cB^k$,
$ \sum_{i=1}^{k-1} \log\left(\frac{\P^{\pi^i}_{{\theta}^\star}(\tau ^i)}{\P^{\pi^i}_{{\theta^k}}(\tau ^i)}\right)\le \beta$.
Therefore, by Proposition \ref{prop:mle-valid} and the choice of $\beta$, 
we have with probability at least $1-\delta$,
\begin{equation*}
	\text{ for all $k\in[K]$: } \quad \sum_{t=1}^{k-1}\left(\sum_{\tau_H}  \left|\P^{\pi^t}_{\theta^k}(\tau_H) - \P^{\pi^t}_{\theta^\star}(\tau_H)\right|\right)^2 = \Ocal( \beta).
\end{equation*}
In the rest of the proof, we assume that the event above is true.

By the Cauchy-Schwarz inequality, 
\begin{equation*}
	\sum_{t=1}^{k-1}\sum_{\tau_H}  \left|\P^{\pi^t}_{\theta^k}(\tau_H) - \P^{\pi^t}_{\theta^\star}(\tau_H)\right| = \Ocal( \sqrt{\beta k}), \  \text{ for all } k\in[K].
\end{equation*}
Since marginalizing two distributions will not increase their distance, from the previous inequality
we also have that for all $(k,h)\in[K]\times[H-1]$,
\begin{equation*}
\begin{cases}
	\sum_{t=1}^{k-1} \sum_{\tau_h,o_{h+1}} \left| \P^{\pi^t}_{\theta^k}(\tau_h,o_{h+1}) - \P^{\pi^t}_{\theta^\star}(\tau_h,o_{h+1})\right| = \Ocal( \sqrt{\beta k}),\\
	\sum_{t=1}^{k-1}\sum_{o_1}\left| \P^{\pi^t}_{\theta^k}(o_{1}) - \P^{\pi^t}_{\theta^\star}(o_{1})\right| = \Ocal( \sqrt{\beta k}).
\end{cases}
\end{equation*}
By equation \eqref{eq:prob-op}, we can replace the probability with the product of operators and obtain  for all $(k,h)\in[K]\times[H-1]$
\begin{equation}\label{eq:mle-1}
	\begin{cases}
	\sum_{t=1}^{k-1}  \sum_{\tau_h } \pi^t(\tau_h) \times \left \| \b^k(\tau_h) - \b(\tau_h)\right\|_1	= \Ocal( \sqrt{\beta k}), \\
\left \| \b_0^k - \b_0 \right \|_1 =  \Ocal( \sqrt{\beta/k}),
	\end{cases}
\end{equation}
where 
we used that the initial distributions over the observations are independent of the policies used and where 
$
\b(\tau_h) = 
 \left(\prod_{h'=1}^h \B_{h'}(o_{h'},a_{h'}) \right)\b_0
$ and $
\b^k(\tau_h) = 
 \left(\prod_{h'=1}^h \B^k_{h'}(o_{h'},a_{h'}) \right)\b_0^k.
$

The second part of Equation \eqref{eq:mle-1} shows that the parameter estimates are improving in terms of how well the initial observation distribution is estimated (which is unsurprising).
Our goal now is to use the first part of this equation to show that the estimated observable operators are also getting closer to their groundtruth counterparts.
To show this, first notice that 
by the triangle inequality, for any $(k,h)\in[K]\times[H-1]$,
\begin{align*}
 \MoveEqLeft  \sum_{t=1}^{k-1}  \sum_{ \tau_h} \pi^t(\tau_h) \times 
	\left\| \left(\B^k_h(o_h,a_h) - \B_h(o_h,a_h)\right)\b(\tau_{h-1})\right\|_1 \\
	\le &  \sum_{t=1}^{k-1}   \sum_{ \tau_h} \pi^t(\tau_h) \times 
	\left\| \B^k_h(o_h,a_h) \left(\b^k(\tau_{h-1})-\b(\tau_{h-1})\right)\right\|_1  \\
	&+ 
	 \sum_{t=1}^{k-1}  \sum_{ \tau_h} \pi^t(\tau_h) \times
	\left\| \B^k_h(o_h,a_h)\b^k(\tau_{h-1}) - \B_h(o_h,a_h)\b(\tau_{h-1})\right\|_1.
	\end{align*}
	In the last display, 
	the second term is at most $\cO(\sqrt{k\beta})$ by \eqref{eq:mle-1}. For the first term, we can apply Lemma \ref{lem:OP3} and obtain
	\begin{align*}
	 \sum_{t=1}^{k-1}  \sum_{ \tau_h} \pi^t(\tau_h) \times 
	\left\| \B^k_h(o_h,a_h) \left(\b^k(\tau_{h-1})-\b(\tau_{h-1})\right)\right\|_1 =  \cO\left(\frac{\sqrt{S}}{\alpha} \sqrt{k\beta}\right).
	\end{align*}
Putting things together, we conclude that  with probability at least $1-\delta$: 
	\begin{equation}\label{eq:mle-2}
		\sum_{t=1}^{k-1}  \sum_{ \tau_h} \pi^t(\tau_h) \times 
	\left\| \left(\B_h^k(o_h,a_h) - \B_h(o_h,a_h)\right)\b(\tau_{h-1})\right\|_1= \cO\left( \frac{\sqrt{S}}{\alpha} \sqrt{k\beta}\right)
	\end{equation}
	for all $(k,h)\in[K]\times[H-1]$.

\subsection{Step $3$: bridge Step $1$ and $2$ via $\ell_1$-norm eluder dimension}
\label{subsec:step3}

By Step $1$ (in particular, by Equation~\eqref{eq:tvbound1}), 
to control the total estimation error of the distributions over trajectories, 
it suffices to upper bound the following quantity for all $k\in[K]$:
$$
\textbf{Target: }\sum_{t=1}^k \left(\sum_{h=1}^{  H-1} \sum_{\tau_{h}   } 
\left\| \left(\B^t_h(o_h,a_h) - \B_h(o_h,a_h)\right)\b(\tau_{h-1})\right\|_1 \times \pi^t(\tau_{h}) + \left\|\b^t_0-\b_0\right\|_1\right).
$$
By Step $2$, we have that with probability at least $1-\delta$: for all  $(k,h)\in[K]\times[H-1]$
\begin{equation*}
\textbf{Condition: }
\begin{cases}
	&\left \| \b_0^k - \b_0 \right \|_1   = \Ocal( \sqrt{\beta/k}),\\
&\sum_{t=1}^{k-1}  \sum_{ \tau_h} 
	\left\| \left(\B^k_h(o_h,a_h) - \B_h(o_h,a_h)\right)\b(\tau_{h-1})\right\|_1 \times \pi^t(\tau_h) = \cO\left( \frac{\sqrt{S}}{\alpha} \sqrt{k\beta}\right).
\end{cases}
\end{equation*}

From now on, we assume that the two conditions of the last display hold and will show that this suffices to bound the ``target'' above.
By using the first inequality in the condition, the summation of the second term in the target is upper bounded by $\cO(\sum_{t=1}^k \sqrt{\beta/t}) = \cO(\sqrt{k\beta})$.  It remains to control the summation of the first term in the target.

Now let us consider an \emph{arbitrary fixed} triple $(o,a,h)\in\fO \times\fA \times[  H-1]$. Denote by $\X_{l}$ the $l^{\rm th}$ row of matrix $\X$. 
The condition implies that 
\begin{align}\label{eq:step3-1}
	\sum_{t=1}^{k-1}  \sum_{ \tau_h:\ (o_h,a_h)=(o,a)} \sum_{l\in[O]}
	\left| \left[\left(\B^k_h(o,a) - \B_h(o,a)\right)\O_h\right]_{l} \O_{h}^\dagger\b(\tau_{h-1})\right| \times \pi^t(\tau_h)
	 = \cO\left( \frac{\sqrt{S}}{\alpha} \sqrt{k\beta}\right).
\end{align}
To further simplify the notations, for $(t,l)\in [K]\times[O]$, let $w_{t,l} :=\left[\left(\B^t_h(o,a) - \B_h(o,a)\right)\O_h\right]_{l}$  and 
denote
 the $n:=(OA)^{h-1}$ not necessarily distinct elements in the 
sequence
$\{ \O_h^\dagger \b(\tau_{h-1})\times \pi^t(\tau_{h}): \ \tau_h=(o_1,a_1,\dots,o_{h-1},a_{h-1},o,a)\in (\fO\times\fA)^{h-1}\times(o,a) \}$
by $x_{t,1},\dots,x_{t,n}$. Using the newly defined notations, Equation \eqref{eq:step3-1} is equivalent to
\begin{equation}
	\sum_{t=1}^{k-1} \sum_{l=1}^{O}\sum_{i=1}^n
| w_{k,l}\trans x_{t,i}| = \cO\left( \frac{\sqrt{S}}{\alpha} \sqrt{k\beta}\right).
\end{equation}
We have three observations about the $x,w$ sequences:
\begin{itemize}
\item The vectors  $\{x_{t,i}\}_{i=1}^n$ satisfy $
\sum_{i=1}^n
\| x_{t,i}\|_1 \le 1$ for all $t$ because
\begin{align*}
\sum_{ \tau_h:\ (o_h,a_h)=(o,a)}  \|\O_h^\dagger \b(\tau_{h-1})\pi^t(\tau_h)\|_1
&\le \sum_{ \tau_h:\ (o_h,a_h)=(o,a)}  \|\O_h^\dagger\b(\tau_{h-1})\pi^t(\tau_{h-1})\|_1\\
&= \sum_{ \tau_{h-1}}  \|\O_h^\dagger\b(\tau_{h-1})\pi^t(\tau_{h-1})\|_1=1,
\end{align*}
where the final equality follows from $\|\O_h^\dagger\b(\tau_{h-1})\pi(\tau_{h-1})\|_1 = \P_{\theta^\star}^\pi(\tau_{h-1})$.  
\item The vectors  $\{w_{t,l}\}_{l=1}^O$ satisfy $
\sum_{l=1}^O
\| w_{t,l}\|_1 \le 1$ for all $t$. By definition and Lemma \ref{lem:11bound},   
$$
\sum_{l=1}^O
\| w_{t,l}\|_1 =  \| (\B_h(o,a)-\B_h^t(o,a)) \O_{h}\|_1\le S(\|\B_h(o,a)\|_1+\|\B^t_h(o,a)\|_1)\le \frac{2S^{1.5}}{\alpha}.
$$	
\end{itemize}
As a result, we can invoke Proposition \ref{prop:linear-regret} and obtain
\begin{equation}
	\sum_{t=1}^{k} \sum_{l=1}^O \sum_{i=1}^n
| w_{t,l}\trans x_{t,i}| = \tilde{\cO}\left( \frac{S^{1.5}H^2}{\alpha} \sqrt{k\beta}\right),
\ \text{ for all } k\in[K]. \label{eq:lrcor}
\end{equation}
Translating the above inequality back to the operator language gives that for an \emph{arbitrary fixed} tuple $(o,a,h)\in\fO\times\fA\times[  H-1]$
\begin{align*}
	\sum_{t=1}^{k}  \sum_{ \tau_h:\ (o_h,a_h)=(o,a)} 
	\left| \left(\B^t_h(o,a) - \B_h(o,a)\right)_{l}\b(\tau_{h-1})\right| \times \pi^t(\tau_h) = \tilde{\cO}\left( \frac{S^{1.5}H^2}{\alpha} \sqrt{k\beta}\right),
	\ \text{ for all } k\in[K]. 
\end{align*}
Therefore, we conclude that with probability at least $1-\delta$, for all $k\in[K]$,
$$
\sum_{t=1}^k\left(\sum_{h=1}^{  H-1} \sum_{\tau_{h}   } 
\left\| \left(\B^t_h(o_h,a_h) - \B_h(o_h,a_h)\right)\b(\tau_{h-1})\right\|_1 \times \pi^t(\tau_{h}) + \left\|\b^t_0-\b_0\right\|_1\right)= \tilde{\cO}\left( \frac{S^{1.5}OAH^3}{\alpha} \sqrt{k\beta}\right).
$$
Combining the above equality with Step $1$ gives:
$$   {\rm Regret}(k)\le \tilde{\cO}\left( \frac{S^{2}AO}{\alpha^2} \sqrt{k(S^2A+SO)} \times \poly(H)\right)  \qquad \text{ for all }k\in[K].$$
\begin{remark}
	We remark that the regret above  can be improved by a factor of $O$ if we adopt a more active  strategy for exploration in \omle. The modified algorithm is a special case of Algorithm \ref{alg:over} with $m=1$, which we decribe in details in Section \ref{sec:over}.
\end{remark}

\section{Proofs for Overcomplete POMDPs}
\label{app:proof-over}
In this section, we prove  Theorem \ref{thm:over} with a specific  polynomial dependency as stated in the following theorem.
\begin{theorem}[Total suboptimality of multi-step \omle]\label{thm:over-rate}
	There exists an absolute constant $c>0$ such that for any $\delta\in(0,1]$ and $S,A,O,K,H\in\N$, if we choose parameter $\beta$ in Algorithm \ref{alg:over} as $\beta = c\left(H(S^2A+SO)\log(SAOH)+\log(KH/\delta)\right)$, then, for any POMDP 
	with $S$ states, $A$ actions, $O$ observations and horizon $H$ and
	satisfying Assumption \ref{asp:over},
		with probability at least $1-\delta$, 
		$$   \sum_{t=1}^k \left( V^{\star}-V^{\pi^t} \right) \le \tilde{\Ocal}\left( \frac{S^2 A^{3m-2} }{\alpha^2}\sqrt{k(S^2A+SO)}\times\poly(H)\right) \qquad \text{ for all }k\in[K].$$
	\end{theorem}

The proof follows a similar three-step recipe to the undercomplete case: first we show that to control the regret it suffices to control the operator estimation error, then we derive some  constraints for the operator estimation from the MLE guarantee, and finally we utilize the constraints to upper bound the estimation error with the help of the $\ell_1$-norm eluder dimension framework. 
The main difference to the previous proof is only that the data generated for building the confidence sets is based on adding exploration to the policies $\pi^1,\dots,\pi^K$.

The reader may find it useful to first read Appendix \ref{app:proof-under} before reading this section as we will heavily reuse the notations and techniques developed there.  

\paragraph{Notation} 
As usual, we start by introducing (or recalling) the definitions of several notations that will be used in the proofs. 
Recall we 
define the $m$-step emission-action matrices $\{\M_h\in\R^{(A^{m-1}O^m)\times S}\}_{h\in[H-m+1]}$  as follows: 
For an observation sequence $\o$ of length $m$, initial state $s$ and action sequence $\a$ of length $m-1$,
we let $[\M_h]_{(\a,\o),s}$ be the probability of receiving  $\o$ provided that the action sequence $\a$ is used from state $s$ and step $h$: for all $(\a,\o)\in \fA^{m-1}\times \fO^m$ and $s\in\fS$
\begin{equation}
	[\M_h]_{(\a,\o),s}= \P(o_{h:h+m-1}=\o \mid 
	s_h=s,a_{h:h+m-2} =\a ).
\end{equation}

With slight abuse of notations, we reuse $\B$ and $\b$ to denote the operators for the general setting. Specifically, the operators corresponding to the groundtruth model  $\theta^\star$ is defined as:
\begin{equation}\label{eq:op-over}
\begin{cases}
&  \b_0 = \M_1 \mu_1\in\R^{A^{m-1}O^m},\\
& \B_h(o,a) = \M_{h+1} \T_{h,a} \diag(\O_h(o\mid \cdot)) \M_h^\dagger\in\R^{(A^{m-1}O^m)\times (A^{m-1}O^m)},
\end{cases}
\end{equation}
where the only difference from the undercomplete version (equation \eqref{eq:op-under}) is that we replace the single-step emission matrices  with the $m$-step emission-action matrices.

We use $\theta^k$ to denote the optimistic estimate of the POMDP model in the $k^{\rm th}$ iteration of Algorithm \ref{alg:over}, and denote by $\pi^k$ the optimal policy for model  $\theta^k$, i.e., 
$(\theta^k,\pi^k) = \argmax_{\theta\in\cB^{k-1},\pi}V_1^\pi(s_1;\theta)$. We  use $\B^k$ and $\b_0^k$ to denote the operators corresponding to $\theta^k$.

To simplicity the notation, throughout this section, we will use $\u\in(\fO\times\fA)^{m-1}\times \fO$ to denote a length-$(2m-1)$ observation-action sequence (the observations are interleaved with actions in $\u$). We use $\u_a\in\fA^{m-1}$ to refer to the action part of $\u$ and $\u_o\in\fO^{m}$ for the observation part.
We further define $\U:=(\fO\times \fA)^{m-1}\times \fO$ to be the collections of all the $(2m-1)$-length action-observation sequences.

\paragraph{Computing probabilities using the operators}
Similar to the undercomplete case, several useful relations hold for the newly defined operators.
In particular, for any policy $\pi$, trajectory $\tau_h$ (with $h\le H-m$ ) and observation-action sequence $\u\in \U$, it holds that 

\begin{equation}\label{eq:prob-op-over}
\begin{cases}
\P_\theta^\pi(\tau_h) \cdot \P_\theta(o_{h+1:h+m}=\u_o\mid \tau_h,a_{h+1:h+m-1}=\u_a) \\
\qquad \qquad = \pi(\tau_h)\cdot
\left(\e_{\u}\trans\B_h(o_{h},a_{h};\theta)\cdots\B_1(o_1,a_1;\theta)\b_0(\theta)\right),\\
	\P_\theta (o_{1:m}=\u_o \mid\ a_{1:m-1}=\u_a)  = \e_{\u}\trans\b_0(\theta),
\end{cases}
\end{equation}
where
$\e_\u$ is the standard basis vector in $\R^{|\U|}$ corresponding to $\u$,
\[
\left(\{\B_h(o,a;\theta)\}_{(o,a,h)\in\fO\times\fA\times[H-m]},\b_0(\theta)\right)
\] denote the operators corresponding to $\theta$, and $\P_\theta(o_{h+1:h+m}=\u_o\mid \tau_h,a_{h+1:h+m-1}=\u_a)$ denotes the probability of observing $\u_o$ at steps $h+1$ to $h+m$ conditioning on the first-$h$-steps history $\tau_h$ and the player following fixed action sequence $\u_a$ from step $h+1$ to $h+m-1$.

For a matrix $\A$, $p\ge 1$, let $\|\A\|_{p} = \sup_{x:\norm{x}_p\le 1} \norm{\A x}_p$.
We start with a lemma that bounds $\|\B_h(o,a)\|_{1}$ as a function of $\alpha$:
\begin{lemma}\label{lem:11bound-over}
Under Assumption~\ref{asp:over} it holds that
for any $(o,a,h,k)\in \fO\times \fA\times[H-m] \times \N$, 
$$\|\B_h(o,a)\|_{1},\|\B_h^k(o,a)\|_{1}  \le A^{m-1}\sqrt{S}/\alpha.$$
\end{lemma} 
We omint the proof of Lemma \ref{lem:11bound-over} here since it is basically the same as that of Lemma \ref{lem:11bound}.


\subsection{Step $1$: bound the regret by the error of operator estimates}

To begin with, we upper bound the cumulative regret by the summation of density estimation errors using Lemma \ref{lem:opt2dist}, 
\begin{equation}\label{eq:optimisim-over}
\sum_{t=1}^k V^{\pi^t}({\theta^t}) - V^{\pi^t}(\theta^\star)
\le  H \sum_{t=1}^k \sum_{\tau_H   } | \P^{\pi^t}_{{\theta^t}}(\tau_H) - \P^{\pi^t}_{\theta^\star}(\tau_H)|.
\end{equation}
As a result, to prove Theorem \ref{thm:over},  it suffices to upper bound the RHS of equation \eqref{eq:optimisim-over}.

Using the first relation in equation \eqref{eq:prob-op-over}, we can represent the probability of observing a trajectory by the product of operators, which gives 
\begin{align*}
	&  \sum_{t=1}^k \sum_{\tau_H   } \left| \P^{\pi^t}_{{\theta^t}}(\tau_H) - \P^{\pi^t}_{\theta^\star}(\tau_H)\right|\\
= &  \sum_{t=1}^k \sum_{\tau_H   } \bigg| \e_{(o_{H-m+1},\ldots,a_{H-1},o_H)}\trans \B^t_{H-m}(o_{H-m},a_{H-m})\cdots \B^t_1(o_1,a_1)\b^t_0 \\
&\qquad\qquad\qquad\qquad -\e_{(o_{H-m+1},\ldots,a_{H-1},o_H)}\trans \B_{H-m}(o_{H-m},a_{H-m})\cdots \B_1(o_1,a_1)\b_0\bigg| \times \pi^t(\tau_H)\\
\le & \sum_{t=1}^k \sum_{\tau_{H-m}   } \bigg\|  \B^t_{H-m}(o_{H-m},a_{H-m})\cdots \B^t_1(o_1,a_1)\b^t_0 \\
&\qquad\qquad\qquad\qquad - 
	\B_{H-m}(o_{H-m},a_{H-m})\cdots \B_1(o_1,a_1)\b_0\bigg\|_1 \times \pi^t(\tau_{H-m}).
\end{align*}
By Lemma \ref{lem:prod-triangle-over} (a general version of Lemma \ref{lem:prod-triangle-under}), we can  control the difference between the products of operators by the difference between each pair of operators.
\begin{align*}
\MoveEqLeft
\sum_{\tau_{H-m}  } \left\| \B^t_{H-m}(o_{  H-m},a_{  H-m})\cdots \B^t_1(o_1,a_1)\b^t_0 -  \B_{H-m}(o_{H-m},a_{H-m})\cdots \B_1(o_1,a_1)\b_0\right\|_1	\times \pi^t(\tau_{H-m})\\
&\le  \frac{A^{m-1}\sqrt{S}}{\alpha}  \left(\sum_{h=1}^{  H-m} \sum_{\tau_{h}   } 
\left\| \left(\B^t_h(o_h,a_h) - \B_h(o_h,a_h)\right)\b(\tau_{h-1})\right\|_1 \times \pi^t(\tau_{h}) + \left\|\b^t_0-\b_0\right\|_1\right),
\end{align*}
where $
\b(\tau_h) = 
\left(\prod_{h'=1}^h \B_{h'}(o_{h'},a_{h'}) \right)\b_0.
$

\subsection{Step $2$: derive constraints for the operator estimates from OMLE}

For simplicity of notation, we denote by $\mu^{k,h}$ the policy executed in the $h^{\rm th}$ inner loop  of  the $k^{\rm th}$ outer loop, i.e.,
$$
\mu^{k,h} = \pi^k_{1:h}\circ \left[\text{Uniform}(\fA)\right]_{h+1:H}
$$
By Proposition \ref{prop:mle-valid}, the definition of the confidence set and the choice of $\beta$, we have with probability at least $1-\delta$, 
\begin{equation*}
\text{ for all $k\in[K]$ } \quad 
\sum_{t=1}^{k-1} \sum_{h=0}^{H-m}  \left(\sum_{\tau_H}\left| \P^{\mu^{t,h}}_{\theta^k}(\tau_H) - \P^{\mu^{t,h}}_{\theta^\star}(\tau_H)\right|\right)^2 = \Ocal( \beta).
\end{equation*}

In what follows we assume  the above equation holds.
By the Cauchy-Schwarz inequality, 
\begin{equation*}
\sum_{t=1}^{k-1} \sum_{h=0}^{H-m}  \sum_{\tau_H}\left| \P^{\mu^{t,h}}_{\theta^k}(\tau_H) - \P^{\mu^{t,h}}_{\theta^\star}(\tau_H)\right| =\Ocal( \sqrt{\beta kH}).
\end{equation*}
Because marginalizing distributions cannot increase their distance,
and using the definition of $\mu^{t,h}$, the above relation implies that for any $(k,h)\in[K]\times[H-m]\times[H-m]$,
\begin{equation*}
\begin{cases}
&\frac{1}{A^{m-1}}\sum_{t=1}^{k-1}  \sum_{\tau_h,\u\in\U} \bigg| \P^{\pi^{t}}_{\theta^k}(\tau_{h}) \P_{\theta^k}(o_{h+1:h+m}=\u_o\mid \tau_h,a_{h+1:h+m-1}=\u_a)\\
&\qquad - \P^{\pi^{t}}_{\theta^\star}(\tau_{h}) \P_{\theta^\star}(o_{h+1:h+m}=\u_o\mid \tau_h,a_{h+1:h+m-1}=\u_a) \bigg| = \Ocal( \sqrt{\beta kH}),\\
&\frac{1}{A^{m}}\sum_{t=1}^{k-1}  \sum_{\tau_h,\u\in\U} \bigg| \P^{\pi^{t}}_{\theta^k}(\tau_{h-1}) \P_{\theta^k}(o_{h+1:h+m}=\u_o\mid \tau_h,a_{h+1:h+m-1}=\u_a)\\
&\qquad - \P^{\pi^{t}}_{\theta^\star}(\tau_{h-1}) \P_{\theta^\star}(o_{h+1:h+m}=\u_o\mid \tau_h,a_{h+1:h+m-1}=\u_a) \bigg| = \Ocal( \sqrt{\beta kH}),\\
&\frac{1}{A^{m-1}}\sum_{t=1}^{k-1}  \sum_{\u\in\U} \bigg|  \P_{\theta^k} (o_{1:m}=\u_o \mid\ a_{1:m-1}=\u_a)  \\
&\qquad\qquad-  \P_{\theta^\star} (o_{1:m}=\u_o \mid\ a_{1:m-1}=\u_a)\bigg|_1 = \Ocal( \sqrt{\beta kH}).
\end{cases}
\end{equation*}
By using the two relations in Equation \eqref{eq:prob-op-over}, we can replace the probability with the product of operators and obtain that for any $(k,h)\in[K]\times[H-m]$
\begin{equation}\label{eq:mle-1-over}
\begin{cases}
\sum_{t=1}^{k-1}  \sum_{\tau_h } \pi^t(\tau_h) \times \left \| \b^k(\tau_h) - \b(\tau_h)\right\|_1	= \Ocal( A^{m-1}\sqrt{\beta kH}), \\
\sum_{t=1}^{k-1}  \sum_{\tau_h } \pi^t(\tau_{h-1}) \times \left \| \b^k(\tau_h) - \b(\tau_h)\right\|_1	= \Ocal( A^{m}\sqrt{\beta kH}), \\
\left \| \b_0^k - \b_0 \right \|_1 = \Ocal( A^{m-1}\sqrt{\beta H/k}),
\end{cases}
\end{equation}
where $
\b(\tau_h) = 
\left(\prod_{h'=1}^h \B_{h'}(o_{h'},a_{h'}) \right)\b_0
$ and $
\b^k(\tau_h) = 
\left(\prod_{h'=1}^h \B^k_{h'}(o_{h'},a_{h'}) \right)\b_0^k.
$

 Now, we are ready to derive the  final guarantee for each individual operator estimate $\B^k_h(o,a)$. For all $(k,h)\in[K]\times[H-m]$,
\begin{equation}\label{eq:mle-2-over}
\begin{aligned}
	\MoveEqLeft\sum_{t=1}^{k-1}  \sum_{ \tau_{h}} \pi^t(\tau_{h-1}) \times 
\left\| \left(\B^k_h(o_h,a_h) - \B_h(o_h,a_h)\right)\b(\tau_{h-1})\right\|_1\\
& \le   \sum_{t=1}^{k-1}   \sum_{ \tau_h} \pi^t(\tau_{h-1}) \times 
	\left\| \B^k_h(o_h,a_h) \left(\b^k(\tau_{h-1})-\b(\tau_{h-1})\right)\right\|_1  \\
	&\quad + 
	 \sum_{t=1}^{k-1}  \sum_{ \tau_h} \pi^t(\tau_{h-1}) \times
	\left\| \B^k_h(o_h,a_h)\b^k(\tau_{h-1}) - \B_h(o_h,a_h)\b(\tau_{h-1})\right\|_1.
\end{aligned}
\end{equation}
For the first term in the RHS of Equation \eqref{eq:mle-2-over}: 
\begin{align*}
\MoveEqLeft	
\sum_{t=1}^{k-1}   \sum_{ \tau_h} \pi^t(\tau_{h-1}) \times 
	\left\| \B^k_h(o_h,a_h) \left(\b^k(\tau_{h-1})-\b(\tau_{h-1})\right)\right\|_1 
	\\
	& = A \sum_{t=1}^{k-1}   \sum_{ \tau_{h}} \pi^{t,h-1}(\tau_{h}) \times 
	\left\| \B^k_h(o_h,a_h) \left(\b^k(\tau_{h-1})-\b(\tau_{h-1})\right)\right\|_1 \\
		\text{by Lemma \ref{lem:OP3}} & \le  \frac{A^m\sqrt{S}}{\alpha} \times \sum_{t=1}^{k-1}   \sum_{ \tau_{h-1}} \pi^{t,h-1}(\tau_{h-1}) \times 
	\left\| \b^k(\tau_{h-1})-\b(\tau_{h-1})\right\|_1     \\
	&= \frac{A^m\sqrt{S}}{\alpha} \times \sum_{t=1}^{k-1}   \sum_{ \tau_{h-1}} \pi^{t}(\tau_{h-1}) \times 
	\left\| \b^k(\tau_{h-1})-\b(\tau_{h-1})\right\|_1 \\
	&= \Ocal\left( \frac{A^{2m-1}\sqrt{S}}{\alpha}\sqrt{\beta kH}\right),
\end{align*}
where the last equality uses the first  relation in Equation \eqref{eq:mle-1-over}.

For the second term in the RHS of Equation \eqref{eq:mle-2-over}: by the second  relation in Equation \eqref{eq:mle-1-over}. 
\begin{align*}
\sum_{t=1}^{k-1}  \sum_{ \tau_h} \pi^t(\tau_{h-1}) \times
\left\| \B^k_h(o_h,a_h)\b^k(\tau_{h-1}) - \B_h(o_h,a_h)\b(\tau_{h-1})\right\|_1
= \Ocal( A^{m}\sqrt{\beta kH}).
\end{align*}
Therefore, we conclude that 
 for all $(k,h)\in[K]\times[H-m]$,
$$
\sum_{t=1}^{k-1}  \sum_{ \tau_{h}} \pi^t(\tau_{h-1}) \times 
\left\| \left(\B^k_h(o_h,a_h) - \B_h(o_h,a_h)\right)\b(\tau_{h-1})\right\|_1= \Ocal\left( \frac{A^{2m-1}\sqrt{S}}{\alpha}\sqrt{\beta kH}\right).
$$

\subsection{Step $3$: bridge Step 1 and 2 via $\ell_1$-norm eluder dimension}

By Step $1$, to control the density estimation error, it suffices to upper bound the following quantity for all $k\in[K]$:
$$
\textbf{Target: }\sum_{t=1}^k \left(\sum_{h=1}^{  H-m} \sum_{\tau_{h}   } 
\left\| \left(\B^t_h(o_h,a_h) - \B_h(o_h,a_h)\right)\b(\tau_{h-1})\right\|_1 \times \pi^t(\tau_{h-1}) + \left\|\b^t_0-\b_0\right\|_1\right).
$$
By Step $2$,   we have with probability at least $1-\delta$: for all  $(k,h)\in[K]\times[H-m]$
\begin{equation*}
\textbf{Condition: }
\begin{cases}
	&\left \| \b_0^k - \b_0 \right \|_1 = \Ocal( A^{m-1}\sqrt{\beta H/k}), \\
	&\sum_{t=1}^{k-1}  \sum_{ \tau_h} 
	\left\| \left(\B^k_h(o_h,a_h) - \B_h(o_h,a_h)\right)\b(\tau_{h-1})\right\|_1 \times \pi^t(\tau_{h-1})=  \Ocal\left( \frac{A^{2m-1}\sqrt{S}}{\alpha}\sqrt{\beta kH}\right).
\end{cases}
\end{equation*}
Below, we will carry on the proof conditioning on the event that the above relations hold. 

The summation of the second term in the target can be upper bounded by using the first condition:
$$\cO\left(\sum_{t=1}^k A^{m-1}\sqrt{\beta H/t} \right) = \cO\left(A^{m-1}\sqrt{\beta k H} \right).$$
It remains to control the summation of the first term.

Let's consider an \emph{arbitrary fixed}  $h\in[  H-m]$. 
Denote by $\X_{l}$ the $l^{\rm th}$ row of matrix $\X$. The second  condition is equivalent to 
\begin{equation}\label{eq:step3-1-over}
\begin{aligned}
	\MoveEqLeft\sum_{t=1}^{k-1}  \sum_{ l=1 }^{O^{m}A^{m-1}} \sum_{o,a} \sum_{\tau_{h-1}}
	\left|  \left[\left(\B^k_h(o,a) - \B_h(o,a)\right)\M_h\right]_{l} \M_{h}^\dagger\b(\tau_{h-1})\right| \times \pi^t(\tau_{h-1}) \\
	&= \Ocal\left( \frac{A^{2m-1}\sqrt{S}}{\alpha}\sqrt{\beta kH}\right).
\end{aligned}
\end{equation}

To further simplify the notations, for $(t,l,o,a)\in [K]\times[O^m A^{m-1}]\times\fO\times\fA$, let $w_{t,l,o,a} :=\left[\left(\B^t_h(o,a) - \B_h(o,a)\right)\M_h\right]_{l}$  and 
denote
 the $n:=(OA)^{h-1}$ not necessarily distinct elements in the 
sequence
$\{ \M_h^\dagger\b(\tau_{h-1})\times \pi^t(\tau_{h}): \ \tau_h=(o_1,a_1,\dots,o_{h-1},a_{h-1},o,a)\in (\fO\times\fA)^{h-1}\times(o,a) \}$
by $x_{t,1},\dots,x_{t,n}$. 
Using the newly defined notations, Equation \eqref{eq:step3-1-over} is equivalent to
\begin{equation}\label{eq:step3-2-over}
	\sum_{t=1}^{k-1} \sum_{l=1}^{O^m A^{m-1}}\sum_{o,a}\sum_{i=1}^n
| w_{k,l,o,a}\trans x_{t,i}| = \Ocal\left( \frac{A^{2m-1}\sqrt{S}}{\alpha}\sqrt{\beta kH}\right).
\end{equation}

Now, we invoke  Proposition \ref{prop:linear-regret} with Equation \eqref{eq:step3-2-over} and the following normalization conditions: for all $t\in \N$
\begin{itemize}
	\item The vectors  $\{x_{t,i}\}_{i=1}^n$ satisfy $\sum_{i=1}^n \|x_{t,i}\|_1 \le 1$, which follows  from the same argument as in Appendix \ref{subsec:step3}),
	\item For the vectors  $\{w_{t,l,o,a}\}_{l,o,a}$, we have 
	\begin{align*}\sum_{l=1}^{O^m A^{m-1}}\sum_{o,a}
	\| w_{t,l,o,a}\|_1 
	=  \sum_{(o,a)\in\fO\times\fA} \sum_{s=1}^S \| (\B_h(o,a)-\B_h^t(o,a)) \M_{h} \e_s\|_1
	\le   \frac{2S^{1.5}A^{2m-1}}{\alpha},
	\end{align*}
	where the inequality uses $ \|\M_{h} \e_s\|_1=A^{m-1}$ and  Lemma \ref{lem:OP3} with $\pi=\text{Uniform}(\fA)$.
\end{itemize}
As a result, we obtain 
\begin{equation}
	\sum_{t=1}^{k} \sum_{l=1}^{O^m A^{m-1}} \sum_{i=1}^n
| w_{t,l}\trans x_{t,i}| = \tilde{\Ocal}\left( \frac{S^{1.5}H^2A^{2m-1}}{\alpha}\sqrt{\beta kH}\right), 
\ \text{ for all } k\in[K]. 
\end{equation}
Translating the above guarantee back using the operator language  gives that for all $k\in[K]$,
\begin{align*}
\MoveEqLeft \sum_{t=1}^k\left(\sum_{h=1}^{  H-1} \sum_{\tau_{h}   } 
\left\| \left(\B^t_h(o_h,a_h) - \B_h(o_h,a_h)\right)\b(\tau_{h-1})\right\|_1 \times \pi^t(\tau_{h-1}) + \left\|\b^t_0-\b_0\right\|_1\right)\\
&= \tilde{\Ocal}\left( \frac{S^{1.5}H^3A^{2m-1}}{\alpha}\sqrt{\beta kH}\right).
\end{align*}
Combining the above equality with Step $1$ gives:
$$   \sum_{t=1}^k \left( V^{\star}-V^{\pi^t} \right) \le \tilde{\Ocal}\left( \frac{S^2 A^{3m-2} }{\alpha^2}\sqrt{k(S^2A+SO)}\times\poly(H)\right) \qquad \text{ for all }k\in[K].$$

\subsection{Auxiliary lemmas}
\label{sec:auxlemmas}
We prove the following auxiliary lemma for general POMDPs. 
The definitions of the notations can be found in the beginning of Appendix \ref{app:proof-over}.

\begin{lemma}\label{lem:prod-triangle-over}
Suppose Assumption \ref{asp:over} holds.
For any $k\in\N$, $h\in[H-m]$ and policy $\pi$
\begin{align*}
&\sum_{\tau_{h}  } \left\| \B^k_h(o_{  h},a_{  h})\cdots \B^k_1(o_1,a_1)\b^k_0 -  \B_h(o_h,a_H)\cdots \B_1(o_1,a_1)\b_0\right\|_1	\times \pi(\tau_{h})\\
\le & \frac{A^{m-1}\sqrt{S}}{\alpha}  \left(\sum_{j=1}^{h} \sum_{\tau_{j}  } 
\left\| (\B^k_j(o_j,a_j) - \B_j(o_j,a_j))\b(\tau_{j-1})\right\|_1 \times \pi(\tau_{j}) + \|\b^k_0-\b_0\|_1\right),
\end{align*}
where $
\b(\tau_h) = 
\left(\prod_{h'=1}^h \B_{h'}(o_{h'},a_{h'}) \right)\b_0$.
\end{lemma}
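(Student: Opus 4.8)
The plan is to prove the bound by a hybrid (telescoping) decomposition of the difference of the two operator products, reduced to a single ``one-step'' contraction estimate that exploits the algebraic structure of the observable operators together with the weakly-revealing assumption on the \emph{estimated} model. For $j=0,\dots,h$ I would introduce the hybrid vectors $\g_j\defeq\B^k_h(o_h,a_h)\cdots\B^k_{j+1}(o_{j+1},a_{j+1})\B_j(o_j,a_j)\cdots\B_1(o_1,a_1)\b_0$, so that $\g_h=\b(\tau_h)$ while $\g_0=\B^k_h\cdots\B^k_1\b_0$. Since $\B_{j-1}\cdots\B_1\b_0=\b(\tau_{j-1})$, telescoping $\g_0-\g_h$ and peeling off the initial vector gives
\begin{align*}
\b^k(\tau_h)-\b(\tau_h)=\B^k_h\cdots\B^k_1(\b^k_0-\b_0)+\sum_{j=1}^h \B^k_h(o_h,a_h)\cdots\B^k_{j+1}(o_{j+1},a_{j+1})\big(\B^k_j(o_j,a_j)-\B_j(o_j,a_j)\big)\b(\tau_{j-1}).
\end{align*}
Taking $\norm{\cdot}_1$, weighting by $\pi(\tau_h)$ and summing over $\tau_h$, it suffices to bound each of the $h+1$ resulting terms.

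The crux is the following summation estimate: for any $v\in\R^{A^{m-1}O^m}$, any fixed prefix $\tau_j$, and any $0\le j<h$,
\begin{align*}
\sum_{o_{j+1},a_{j+1},\dots,o_h,a_h}\Big(\prod_{i=j+1}^h\pi(a_i\mid\tau_{i-1},o_i)\Big)\big\|\B^k_h(o_h,a_h)\cdots\B^k_{j+1}(o_{j+1},a_{j+1})\,v\big\|_1\le\frac{A^{m-1}\sqrt S}{\alpha}\norm{v}_1.
\end{align*}
To establish this I would first use that $\theta^k\in\cB^1$ forces $\sigma_S(\M^k_i)\ge\alpha$, so each $\M^k_i$ has full column rank and $(\M^k_i)^\dagger\M^k_i=\I_S$. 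Plugging in $\B^k_i=\M^k_{i+1}\T^k_{i,a_i}\diag(\O^k_i(o_i\mid\cdot))(\M^k_i)^\dagger$, the interior factors $(\M^k_i)^\dagger\M^k_i$ cancel, collapsing the product to $\M^k_{h+1}\big[\prod_{i=h}^{j+1}\T^k_{i,a_i}\diag(\O^k_i(o_i\mid\cdot))\big](\M^k_{j+1})^\dagger$.

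With the product in this clean form, I would bound $\norm{\M^k_{h+1}z}_1\le A^{m-1}\norm{z}_1$ using the $m$-step emission normalization $\sum_{\o}[\M^k_{h+1}]_{(\a,\o),s}=1$ for each $(\a,s)$ — the factor $A^{m-1}$ is exactly the number of action blocks summed over, and is where the overcomplete price appears. Writing $z=\big[\prod_{i=h}^{j+1}\T^k_{i,a_i}\diag(\O^k_i(o_i\mid\cdot))\big]v'$ with $v'=(\M^k_{j+1})^\dagger v$, nonnegativity of $\T^k$ and $\diag(\O^k)$ gives $|z|\le\big[\prod_i\T^k_{i,a_i}\diag(\O^k_i(o_i\mid\cdot))\big]|v'|$ entrywise, so $\norm{z}_1\le\mathbf 1^\top\big[\prod_i\cdots\big]|v'|$. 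I would then collapse the sum over $o_{j+1},a_{j+1},\dots,o_h,a_h$ one step at a time, from the outside in, using column-stochasticity $\mathbf 1^\top\T^k_{i,a}=\mathbf 1^\top$, the emission row-sums $\sum_{o_i}\O^k_i(o_i\mid\cdot)=\mathbf 1$, and policy normalization $\sum_{a_i}\pi(a_i\mid\tau_{i-1},o_i)=1$; this leaves $A^{m-1}\norm{v'}_1$. Finally $\norm{v'}_1\le\sqrt S\,\norm{v'}_2\le(\sqrt S/\alpha)\norm{v}_2\le(\sqrt S/\alpha)\norm{v}_1$, again by $\sigma_S(\M^k_{j+1})\ge\alpha$, proving the claim.

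Applying the claim to the initial term (with $v=\b^k_0-\b_0$, $j=0$) and, after factoring $\pi(\tau_h)=\pi(\tau_j)\prod_{i>j}\pi(a_i\mid\cdot)$, to each $j$-th term (with $v=(\B^k_j(o_j,a_j)-\B_j(o_j,a_j))\b(\tau_{j-1})$ for fixed $\tau_j$) reproduces exactly the two kinds of terms on the right-hand side; the degenerate case $j=h$ is the trivial identity $\norm{v}_1\le\frac{A^{m-1}\sqrt S}{\alpha}\norm{v}_1$. I expect the main obstacle to be the summation claim — specifically, carrying out the step-by-step collapse of the future sum while respecting the dependence of the policy on the observations being summed out, and verifying that the full-column-rank cancellation $(\M^k_i)^\dagger\M^k_i=\I_S$ (guaranteed by the confidence set $\cB^1$) is what keeps the per-step constants from compounding into a factor exponential in $H$.
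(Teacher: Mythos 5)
Your proposal is correct and follows essentially the same route as the paper's proof: the identical telescoping decomposition, reduced to the same one-step summation claim $\sum_{\tau_{h:j+1}}\pi(\tau_{h:j+1}\mid\tau_j)\|\B^k_{h:j+1}v\|_1\le \frac{A^{m-1}\sqrt{S}}{\alpha}\|v\|_1$, established via the projection onto the column space of $\M^k_{j+1}$ (your $(\M^k_i)^\dagger\M^k_i=\I_S$ cancellation is the paper's "row space of $\B^k_{h:j+1}$ lies in the column space of $\M^k_{j+1}$"), the stochastic collapse yielding the $A^{m-1}$ factor, and the bound $\|(\M^k_{j+1})^\dagger v\|_1\le\sqrt{S}\|v\|_2/\alpha$. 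The only cosmetic difference is that the paper packages your step-by-step collapse as a probabilistic identity for basis vectors, $\sum_{\tau_{h:j+1}}\|\B^k_{h:j+1}\M^k_{j+1}\e_i\|_1\,\pi(\tau_{h:j+1}\mid\tau_j)=A^{m-1}$.
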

\begin{proof}[Proof of Lemma \ref{lem:prod-triangle-over}]
To reduce clutter, we  abbreviate  
$\B^k_h(o_{ h},a_{ h})$ as $\B^k_h$    
and denote $\B^k_{h:j}:=\B^k_{h}\times\cdots\times \B^k_{j}$. 
By triangle inequality, 
\begin{align*}
&\sum_{\tau_{h}  } \left\| \B^k_{ h}\cdots \B^k_1\b^k_0 -  \B_{h}\cdots \B_1\b_0\right\|_1	\times \pi(\tau_{h})\\
\le &   \sum_{j=1}^{h} \sum_{\tau_{h}  }
\|\B_{h:j+1}^k (\B^k_j - \B_j)\b(\tau_{j-1})\|_1 \times \pi(\tau_{h}) + 
\sum_{\tau_{h}  } \|\B_{h:1}^k(\b^k_0-\b_0)\|_1\times \pi(\tau_{h}).
\end{align*}
	So it suffices prove the following Lemma:
	\begin{lemma}\label{lem:OP3}
	For any  index $ j < h \le H-m$, trajectory $\tau_j\in (\fO\times\fA)^j$, vector  $x\in\R^{O^m A^{m-1}}$, policy $\pi$, and operator $\tilde{\B}\in\{\B,\B^k\}$,  we have 
	$$
	\sum_{\tau_{h:j+1}\in(\fO\times\fA)^{h-j} }
		\|\tilde{\B}_{h:j+1} x \|_1 \times \pi(\tau_{h:j+1}\mid \tau_j) 
	\le \frac{A^{m-1}\sqrt{S}}{\alpha}  \|x\|_1.
	$$
	\end{lemma}
\begin{proof}[Proof of Lemma \ref{lem:OP3}]
We only prove the lemma for  $\tilde\B = \B^k$ since the other case follows exactly the same arguments.
By the definition of $\B^k_{j+1}$, we have that the row space of $\B_{h:j+1}^k$ belongs to the column space of $\M_{j+1}^k$, which implies 
$$
\B_{h:j+1}^k x = \B_{h:j+1}^k \M^k_{j+1} (\M^k_{j+1})^\dagger x.
$$
Moreover, for any standard basis $\e_i \in\R^{S}$
\begin{align*}
&	\sum_{\tau_{h:j+1}  } \| \B_{h:j+1}^k \M^k_j \e_i \|_1 \times  \pi(\tau_{h:j+1}\mid \tau_j)  \\
=  & \sum_{\a\in\fA^{m-1}}\sum_{\o\in\fO^m}\sum_{\tau_{h:j+1}  } \P^{\pi}_{\theta^k}(\tau_{h:j+1}\mid s_{j+1}=i,\ \tau_j)
\cdot \P_{\theta^k}(o_{h+1:h+m}=\o\mid \tau_h,a_{h+1:h+m-1}=\a)\\
=  & A^{m-1}.
\end{align*}
Combining all relations together, we obtain 
$$
	\sum_{\tau_{h:j+1} }
		\|\B_{h:j+1}^k x \|_1 \times \pi^k(\tau_{h:j+1}\mid \tau_j) 
	\le A^{m-1} \|  (\M^k_{j+1})^\dagger x\|_1 \le 
	\frac{A^{m-1}\sqrt{S}}{\alpha}   \|x\|_1,
	$$
where the final inequality follows from 
Assumption \ref{asp:over}.
\end{proof}
\renewcommand{\qedsymbol}{}
\end{proof}

\section{Proofs for Lower Bounds}
\label{app:lower}
In this section, we prove  the  two lower bounds presented in Section \ref{sec:main}.

\subsection{Proof of Theorem \ref{thm:lowerbound1}}

\begin{proof}[Proof of Theorem \ref{thm:lowerbound1}]
We construct the hard instance based on  combinatorial lock, which we define formally as follows.

\begin{enumerate}
	\item STATE: We have $2$ states, with $1$ ``good state" $s_g$ and $1$ ``bad state" $s_b$. 
	
	\item OBSERVATION: There are three observations $o_g, o_b$ and $o_{\rm dummy}$. The emission matrices in the first $H-1$ steps are 
	$$
\begin{pmatrix}
	\alpha & 0\\
	 0 & \alpha\\
	 1-\alpha & 1-\alpha\\
\end{pmatrix},
$$
while the emission matrix at step $H$ is 
$$
\begin{pmatrix}
	1 & 0\\
	 0 & 1\\
	 0 & 0\\
\end{pmatrix}.$$
	In other words, at each step $h\in[H-1]$,  with probability $\alpha$ we observe the current latent state,  and with probability $1-\alpha$ we receive a dummy observation.  That is, with probability $\alpha$ we find out what state we are currently at, and with probability $1 - \alpha$, we learn nothing. But at step $H$, we always directly observe the current latent state. 
	By standard linear algebra, one can verify $\min_h \sigma_S(\O_h) \ge\alpha$.

	\item ACTION AND TRANSITION: There are $A$ different actions and the initial state at step $1$ is fixed as $s_{g}$. We now define the transitions as follows. For every $h\in [H-1]$, we designate one of these actions to be ``good", the others``bad". If one is currently in the``good" state and takes the ``good" action, then the agent will transition to the ``good" state. Otherwise, one will always transition to the ``bad" state. For each $h  \in [H-1]$, the good action is chosen \emph{uniformly at random} from $\fA$. The current episode terminates immediately after $o_H$ is received.
\item REWARD: Viewing every observation at the first $H-1$ steps yields a reward of $0$.  At step $H$, viewing $o_g$ yields reward $1$ while viewing $o_b$ yields reward $0$. 
 As a result, the agent receives reward $1$ only if the state $s_{g}$ is reached at step $H$, i.e., if the good action is taken at every step in the POMDP. 
\end{enumerate}

\noindent \textbf{Showing large number of samples are necessary:} Suppose that we attempt to learn this POMDP with an algorithm $\mathcal{A}$, where we are allowed to interact with the POMDP for  $ K \le \lfloor\frac{1}{2\alpha H}\rfloor$ episodes. Now, consider the probability that in these $K$ episodes, \emph{both $s_g$ and $s_b$  only emit dummy observations}  in the first $H-1$ steps and the agent always gets reward $0$. We can write this as 
\begin{align*}
&\P(\text{only emit dummy observations in the first $H-1$ steps and get reward 0 in all episodes})\\
 = & \P(\text{get reward 0 in all episodes} \mid \text{only emit dummy observations in the first $H-1$ steps }) \\
 & \times  \P(\text{only emit dummy observations in the first $H-1$ steps }).
\end{align*} 

 The probability that both $s_g$ and $s_b$  only emit dummy observations  in the first $H-1$  steps of all $K$ episodes  is upper bounded by $(1-\alpha)^{1/\alpha}$ because $s_g$ and $s_b$ together can emit at most $2\times \lfloor\frac{1}{2\alpha H}\rfloor \times (H-1)$ observations in the first $H-1$  steps of $K$ episodes.

Now, conditioned on the event that both $s_g$ and $s_b$  only emit dummy observations  in the first $H-1$  steps of all $K$ episodes, in each episode, the best strategy we can use is to randomly guess the sequence of optimal actions. In particular, the probability that $\mathcal{A}$ fails to guess the optimal sequence correctly, given that we have $K$ guesses, is $\binom{A^{H-1}-1}{K}/\binom{A^{H-1}}{K} = \frac{A^{H-1}-K}{A^{H-1}}.$ Then, for $K \le  A^{H-1}/10$, this is 
at least $9/10$. Thus, with probability at least $0.9\times (1-\alpha)^{1/\alpha} \ge 1/6$, the agent learns nothing except that the action sequences it chose are incorrect, and because it only sampled at most one tenth of the total possible action sequences by assumption, the best policy it can output is just to randomly guess from the remaining action  sequences, which is worse than $\left(1/2\right)$-optimal, which completes the proof.
\end{proof}

\begin{lemma}
For $0<\alpha \leq \frac{1}{2},$ $(1-\alpha)^{1/\alpha} \ge \frac{1}{5}.$
\end{lemma}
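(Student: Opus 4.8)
The plan is to show that the function $f(\alpha) := (1-\alpha)^{1/\alpha}$ is monotonically decreasing on $(0,1)$, so that its infimum over $(0,1/2]$ is attained at the right endpoint $\alpha = 1/2$, where $f(1/2) = (1/2)^{2} = 1/4 \ge 1/5$. This reduces the inequality to a single endpoint evaluation once monotonicity is in hand.

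To establish monotonicity I would pass to logarithms and study $\phi(\alpha) := \log f(\alpha) = \alpha^{-1}\log(1-\alpha)$. Differentiating the quotient gives
\[
\phi'(\alpha) = \frac{1}{\alpha^2}\left( \frac{-\alpha}{1-\alpha} - \log(1-\alpha)\right).
\]
Writing $h(\alpha)$ for the bracketed numerator, the next step is to note $h(0)=0$ and to compute $h'(\alpha) = -\alpha/(1-\alpha)^2$, which is strictly negative on $(0,1)$. Hence $h(\alpha) < 0$ throughout $(0,1)$, so $\phi'(\alpha) < 0$, and therefore $\phi$—and thus $f$—is strictly decreasing. Consequently, for every $\alpha \in (0,1/2]$ we have $f(\alpha) \ge f(1/2) = (1-\tfrac12)^{2} = 1/4 > 1/5$, which is exactly the claim.

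The only point requiring care is the sign bookkeeping in the derivative of $\phi$: since $\log(1-\alpha)$ is negative on $(0,1)$, one must track signs carefully through the quotient rule. There is no substantive obstacle. An alternative route that sidesteps even this is to set $g(\alpha) := \log(1-\alpha) + \alpha\log 5$ and observe that $g$ is concave, since $g''(\alpha) = -(1-\alpha)^{-2} < 0$. A concave function lies above the chord joining its endpoint values, and here $g(0) = 0$ while $g(1/2) = \tfrac12\log 5 - \log 2 = \log(\sqrt5/2) > 0$ (because $\sqrt5 > 2$); concavity then forces $g \ge 0$ on all of $[0,1/2]$. Dividing the inequality $\log(1-\alpha) \ge -\alpha\log 5$ by $\alpha > 0$ and exponentiating yields $(1-\alpha)^{1/\alpha} \ge 1/5$ directly. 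Either argument is elementary.
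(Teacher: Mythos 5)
Your proof is correct, and both of your arguments are sound; they differ from the paper's route in an instructive way. The paper rewrites the claim as $\alpha + (5^{-\alpha}-1)\le 0$, asserts that the left-hand side is increasing in $\alpha$, and checks the single value $\alpha=\tfrac12$, where it equals $\tfrac{1}{\sqrt5}-\tfrac12<0$. That monotonicity assertion is actually false near the origin: the derivative of $\alpha+5^{-\alpha}-1$ at $\alpha=0$ is $1-\ln 5<0$, so the function first decreases before increasing. The correct justification is the one you supply in your second argument: the function is convex (equivalently, your $g(\alpha)=\log(1-\alpha)+\alpha\log 5$ is concave), vanishes at $\alpha=0$, and is negative (resp.\ your $g$ is positive) at $\alpha=\tfrac12$, so the chord argument settles the sign on all of $[0,\tfrac12]$. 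In that sense your concavity argument is a repaired version of the paper's proof. Your primary argument is genuinely different and somewhat stronger: by differentiating $\phi(\alpha)=\alpha^{-1}\log(1-\alpha)$ and showing the bracketed term $h(\alpha)=-\tfrac{\alpha}{1-\alpha}-\log(1-\alpha)$ satisfies $h(0)=0$ and $h'(\alpha)=-\alpha/(1-\alpha)^2<0$, you establish that $(1-\alpha)^{1/\alpha}$ is decreasing on all of $(0,1)$, which identifies the exact infimum $f(\tfrac12)=\tfrac14$ on $(0,\tfrac12]$ and shows the constant $\tfrac15$ is not tight. The cost is slightly more calculus; what it buys is a reusable monotonicity fact rather than a one-off endpoint check. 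Either argument is acceptable, and both are complete.
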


\begin{proof}
It suffices to prove $$1 - \alpha \ge \frac{1}{5^\alpha} \Leftrightarrow \alpha + ( \frac{1}{5^\alpha}-1) \leq 0.$$ We note that the LHS is an increasing function and that for $\alpha = \frac{1}{2},$ the LHS is $$\frac{1}{\sqrt{5}} - \frac{1}{2} \leq 0,$$ which suffices to show the desired inequality for all $\alpha \leq \frac{1}{2}.$
\end{proof}

\subsection{Lower bound for  the exponential dependency on $m$}

\begin{proof}[Proof of Theorem \ref{thm:lowerbound2}]
We construct the hard instance based on  combinatorial lock, which we define formally as follows.  We choose the episode length $H$ equal to $m$.

\begin{enumerate}
	\item STATE: We have $2$ states, with $1$ ``good state" $s_g$ and $1$ ``bad state" $s_b$. 
	
	\item OBSERVATION AND REWARD: We have three observations.

\begin{enumerate}
    \item Dummy Observation: at the first $m-1$ steps, we view this observation with probability 1 regardless of our current latent state, which gives reward $0$.
    \item Reward 1 Observation: at step $m$ and state $s_g$ we view this observation with probability 1, which gives reward $1$.
    \item Reward 0 Observation: at step $m$ and state $s_b$ we view this observation with probability 1, which gives reward $0$.
\end{enumerate}

\item ACTIONS AND TRANSITIONS: \textit{We define the actions and transitions identically as in the proof of Theorem \ref{thm:lowerbound1}}. 

\end{enumerate}

It is easy to verify   the $m$-step emission-action matrix at step $h=1$ has rank $2$ and minimum singular value no smaller than  $1$ (by using Proposition \ref{prop:property_multistep_weak_revealing} with the optimal action sequence). Therefore, it satisfies Assumption \ref{asp:over} with $\alpha\ge1$. (Since $H=m$, Assumption \ref{asp:over} only requires $\sigma_S(\M_1) \ge 1$.)
Moreover, we claim that the agent must take $\Omega(A^{m-1})$ samples in order to learn this POMDP. This is because in the first $m-1$ steps, the reward is always 0 and the observation is always dummy, so that no knowledge is learned. The only useful information is the reward feedback in the final step.
Therefore, this is equivalent to a multi-arm bandit problem with $A^{m-1}$ arms. 
\end{proof}

\section{Proofs for Weakly Revealing Conditions}
\label{app:weakly}

\subsection{Proof of Proposition \ref{prop:property_singlestep_weak_revealing}}

\begin{proof}
By the definition of minimum singular value, $\sigma_{S}(\O_h)\neq0$ implies that for any $\mu \neq \mu'$, $\O_h(\mu-\mu')\neq \mathbf{0}$.
On the other hand, if $\sigma_{S}(\O_h)= 0$, then there exists $z\neq \mathbf{0}$ such that $\O_h z = \mathbf{0}$. Let $z^{+} = \max\{z,0\}$ and $z^{-} = -\min\{z,0\}$, where the max and min are taken entry-wisely. 
By definition, $\O_h z^+ = \O_h z^-$. Moreover, $\|z^+\|_1 = \|\O_h z^+\|_1 = \|\O_h z^-\|_1= \|z^{-}\|_1$. As a result, we have 
$\O_h \frac{z^+}{\|z^+\|_1} = \O_h \frac{z^-}{\|z^+\|_1}$ where $\frac{z^+}{\|z^+\|_1}$ and $\frac{z^-}{\|z^-\|_1}$ are two disjoint state distributions that induce the same  distribution over observations.
\end{proof}

\subsection{Proof of Proposition \ref{prop:property_multistep_weak_revealing}}

\begin{proof}
    We prove Proposition \ref{prop:property_multistep_weak_revealing} by showing that for any fixed $h$, $\sigma_S(\M_h) \ge  \max_{a\in \fA^{m-1}} \sigma_S(\M_{h,\a})$. By the Courant-Fischer-Weyl min-max principle,
    for matrix $U\in \R^{n\times S}$ with $n\ge S$,
    \[
    \sigma_S^2(U) = \min_{y\in \R^S:\|y\|_2 = 1} \| U y \|_2^2 
    =
     \min_{y\in \R^S:\|y\|_2 = 1} \sum_{i=1}^n ( U_{i:} y )^2\,,
    \]
    where $U_{i:}$ stands for the $i$th row of $U$.
    Fix $h\in [H]$ and by abusing notation let $\M = \M_h$.
    Then,
    \begin{align*}
    \sigma_S^2(\M)
    &= \min_{y\in \R^S:\|y\|_2 = 1} \sum_{\a} \sum_{\o} ( \M_{(\a,\o):} y )^2
    \ge
    \sum_{\a} \min_{y\in \R^S:\|y\|_2 = 1}  \sum_{\o} ( \M_{(\a,\o):} y )^2\\
    & \ge
    \max_{\a} \sigma_S^2(\M_{h,\a})\,.
    \end{align*}
    \end{proof}

\subsection{Relation between the $\gamma$-observability \citep{golowich2022planning} and the  weakly revealing conditions}

\begin{lemma}\label{lem:basic}
Suppose $\phi,\psi \in\R^d$ satisfy 
$\sum_i \phi_i = 0$ and $(\max_i \psi_i)\times(\min_i \psi_i) \ge 0$, then  
$$\sum_i |\phi_i+\psi_i| \ge \frac{1}{2}\max\{\|\phi\|_1,\|\psi\|_1\}.$$
\end{lemma}
\begin{proof}
WLOG, assume $\min_i \psi_i\ge 0$.
By triangle inequality,
    $\sum_{i} | \phi_i + \psi_i| \ge |\sum_{i} \phi_i + \psi_i| = \|\psi\|_1$. 
Moreover, $\sum_i |\phi_i+\psi_i| \ge \sum_{i:\phi_i\ge 0} (\phi_i+\psi_i )\ge \frac12 \|\phi\|_1$.
\end{proof}
\begin{lemma}\label{lem:equiv}
Suppose $\gamma$ and $\alpha$ are the largest real numbers that satisfy
\begin{enumerate}
    \item For any $v_1,v_2\in\Delta_S$, $\|\O_h(v_1-v_2)\|_1 \ge \gamma \| v_1 - v_2\|_1$,
    \item $\sigma_{S}(\O_h) \ge  \alpha$.
\end{enumerate}
Then $\frac{\alpha}{\sqrt{S}} \le \gamma \le 4\sqrt{O}\alpha$.
\end{lemma}
\begin{proof}
Suppose $\sigma_{S}(\O_h) \ge  \alpha$. We have
$$
\|\O_h(v_1-v_2)\|_1 \ge 
\|\O_h(v_1-v_2)\|_2 \ge
\alpha  \|v_1-v_2\|_2
\ge \frac{\alpha}{\sqrt{S}}  \|v_1-v_2\|_1,
$$
which implies $\gamma \ge \frac{\alpha}{\sqrt{S}}$.

Suppose for any $v_1,v_2\in\Delta_S$, $\|\O_h(v_1-v_2)\|_1 \ge \gamma \| v_1 - v_2\|_1$. Consider an arbitrary $z\in \R^O$ and decompose $z$ as $z=z^+ - z^- + \bar{z}$, where $\bar{z} = \frac{\sum_{i}z_i}{O}\times \mathbf{1}$,  $z^+  = \max\{z-\bar{z},0\}$ and $z^-= \max\{\bar{z}-z,0\}$. 
Invoking Lemma \ref{lem:basic} with 
$\phi = \O_h(z^+ - z^-)$ and $\psi = \O_h\bar{z}$, we obtain
$$
\| \O_h z \|_2 \ge \frac{1}{2\sqrt{O}} \max\{ \| \O_h(z^+ - z^-)\|_1,  \|\O_h\bar{z}\|_1\}.
$$
Note that $\frac{z^+}{\|z^+\|_1}, \frac{z^0}{\|z^-\|_1}\in\Delta_S$  and $\|z^+\|_1 = \|z^-\|_1$, so we have $\|\O_h(z^+ - z^-)\|_1 \ge {\gamma}\|z^+ - z^-\|_2$. Besides, by the definition of $\bar{z}$ and $\O_h$, $\|\O_h\bar{z}\|_1= \|\bar{z}\|_1$, which implies 
$\|\O_h\bar{z}\|_1\ge \|\bar{z}\|_2$. 
As a result, we conclude that 
$$
\| \O_h z \|_2  \ge \frac{\gamma}{2\sqrt{O}}\max\{ \|z^+ - z^-\|_2,  \|\bar{z}\|_2\}  \ge  \frac{\gamma}{4\sqrt{O}} \| z \|_2.
$$
\end{proof}

\end{document}